\pdfoutput=1
\documentclass[10pt]{article}

\usepackage{"gene"}

\let\hat\widehat

\definecolor{cm}{RGB}{0,0,200}
\definecolor{purple}{RGB}{200,0,200}


\newcommand{\stateS}{\ensuremath{\calS}}
\newcommand{\actionS}{\ensuremath{\calA}}
\newcommand{\numS}{\ensuremath{S}}
\newcommand{\numA}{\ensuremath{A}}
\newcommand{\state}{\ensuremath{s}}
\newcommand{\action}{\ensuremath{a}}
\newcommand{\actioni}[1]{\ensuremath{a}^{(#1)}}
\newcommand{\reward}{\ensuremath{r}}

\newcommand{\rewardDist}{\ensuremath{R}}
\newcommand{\empReward}{\ensuremath{\widehat \reward}}
\newcommand{\feature}{\phi}
\newcommand{\optparam}{{\theta^\star}}

\newcommand{\empCov}{\Sigma_\calD}

\newcommand{\dataset}{\ensuremath{\calD}}
\newcommand{\behDist}{\ensuremath{\mu}}
\newcommand{\testDist}{\ensuremath{\rho}}

\newcommand{\policy}{\ensuremath{\pi}}
\newcommand{\optPolicy}{\ensuremath{\policy^\star}}
\newcommand{\estPolicy}{\ensuremath{\widehat \policy}}
\newcommand{\policyclass}{\ensuremath{\Pi}}
\newcommand{\val}{\ensuremath{V}}
\newcommand{\optVal}{\ensuremath{\val^\star}}
\newcommand{\estVal}{\ensuremath{\widehat \val}}

\newcommand{\datamatrix}{\Phi}
\newcommand{\errorvec}{\eta}
\newcommand{\rewardvec}{r}
\newcommand{\thetaOLS}{\widehat{\theta}_\mathsf{ols}}

\newcommand{\plugin}{\ensuremath{\widehat \policy_{\mathsf{plug}}}}

\newcommand{\n}{\ensuremath{n}}

\newcommand{\confLevel}{\ensuremath{\delta}}


\newcommand{\lambdamin}{\lambda_\mathrm{min}}

\newcommand{\thetalp}{\Theta_{p}}

\newcommand{\ber}{\mathsf{Ber}}
\newcommand{\behDistmin}{\behDist_{\min}}

\newcommand{\CBinstance}{\ensuremath{ \calQ }}

\newcommand{\const}{\ensuremath{c}}

\newcommand{\BCP}{BCP}
\newcommand{\LCB}{LCB}
\newcommand{\PEVI}{PEVI}
\newcommand{\name}{PUNC}

\newcommand{\complexity}{{\mathfrak{C}}}

\newtoggle{neurips}
\settoggle{neurips}{false}
\newcommand{\version}[2]{\iftoggle{neurips}{#1}{#2}}
\interfootnotelinepenalty=10000

\begin{document}

\title{Pessimism for Offline Linear Contextual Bandits \\using $\ell_p$ Confidence Sets}

\author{Gene Li\thanks{Toyota Technological Institute at Chicago, Chicago, IL 60637, USA; Email: \texttt{gene@ttic.edu}} 
\and Cong Ma\thanks{Department of Statistics, University of Chicago, Chicago, IL 60637, USA; Email: \texttt{congm@uchicago.edu}}
\and Nathan Srebro\thanks{Toyota Technological Institute at Chicago, Chicago, IL 60637, USA; Email: \texttt{nati@ttic.edu}}}
\date{\today}

\maketitle

\begin{abstract}
	We present a family $\{\estPolicy_p\}_{p\ge 1}$ of pessimistic learning rules
	for offline learning of linear contextual bandits, relying on
	confidence sets with respect to different $\ell_p$ norms, where $\estPolicy_2$ corresponds to
	Bellman-consistent pessimism (BCP), while $\estPolicy_\infty$ is a
	novel generalization of lower confidence bound (LCB) to the linear setting.  We show that the novel
	$\estPolicy_\infty$ learning rule is, in a sense, adaptively optimal, as it
	achieves the minimax performance (up to log factors) against all
	$\ell_q$-constrained problems, and as such it strictly dominates
	all other predictors in the family, including $\estPolicy_2$.
\end{abstract}

\section{Introduction}
Offline (or batch) reinforcement learning
(RL)~\citep{lange2012batch,levine2020offline} seeks to learn a good
policy from fixed historical data without active interactions with the
environment. This offline paradigm has been widely adopted in
applications including dialog generation~\citep{jaques2019way},
autonomous driving~\citep{yurtsever2020survey}, and robotic
control~\citep{kumar2020conservative}, etc.

When the offline dataset has insufficient coverage over the state and action spaces, planning via nominal estimates of either the value function or the model may perform poorly---a phenomenon that is observed even in a simple two-armed bandit~\citep{rashidinejad2021bridging}. This challenge motivates the adoption of the \emph{pessimism principle} for solving offline RL. In essence, the pessimism principle discounts policies that are less represented/supported in the offline dataset, and hence is pessimistic/conservative in outputting a policy.  Built on this common principle, a diverse collection of pessimistic learning rules have been proposed in theory and practice~\citep{jin2021pessimism,rashidinejad2021bridging,xiao2021optimality,xie2021bellman,zanette2021provable,zhan2022offline,fujimoto2019off,kumar2019stabilizing,wu2019behavior, kidambi2020morel,liu2020provably,yu2020mopo}. 
This leads us to the following natural question:  
\begin{center}
  \version{\vspace{-0.5em}}{} 
  {\em Which pessimistic learning rule should one use for solving offline RL problems?}
  \version{\vspace{-0.5em}}{}
\end{center}
\noindent In this paper, we address the question in the setting of
offline linear contextual bandits, in which the expected reward---as a
function of the state-action pair---is linear with respect to a known
feature mapping that maps state-action pairs to finite-dimensional
vectors. Our goal is to make sense of previously proposed learning rules
for offline RL, and understand which learning rule is ``optimal'' in a
statistical sense. We present a general family $\{\estPolicy_p\}_{p\ge 1}$ of
pessimistic learning rules based on the construction of $\ell_p$
confidence sets for the unknown linear parameter. We advocate for $\estPolicy_\infty$, a new $\ell_\infty$ learning rule for offline linear
contextual bandits, which we call Pessimism via Uniform Norm Confidence (for short, \name{}).\footnote{Throughout the paper, we use $\estPolicy_\infty$ and \name{} interchangeably.} \name{} directly extends the lower confidence bound
algorithm proposed in the tabular contextual bandit setting
\cite{rashidinejad2021bridging}.  We show that \name{} (1) achieves a suboptimality guarantee that
dominates other $\hat{\pi}_p$ (up to log factors, which we ignore
throughout the introduction), and (2) has an \emph{adaptive minimax
  optimality} property that is unique among the family $\{\hat{\pi}_p\}_{p\ge 1}$.
In particular, we argue that \name{} dominates prior
learning rules which are based on $\ell_2$ pessimism (e.g.,
\citep{xie2021bellman, zanette2021provable,jin2021pessimism}) and
which cannot attain adaptive minimax optimality.

\version{\paragraph{Roadmap.}}{\subsection*{Roadmap}}
We first introduce a broad class of pessimistic learning rules in~\Cref{sec:pess-est}. 
The construction of these pessimistic learning rules relies on the observation that any 
\emph{confidence set} of the linear reward function automatically induces a pessimistic 
value estimate, and hence a pessimistic learning rule. 
As concrete examples, for each $p \geq 1$, one can design $\estPolicy_p$, an $\ell_p$ learning rule, by 
constructing such a confidence set using the $\ell_p$ distance metric. We show in \Cref{sec:ub-discussion} 
that $\estPolicy_2$ recovers the Bellman-consistent pessimism (\BCP{}) learning rule~\citep{xie2021bellman}, 
proposed for offline RL with general function approximation; meanwhile, $\estPolicy_\infty$ generalizes the lower confidence bound (\LCB{}) learning rule, proposed for offline tabular RL, to the linear setting.

Once we have cast pessimistic estimation in this framework, we can study the performance 
guarantees of the family $\{\estPolicy_p\}_{p\ge 1}$. Employing a notion of \emph{pessimism-validity} (\Cref{def:pess-valid}) allows us to easily to derive upper bounds on suboptimality for each $\estPolicy_p$ in terms of the dual $\ell_q$ norm (where $1/p + 1/q = 1$); see~\Cref{thm:lp-estimator-bound}. For $p=2$, the upper bound improves over that provided in the paper~\citep{xie2021bellman} for linear contextual bandits. For $p=\infty$, the upper bound matches that proved in the paper~\citep{rashidinejad2021bridging} for tabular contextual bandits. A key observation regarding the upper bound is that the suboptimality guarantee of $\estPolicy_\infty$ \emph{dominates} all other $\estPolicy_p$ in the general linear setting. This partially showcases the advantage of using \name{}.

To further investigate the advantage of \name{} over other $\estPolicy_p$ (for $p \in [1,\infty)$), we consider the fundamental statistical limits of the offline linear contextual bandit problem in \Cref{sec:lower-bounds}. Inspired by both the upper bounds we prove and prior work~\citep{zanette2021provable,rashidinejad2021bridging,yin2021towards}, we consider a sequence of norm-constrained classes of contextual bandit instances indexed by the $\ell_q$ norm ($q \geq 1$).  We prove that each $\estPolicy_p$ is minimax rate-optimal within the dual $\ell_q$-norm constrained contextual bandit class; see \Cref{thm:lp-lower-bound}. 
However, \Cref{thm:lp-lower-bound} delivers an even stronger message: \name{} is \emph{adaptively minimax optimal} in the sense that it simultaneously achieves optimality for all $\ell_q$-norm constrained classes, as illustrated by \Cref{fig:adaptive-opt}.  We also demonstrate that such adaptivity is unique to \name{} as other values of $p$ (e.g., $p=2$) cannot achieve simultaneous optimality. Instead, $\estPolicy_p$ is only adaptively optimal for $\ell_q$-norm constrained classes where $q \ge p/(p-1)$; see~\Cref{thm:separation-l2-linfty-new}.

\version{}{\medskip}
\noindent In summary, our main contributions are the following:
\version{\vspace{-\topsep}}{}
\version{\begin{itemize}[leftmargin=0.5cm]}{\begin{itemize}}
  \item We introduce a novel learning rule, \name{}, for solving the
    offline linear contextual bandit problem, whose performance
    guarantee dominates those of all other $\estPolicy_p$, for finite $p$ (\Cref{thm:lp-estimator-bound}).
  \item We show minimax lower bounds over norm-constrained classes of
    contextual bandit instances, which show that each $\estPolicy_p$ is optimal over the dual $\ell_q$ class, up to log
    factors in the dimension (\Cref{thm:lp-lower-bound}).
  \item We demonstrate that \name{} satisfies the
    adaptive minimax optimality property
    (\Cref{sec:adaptive-minimax-opt}), and show that this property is
    unique to \name{} by proving a separation
    result against any other $\estPolicy_p$
    (\Cref{thm:separation-l2-linfty-new}, and see also \Cref{fig:adaptive-opt}).
\end{itemize}

\section{Problem setup}\label{sec:problem-setting}
We begin by introducing the problem of offline learning in linear contextual bandits. 
Let $\stateS$ and $\actionS$ be the state space and the action space, respectively. 
Let $\feature: \stateS \times \actionS \to \R^d$ be a known feature mapping. 
In the offline setting, we observe a dataset $\dataset \coloneqq \inbraces{ (\state_i, \action_i, \reward_i) }_{ i = 1}^n$, where the covariates $\{(\state_i, \action_i)\}_{i=1}^n$ are fixed and the rewards are drawn independently according to $\reward_i \sim R(\state_i, \action_i)$, where $R(\state, \action)$ is the reward distribution associated with the pair $(\state, \action)$. We assume that $R(\state, \action)$ is 1-subgaussian for every $(\state, \action)$ with mean reward $\reward(\state,\action) \coloneqq \E[R(\state, \action)]$. Furthermore, we assume that the expected reward is linear in the sense that for every $(\state, \action)$ pair, $\reward(\state,\action) = \feature(\state, \action)^\top \theta^\star$ for some unknown parameter vector $\theta^\star \in \R^d$.

Let $\policy: \stateS \to \actionS$ be a deterministic policy. 
Fixing a (known) {test distribution} $\testDist\in\Delta(\stateS)$, we define the 
{value} of the policy $\policy$ (with respect to $\testDist$) as 
\begin{align}\label{eq:value}
  \val ( \policy ) \coloneqq \E_{\state \sim \testDist } \insquare{ 
  \reward ( \state, \policy( \state ) ) } = \E_{\state \sim \testDist } \insquare{ 
    \feature( \state, \policy(\state) )^\top  \optparam }.
\end{align}
Correspondingly, we define the optimal policy $\optPolicy$ as 
\begin{align}\label{eq:opt-policy}
  \optPolicy( \state ) \coloneqq \arg\max_{ \action \in \actionS } \; \reward (\state, \action) 
  = \arg\max_{ \action \in \actionS } \; \feature(\state, \action)^\top \optparam, \quad \text{for each }s \in \stateS.
\end{align}
The goal of offline learning in linear contextual bandits is to design a learning rule which takes as input a dataset $\dataset$ and outputs a policy
$\estPolicy$ that maximizes the value~\eqref{eq:value}; in this paper we abuse notation and also denote the learning rule as $\estPolicy$. We measure the {suboptimality} of $\estPolicy$ 
using $\val( \optPolicy ) - \val( \estPolicy )$.



\version{}{
\paragraph{Notation.}
We define the data matrix $\datamatrix \in \mathbb{R}^{n \times d}$, where the $i$-th row of $\datamatrix$ is given by $\feature(\state_i, \action_i)^{\top}$. 
We also define the observed reward vector $\rewardvec \coloneqq (\reward_{1}, \ldots, \reward_{n})^{\top} \in \mathbb{R}^{n}$.

For a positive semidefinite matrix $A$, we denote $A^{1/2}$ to be the principal square root, \ie $A^{1/2}$ is the the unique positive semidefinite matrix $B$ such that $A = B^2$. For a norm $\norm{\cdot}$ over $\R^d$, we use $\norm{\cdot}_{*}$ to denote its dual norm. We say a random variable $X$ is $\sigma^2$-subgaussian if it satisfies $\P[\abs{X - \E X} \ge t] \le 2e^{-t^2/2\sigma^2}$ for all $t \geq 0$. For a space $\calX$, we use $\Delta(\calX)$ to denote the set of probability distributions over $\calX$. We denote $\mathbb{S}^{d-1}$ to be the unit sphere and $\calB_2^d$ to denote the unit ball under the $\ell_2$ norm. We use the convention that $\tfrac{1}{\infty}= 0$ and $\tfrac{1}{0} = \infty$.}

\section{Offline learning with pessimism}\label{sec:pess-est}

The pessimism principle has recently gained much attention in offline RL theory and practice.\version{}{\footnote{An alternative to pessimistic learning rules is the usual plug-in rule. For completeness, in \Cref{sec:plugin}, we include an analysis of the plug-in rule as well as an example which highlights when pessimism is beneficial.}} At a high level, pessimistic learning rules first construct a data-dependent estimate $\estVal(\policy)$ of the true value function $\val(\policy)$ that is pessimistic, \ie $\estVal(\policy) \le \val(\policy)$ for all $\policy$. Then, the learning rule proceeds to select the policy that maximizes this pessimistic value function, \ie
\begin{align}\label{eq:selection-rule}
  \estPolicy \coloneqq \argmax_{\policy \in \policyclass} \ \estVal(\policy).
\end{align}
Here, $\policyclass \subseteq \actionS^\stateS$ is some policy class that contains the optimal policy $\optPolicy$. To see why this choice of policy makes sense, let us decompose the suboptimality of $\estPolicy$ as follows: 
\begin{equation}\label{eq:subopt-decomposition}
\val(\optPolicy) - \val(\estPolicy) = \big( \val(\optPolicy) - \estVal(\optPolicy) \big) + \big( \estVal(\optPolicy) - \estVal(\estPolicy) \big) + \big(\estVal(\estPolicy) - \val(\estPolicy) \big).
\end{equation}
The middle term is non-positive by definition of $\estPolicy$. 
Due to the pessimistic property of $\estVal$, we also have $\estVal(\estPolicy) - \val(\estPolicy) \leq 0$, which yields the suboptimality upper bound
\begin{align}\label{eq:general-upper-bound}
\val(\optPolicy) - \val(\estPolicy) \leq \val(\optPolicy) - \estVal(\optPolicy) .
\end{align}
Consequently, under the selection rule~\eqref{eq:selection-rule}, a tight pessimistic value function $\estVal$ induces a policy 
with small suboptimality.

\subsection{Achieving pessimism by building confidence sets}
As a general strategy, one can construct the pessimistic value estimator $\estVal$ by building confidence sets for the linear parameter~$\theta^\star$. 
Let $\Theta \subseteq \R^d$ be a \emph{confidence set} that contains the true parameter~$\theta^\star$. We can define the corresponding pessimistic value estimator
\begin{align}\label{eq:pess-value}
  \estVal(\policy) \coloneqq \inf_{\theta \in \Theta } \ \E_{\state \sim \testDist} [ \feature(\state, \policy(\state) )^{\top} \theta ],
\end{align}
and its associated policy learning rule \version{$\estPolicy_\Theta \coloneqq \argmax_{ \policy \in \policyclass } \; \estVal(\policy)$.}{\begin{align}\label{eq:global-pess}
\estPolicy_\Theta \coloneqq \argmax_{ \policy \in \policyclass } \; \estVal(\policy).
\end{align}}
Here for simplicity we take $\Pi$ to be the class of all deterministic policies. 

In essence, the confidence set $\Theta$ captures the amount of uncertainty we have about the ground truth $\theta^\star$. Once $\Theta$ is determined, we construct the value estimate $\estVal(\policy)$ via the worst-case value of $\policy$ among all plausible linear parameters $\theta$ in the confidence set $\Theta$. It is immediate to see that under the assumption $\theta^\star \in \Theta$, one has $\estVal(\policy) \leq \val(\policy)$ for all $\pi$. 
In other words, the value estimator $\estVal$ is indeed pessimistic.  
As a result, we can apply the general upper bound~\eqref{eq:general-upper-bound} to obtain 
\begin{align}\label{eq:linear-upper-bound}
\val(\optPolicy) - \val(\estPolicy_\Theta) \leq \val(\optPolicy) - \estVal(\optPolicy) = \sup_{\theta \in \Theta } \; \E_{\state \sim \testDist} [ \feature(\state, \optPolicy(\state) )^{\top} ( \theta^\star - \theta)  ],
\end{align}
where the identity follows from the definition~\eqref{eq:pess-value}.
Clearly, the ``smaller'' the confidence set, the smaller the bound on suboptimality. An extreme case is when $\Theta$ contains the singleton $\theta^\star$, which yields zero suboptimality. However, since only noisy rewards are observed, we cannot hope to construct such a good confidence set. Given the uncertainty about the rewards, our confidence set has to be ``large'' enough in order to guarantee that $\theta^\star \in \Theta$ with decent probability.


Below we present a general definition called pessimism-validity that involves both the size of the confidence set and also its confidence level, both of which allow us to bound the suboptimality of the pessimistic learning rule $\estPolicy_\Theta$. Let $\norm{\cdot}$ be any norm over $\R^d$ that will be used to measure the size of the confidence set $\Theta$. Let $\confLevel \in (0,1)$ be the failure probability. 
We have the following definition.


\begin{definition}\label{def:pess-valid}
We say the confidence set $\Theta$ is $(\beta, \confLevel)$ pessimism-valid under the norm $\norm{\cdot}$ if with probability at least $1 - \confLevel$, the following two properties hold: (1) $\theta^\star \in \Theta$; (2) $\sup_{\theta \in \Theta} \norm{\theta^\star - \theta} \le \beta$.
\end{definition}

\noindent A $(\beta, \confLevel)$ pessimism-valid confidence set $\Theta$ automatically induces a pessimistic learning rule $\estPolicy_\Theta$ with bounded suboptimality, as shown in the following proposition. 

\begin{proposition}\label{prop:general-global-pessimism}
Suppose that $\Theta$ is $(\beta, \confLevel)$ pessimism-valid under the norm $\norm{\cdot}$. Then with probability at least $1-\confLevel$, the pessimistic learning rule $\hat{\pi}_\Theta$ obeys
\begin{align*}
  \val(\optPolicy) - \val(\hat{\pi}_\Theta) \le \beta \cdot \Big\lVert \Ex{s\sim \rho}{\phi(s, \optPolicy(s))} \Big\rVert_{*}, 
\end{align*}
where $\norm{\cdot}_{*}$ is the dual norm of $\norm{\cdot}$.
\end{proposition}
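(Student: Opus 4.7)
The plan is to chain together three ingredients that are already in place: the generic suboptimality decomposition proved right before the proposition, the dual-norm inequality, and the quantitative control $\sup_{\theta\in\Theta}\|\theta^\star-\theta\|\le\beta$ guaranteed by pessimism-validity. Throughout the argument I would condition on the $(1-\delta)$-probability event from \Cref{def:pess-valid} on which (i) $\theta^\star\in\Theta$ and (ii) the diameter-type bound holds; this is the only place randomness enters, and everything downstream is deterministic.

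First I would invoke inequality \eqref{eq:linear-upper-bound}, which already uses $\theta^\star\in\Theta$ to certify that $\estVal$ is pessimistic and yields
\[
\val(\optPolicy)-\val(\estPolicy_\Theta)\;\le\;\sup_{\theta\in\Theta}\;\E_{s\sim\testDist}\bigl[\feature(s,\optPolicy(s))^\top(\theta^\star-\theta)\bigr].
\]
Since $\theta^\star-\theta$ does not depend on $s$, I would pull the expectation inside the linear functional to rewrite the right-hand side as $\sup_{\theta\in\Theta}\,v^\top(\theta^\star-\theta)$, where I set $v\coloneqq\E_{s\sim\testDist}[\feature(s,\optPolicy(s))]$. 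At this point the proof reduces to bounding a supremum of a linear functional over a bounded set in $(\R^d,\|\cdot\|)$.

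Next I would apply the definition of the dual norm (equivalently, the generalized Hölder inequality) to obtain $v^\top(\theta^\star-\theta)\le\|v\|_{*}\cdot\|\theta^\star-\theta\|$ for every $\theta\in\Theta$. Taking the supremum over $\theta\in\Theta$ and using property (2) of pessimism-validity yields
\[
\sup_{\theta\in\Theta}v^\top(\theta^\star-\theta)\;\le\;\|v\|_{*}\cdot\sup_{\theta\in\Theta}\|\theta^\star-\theta\|\;\le\;\beta\cdot\|v\|_{*},
\]
which is exactly the claimed bound after substituting back the definition of $v$.

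There is no genuine obstacle here; the proposition is essentially a repackaging of \eqref{eq:linear-upper-bound} together with duality, and the main thing to be careful about is the bookkeeping of the high-probability event: both conclusions of \Cref{def:pess-valid} must hold simultaneously (which they do, by definition, with probability at least $1-\confLevel$), since the first is used to validate pessimism in the decomposition and the second is used to convert the diameter of $\Theta$ into the factor $\beta$.
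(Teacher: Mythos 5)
Your proof is correct and matches the paper's argument exactly: the paper likewise derives the bound by combining inequality \eqref{eq:linear-upper-bound}, the two conditions of pessimism-validity (both holding on the same $(1-\confLevel)$-probability event), and the definition of the dual norm. No gaps to report.
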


\noindent \Cref{prop:general-global-pessimism} simply follows from the upper bound~\eqref{eq:linear-upper-bound}, the definition of pessimism-validity, and the definition of the dual norm.

\subsection{Building $\ell_p$ confidence sets}\label{subsec:confsets}

In this section, we instantiate the general strategy introduced above for achieving pessimism by constructing an $\ell_p$ confidence set around the true parameter $\theta^\star$ for some $p \geq 1$. Such constructions using $\ell_p$ norms include the aforementioned \BCP{} and \LCB{} learning rules (as well as other recently proposed learning rules) as special cases. 
As we will see, setting up the notion of pessimism-validity allows us to easily bound the suboptimality of the induced policy learning rules. 

\version{Let us denote the data matrix $\datamatrix \in \mathbb{R}^{n \times d}$, where the $i$-th row of $\datamatrix$ is given by $\feature(\state_i, \action_i)^{\top}$. We also define the observed reward vector $\rewardvec \coloneqq (\reward_{1}, \ldots, \reward_{n})^{\top} \in \mathbb{R}^{n}$.}{} Let $\thetaOLS \coloneqq (\datamatrix^\top \datamatrix)^{-1}\datamatrix^\top \rewardvec$ be the ordinary least-squares estimate for the true parameter $\theta^\star$. Throughout the paper, we assume that the sample ``covariance'' matrix $\empCov \coloneqq \frac{1}{n}\sum_{i=1}^n \feature(\state_i,\action_i)\feature(\state_i,\action_i)^\top = \frac{1}{n} \datamatrix^\top \datamatrix$ is invertible. (The results in the paper can be modified to accomodate the scenario when $\empCov$ is not invertible by considering regularized quantities $\empCov + \lambda I$ for some $\lambda > 0$.) We then consider the confidence sets of the form:
\begin{align}\label{eq:thetalp}
  \thetalp \coloneqq \inbraces{ \theta \in \R^{d} \mid \norm{\empCov^{1/2}(\theta - \thetaOLS)}_p \le {\beta} / {2} },
\end{align}
where $\beta > 0$ is a width parameter. In other words, the set $\thetalp$ contains all the $\theta$'s that are close to the OLS estimate $\thetaOLS$ in $\ell_p$ distance after the linear transformation $\empCov^{1/2}$. Since $\thetaOLS$ is a faithful approximation of the truth $\theta^\star$, we expect that $\theta^\star$ lies in this confidence set $\thetalp$ with an appropriate choice of $\beta$. This is indeed true, as the following lemma shows.

\begin{lemma}\label{lemma:valid-lp} Fix any $\delta \in (0,1)$. Set the width parameter 
$
  \beta = d^{1/p} \sqrt{ \tfrac{ 8 \log (d / \confLevel) }{n} }.
$
Then the confidence set $\thetalp$ is $(\beta, \delta)$ pessimism-valid with respect to the norm $\norm{v} \coloneqq \| \empCov^{1/2} v\|_{p}$.
\end{lemma}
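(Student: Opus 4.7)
I would first observe that property (2) of pessimism-validity follows automatically from property (1), so the entire task reduces to showing that $\optparam \in \thetalp$ with probability at least $1-\confLevel$. Indeed, on the event $\optparam \in \thetalp$, the triangle inequality for $\|\empCov^{1/2}\cdot\|_p$ gives, for any $\theta \in \thetalp$,
\[
  \|\empCov^{1/2}(\optparam - \theta)\|_p \le \|\empCov^{1/2}(\optparam - \thetaOLS)\|_p + \|\empCov^{1/2}(\thetaOLS - \theta)\|_p \le \beta/2 + \beta/2 = \beta,
\]
which is exactly property (2) in the given norm $\norm{\cdot} = \|\empCov^{1/2}\cdot\|_p$. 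So all the work is in controlling $\|\empCov^{1/2}(\thetaOLS - \optparam)\|_p$.

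Next I would derive a clean closed-form expression for the residual in the desired norm. Letting $\errorvec \in \R^n$ denote the noise vector with $\errorvec_i = \reward_i - \feature(\state_i,\action_i)^\top \optparam$, the normal equations yield $\thetaOLS - \optparam = (\datamatrix^\top\datamatrix)^{-1}\datamatrix^\top\errorvec$, so
\[
  Z \coloneqq \empCov^{1/2}(\thetaOLS - \optparam) = \tfrac{1}{n}\empCov^{-1/2}\datamatrix^\top\errorvec.
\]
Coordinatewise, $Z_j = (\datamatrix\empCov^{-1/2}e_j)^\top\errorvec/n$ is a linear combination of the independent $1$-subgaussian noises $\errorvec_i$, and its variance proxy simplifies as
\[
  \tfrac{1}{n^2}\|\datamatrix\empCov^{-1/2}e_j\|_2^2 = \tfrac{1}{n^2}\,e_j^\top\empCov^{-1/2}(n\empCov)\empCov^{-1/2}e_j = \tfrac{1}{n}.
\]
The whole reason for using the $\empCov^{1/2}$-reweighted norm is exactly this cancellation, which makes each $Z_j$ uniformly $(1/n)$-subgaussian irrespective of the conditioning of $\empCov$.

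From here a standard subgaussian tail bound gives $\Pr[|Z_j| > t] \le 2\exp(-nt^2/2)$, and a union bound over $j \in [d]$ yields $\|Z\|_\infty \lesssim \sqrt{\log(d/\confLevel)/n}$ with probability at least $1-\confLevel$; the elementary inequality $\|Z\|_p \le d^{1/p}\|Z\|_\infty$ then upgrades this to $\|Z\|_p \le \beta/2$ with the prescribed width $\beta = d^{1/p}\sqrt{8\log(d/\confLevel)/n}$ (the constant $8$ comfortably absorbs the $\log 2$ from the union bound). There is no serious obstacle; the key design choice worth flagging is the reduction $\|\cdot\|_p \le d^{1/p}\|\cdot\|_\infty$, which treats all $p \ge 1$ uniformly and produces exactly the $d^{1/p}$ factor appearing in the statement, at the (expected) price that this reduction is tight only at $p = \infty$.
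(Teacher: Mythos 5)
Your proposal is correct and follows essentially the same route as the paper's proof: reduce property (2) to property (1) via the triangle inequality, write $\empCov^{1/2}(\thetaOLS-\optparam)$ as a whitened linear image of the noise so that each coordinate is $(1/n)$-subgaussian, apply a union bound to control the $\ell_\infty$ norm, and pass to $\ell_p$ via $\|v\|_p \le d^{1/p}\|v\|_\infty$. The only cosmetic difference is that the paper presents the two properties in the opposite order and phrases the whitening step as ``each row of $(\datamatrix^\top\datamatrix)^{-1/2}\datamatrix^\top$ has unit $\ell_2$ norm,'' which is the same cancellation you compute explicitly.
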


\version{See \Cref{sec:pf-lp-ub} for the proof of this lemma.}{
\noindent See \Cref{sec:pf-lp-ub} for the proof of this lemma.\medskip}

\noindent Combining \Cref{lemma:valid-lp} and \Cref{prop:general-global-pessimism} immediately yields the following performance guarantee for the pessimistic learning rule constructed using $\thetalp$ (which for notational brevity we denote as $\estPolicy_p$).
\begin{theorem}\label{thm:lp-estimator-bound}
For any $p \geq 1$, with probability at least $1-\confLevel$, we have
\version{
  \begin{align*}
    \val(\optPolicy) - \val(\estPolicy_p) \le  d^{1/p} \sqrt{ \frac{ 8  \log (d / \confLevel) }{n} } \cdot \norm{\empCov^{-1/2} \E_{\state \sim \testDist} \insquare{ \feature( \state, \optPolicy(\state) )}}_q,
  \end{align*}
}{
  \begin{align*}
  \val(\optPolicy) - \val(\estPolicy_p) \le  d^{1/p} \sqrt{ \frac{ 8  \log (d / \confLevel) }{n} } \cdot \norm{\empCov^{-1/2} \E_{\state \sim \testDist} \insquare{ \feature( \state, \optPolicy(\state) )}}_q,
\end{align*}
} 
where $q$ is the solution to $1/p + 1/q = 1$.
\end{theorem}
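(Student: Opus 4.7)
The plan is to combine the two results the paper has already established: Lemma~\ref{lemma:valid-lp}, which asserts that the confidence set $\thetalp$ is $(\beta, \confLevel)$ pessimism-valid under the norm $\|v\| = \|\empCov^{1/2} v\|_p$ for the specified $\beta$, and Proposition~\ref{prop:general-global-pessimism}, which turns any pessimism-valid confidence set into a high-probability suboptimality guarantee in terms of the corresponding dual norm. The entire content of the theorem is bookkeeping: instantiate both ingredients and identify the dual norm explicitly in terms of $\empCov$ and $\ell_q$.

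The steps I would carry out, in order, are as follows. First I would invoke Lemma~\ref{lemma:valid-lp} with $\beta = d^{1/p}\sqrt{8\log(d/\confLevel)/n}$, so that with probability at least $1-\confLevel$ the set $\thetalp$ satisfies $\theta^\star \in \thetalp$ and $\sup_{\theta \in \thetalp}\|\empCov^{1/2}(\theta^\star-\theta)\|_p \le \beta$. Next, on that event, I would apply Proposition~\ref{prop:general-global-pessimism} to the pessimistic learning rule $\estPolicy_p = \estPolicy_{\thetalp}$, which yields
\begin{align*}
\val(\optPolicy) - \val(\estPolicy_p) \le \beta \cdot \bigl\lVert \E_{\state \sim \testDist}[\feature(\state, \optPolicy(\state))] \bigr\rVert_*,
\end{align*}
where $\|\cdot\|_*$ is the dual of $\|v\| = \|\empCov^{1/2}v\|_p$. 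Finally I would substitute the explicit form of this dual norm and the chosen $\beta$ to match the statement of the theorem.

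The only nontrivial computation is identifying the dual norm. Writing $A = \empCov^{1/2}$ (which is symmetric and invertible by the standing assumption that $\empCov$ is invertible), I would compute, for any $u \in \R^d$,
\begin{align*}
\|u\|_* = \sup_{\|Av\|_p \le 1} \langle u, v \rangle = \sup_{\|w\|_p \le 1} \langle u, A^{-1}w \rangle = \sup_{\|w\|_p \le 1} \langle A^{-1}u, w \rangle = \|A^{-1}u\|_q = \|\empCov^{-1/2}u\|_q,
\end{align*}
using the change of variables $w = Av$, the symmetry of $A$, and H\"older duality between $\ell_p$ and $\ell_q$ on $\R^d$. Plugging $u = \E_{\state \sim \testDist}[\feature(\state, \optPolicy(\state))]$ and the value of $\beta$ into the bound from Proposition~\ref{prop:general-global-pessimism} produces exactly the stated inequality.

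I do not expect any real obstacle in this proof: the pessimism-validity abstraction has been engineered precisely so that such bounds follow mechanically. The only point that requires any care is the dual norm identification above, and in particular the use of symmetry of $\empCov^{1/2}$ so that the dual involves $\empCov^{-1/2}$ rather than its transpose; once that is handled, the theorem is immediate from the two preceding results.
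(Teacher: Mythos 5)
Your proposal is correct and is exactly the paper's argument: the theorem is stated as an immediate consequence of combining Lemma~\ref{lemma:valid-lp} with Proposition~\ref{prop:general-global-pessimism}, and your explicit identification of the dual of $\|v\| = \|\empCov^{1/2}v\|_p$ as $\|\empCov^{-1/2}\cdot\|_q$ correctly supplies the one computation the paper leaves implicit.
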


Several remarks regarding \Cref{thm:lp-estimator-bound} are in order. 
First, the performance upper bound has a natural scaling w.r.t.~the sample size $n$, \ie $\val(\optPolicy) - \val(\estPolicy_p) \lesssim \sqrt{1 / n}$. In addition, \Cref{thm:lp-estimator-bound} provides a family of upper bounds for each specific choice of $p \geq 1$. Lastly, from an upper bound perspective, the $\estPolicy_\infty$ learning rule (which we call \name{}) dominates all the other $p \in [1, \infty)$, since for any $v\in \R^d$ and $q \in [1,\infty)$, the inequality $\norm{v}_1 \le d^{1 - 1/q} \norm{v}_q$ holds. This partially showcases the benefits of using \name{} over the alternatives. Later in \Cref{sec:lower-bounds}, we will see a stronger motivation---from the perspective of the lower bound---for using \name{}, in which we show that \name{} is adaptively minimax optimal. We also remark that the max-min form for $\estPolicy_p$ has an equivalent max-only formulation, which will be helpful for our proofs and comparisons to other algorithms: 
\begin{align}\label{eq:thetalp-estimator}
  \estPolicy_p = \argmax_{\policy \in \policyclass} \inbraces{ \E_{\state \sim \testDist}\insquare{\feature(\state,\policy(\state))}^\top \thetaOLS - \tfrac{\beta}{2} \cdot \norm{\empCov^{-1/2} \E_{\state \sim \testDist}\insquare{\feature(\state,\policy(\state))} }_q}.
\end{align}


\subsection{Connections to prior pessimistic learning rules}\label{sec:ub-discussion}
Now we present several connections to existing methods used for offline linear contextual bandits. 

\paragraph{Connection between $\estPolicy_2$ and Bellman-consistent pessimism.}
Xie et al.~\citep{xie2021bellman} proposed the idea of Bellman-consistent pessimism (\BCP{}) for solving offline reinforcement learning with general function approximation. When specialized to linear contextual bandits, the \BCP{} learning rule first forms a version space that includes all possible linear reward functions with small $\ell_2$ prediction error on the observed datasets. Then, \BCP{} defines each policy's pessimistic value as the smallest value the policy can achieve in the version space. Finally, \BCP{} returns the policy that has the highest pessimistic value. In fact, \BCP{} exactly matches the learning rule $\estPolicy_2$ proposed herein. To see this, it suffices to note that the empirical estimate of the Bellman error (cf.~Equation~(3.1) in the paper~\citep{xie2021bellman}) in the linear contextual bandit case is given by 
\begin{align*}
\frac{1}{n} \sum_{i=1}^{n} \inparen{ \phi (s_i, a_i)^\top \theta - r_i}^2 - \inf_{\theta \in \mathbb{R}^{d}} \ \frac{1}{n} \sum_{i=1}^{n} \inparen{ \phi (s_i, a_i)^\top \theta - r_i}^2 = \norm{\empCov^{1/2}(\theta - \thetaOLS)}_2^2.
\end{align*}
Therefore a parameter $\theta$ having a small Bellman error is equivalent to having a small $\ell_2$ distance to the OLS estimate.  
\citet{xie2021bellman} prove that \BCP{} enjoys the guarantee (up to log factors) of \version{$\sqrt{d/n} \cdot \E_{\state \sim \testDist} \big[ \| \empCov^{-1/2} \feature(\state, \optPolicy(\state)) \|_2\big]$,}{
\begin{align*}
  \val(\optPolicy) - \val(\estPolicy_{\mathsf{BCP}}) \lesssim \sqrt{\frac{d }{n}} \cdot \E_{\state \sim \testDist} \insquare{\norm{\empCov^{-1/2} \feature(\state, \optPolicy(\state))}_2},
\end{align*}}
which is loose compared to our theoretical guarantee $\sqrt{d / n} \ \cdot \ \|\empCov^{-1/2} \E_{\state \sim \testDist} \feature(\state, \optPolicy(\state))\|_{2}$\version{, as}{, since 
\[
\norm{\empCov^{-1/2} \E_{\state \sim \testDist} \feature(\state, \optPolicy(\state))}_{2} \leq \E_{\state \sim \testDist} \insquare{\norm{\empCov^{-1/2} \feature(\state, \optPolicy(\state))}_2},
\]}
a consequence of Jensen's inequality and the convexity of the $\ell_2$ norm.

Similar ideas using the $\ell_2$ confidence set also appear in a recent paper by ~\citet{zanette2021provable}; their actor-critic algorithm, PACLE, can be interpreted as providing a computationally efficient way to solve $\estPolicy_2$.\footnote{
While we focus on the statistical properties of the family $\{\estPolicy_p\}$ in this work, we believe that the actor-critic approach developed by \citet{zanette2021provable} can be extended to yield tractable algorithms for general $p\ge 1$.}

\paragraph{Connection between \name{} and lower confidence bound for tabular contextual bandits.} 
We now discuss how the \LCB{} learning rule for the tabular setting is a specialization of \name{}.
The tabular contextual bandit setting is a special case of the linear setting with the feature mapping $\feature(\state,\action) = e_{\state\action}$ (the canonical basis vector indexed by $(\state, \action)$). For notational convenience, we define $\numS \coloneqq \abs{\stateS}$, $\numA \coloneqq \abs{\actionS}$, $\empReward(\state, \action)$ to be the empirical average reward, and $n(\state,\action)$ to be the number of times the pair $(\state, \action)$ is seen in the dataset.

\citet{rashidinejad2021bridging} present the following lower confidence bound (\LCB{}) learning rule:
\version{
  \begin{align}\label{eq:lcb}
    \text{for each }s, \qquad \estPolicy_\mathsf{LCB}(\state) \coloneqq \argmax_{\action \in \actionS} \ \empReward(\state, \action) - \beta \cdot \sqrt{\frac{\log(SA/\confLevel)}{n(\state,\action)}}.
  \end{align}
}{
  \begin{align}\label{eq:lcb}
    \text{for each }s, \qquad \estPolicy_\mathsf{LCB}(\state) \coloneqq \argmax_{\action \in \actionS} \ \empReward(\state, \action) - \beta \cdot \sqrt{\frac{\log(SA/\confLevel)}{n(\state,\action)}}.
  \end{align}
}
In essence, the quantity \version{$\empReward(\state, \action) - \beta \cdot \sqrt{\frac{\log(SA/\confLevel)}{n(\state,\action)}}$}{$\empReward(\state, \action) - \beta \cdot \sqrt{\frac{\log(SA/\confLevel)}{n(\state,\action)}}$} acts as a lower confidence bound for the true mean reward $\reward(\state, \action)$. In every state, \LCB{} picks the action that maximizes this lower confidence bound. 
It is easy to verify that \LCB{}~\eqref{eq:lcb} exactly corresponds to \name{} (with proper choices of $\beta$); one just needs to check the max-only formulation in \Cref{eq:thetalp-estimator} with $p=\infty$ and $q=1$. 

In establishing performance guarantees for \LCB{}, \citet{rashidinejad2021bridging} assume that the covariates $\{(\state_i,\action_i)\}_{i=1}^n$ are drawn i.i.d.~from a behavior distribution $\behDist \in \Delta(\stateS \times \actionS)$ (as opposed to our fixed design setting). Nevertheless, it is straightforward to translate our results to this random design case by using Chernoff bounds.

\begin{corollary}\label{cor:tabular}
In the tabular setting, with probability at least $1-\confLevel$, the learning rule $\estPolicy_\infty$ with $\Theta$ given by \Cref{eq:thetalp} achieves the suboptimality:
\begin{align*}
  \val(\optPolicy) - \val(\estPolicy_\infty) \lesssim \sqrt{ \frac{ \log(SA/\confLevel) }{n} } \cdot \inparen{\sum_{s} \frac{\testDist(s)}{\sqrt{\behDist(\state, \optPolicy(\state))}}},
\end{align*}
as long as $n \gtrsim \log(\numS/\confLevel) \cdot \inparen{\min_s\{\behDist(\state, \optPolicy(s))\} }^{-1}$. 
\end{corollary}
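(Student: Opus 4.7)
The plan is to invoke \Cref{thm:lp-estimator-bound} with $p = \infty$ (so $q = 1$, ambient dimension $d = SA$, and $d^{1/p} = 1$) and then translate sample counts $n(\state, \action)$ into population probabilities $\behDist(\state, \action)$ via a multiplicative Chernoff bound. In the tabular setting $\feature(\state, \action) = e_{\state \action}$, so $\empCov$ is diagonal with $(\state, \action)$-entry equal to $n(\state, \action)/n$, and the vector $\empCov^{-1/2} \E_{\state \sim \testDist}[\feature(\state, \optPolicy(\state))]$ is supported on the coordinates $(\state, \optPolicy(\state))$ with value $\testDist(\state)\sqrt{n/n(\state, \optPolicy(\state))}$. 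Its $\ell_1$ norm is therefore $\sum_\state \testDist(\state)\sqrt{n/n(\state, \optPolicy(\state))}$, and \Cref{thm:lp-estimator-bound} immediately yields, on an event of probability at least $1 - \delta/2$,
\begin{equation*}
  \val(\optPolicy) - \val(\estPolicy_\infty) \;\lesssim\; \sqrt{\frac{\log(SA/\delta)}{n}} \cdot \sum_\state \testDist(\state)\sqrt{\frac{n}{n(\state, \optPolicy(\state))}} \;=\; \sum_\state \testDist(\state)\sqrt{\frac{\log(SA/\delta)}{n(\state, \optPolicy(\state))}}.
\end{equation*}

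Next I would replace $n(\state, \optPolicy(\state))$ by $n \behDist(\state, \optPolicy(\state))$. Since $n(\state, \optPolicy(\state))$ is a sum of $n$ i.i.d.\ Bernoulli$(\behDist(\state, \optPolicy(\state)))$ indicators, a multiplicative Chernoff bound gives
\begin{equation*}
  \P\!\left[n(\state, \optPolicy(\state)) < \tfrac{1}{2} n \behDist(\state, \optPolicy(\state))\right] \;\le\; \exp\!\left(- n \behDist(\state, \optPolicy(\state))/8\right),
\end{equation*}
which is at most $\delta/(2S)$ under the sample-size assumption $n \gtrsim \log(S/\delta)/\min_\state \behDist(\state, \optPolicy(\state))$. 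A union bound over the $S$ states then ensures $n(\state, \optPolicy(\state)) \ge n \behDist(\state, \optPolicy(\state))/2$ for every $\state$ simultaneously; plugging this into the display above and combining the two $\delta/2$-level failure events yields the advertised bound.

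The only technical wrinkle is that \Cref{thm:lp-estimator-bound} is stated under the assumption that $\empCov$ is invertible, which may fail in the random-design tabular setting for state--action pairs that never appear. I would handle this by restricting the $\ell_\infty$ confidence-set construction of \Cref{lemma:valid-lp} to the coordinate subspace indexed by $\{(\state, \action) : n(\state, \action) > 0\}$; the good event above guarantees that every $(\state, \optPolicy(\state))$ lies in this subspace, so the pessimism-validity argument and \Cref{prop:general-global-pessimism} carry over verbatim, with the suboptimality bound depending only on optimal state--action pairs. No step here is a genuine obstacle; the work is just bookkeeping the two failure events and verifying that the $\log d = \log(SA)$ factor from \Cref{thm:lp-estimator-bound} matches the $\log(SA/\delta)$ in the corollary.
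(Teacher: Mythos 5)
Your proposal is correct and follows essentially the same route as the paper: specialize \Cref{thm:lp-estimator-bound} to the tabular feature map (where $\empCov$ is diagonal with entries $n(\state,\action)/n$, so the dual $\ell_1$ norm becomes $\sum_\state \testDist(\state)\sqrt{n/n(\state,\optPolicy(\state))}$), then replace $n(\state,\optPolicy(\state))$ by $n\behDist(\state,\optPolicy(\state))/2$ via a multiplicative Chernoff bound and a union bound over states, exactly as in the paper's \Cref{lem:concentration-counts}. Your extra remark about handling non-invertibility of $\empCov$ by restricting to observed coordinates is a sensible piece of bookkeeping that the paper glosses over, but it does not change the argument.
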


\version{\Cref{cor:tabular} is proved in \Cref{sec:proof-cor-tabular}.}{
\noindent The proof of \Cref{cor:tabular} can be found in \Cref{sec:proof-cor-tabular}. \medskip}

\noindent Compared to the upper bound in the paper~\citep{rashidinejad2021bridging}, \Cref{cor:tabular} is more fine-grained, or ``problem-dependent'', as the suboptimality bound depends on the interaction between the specific behavior distribution $\behDist$ and test distribution $\testDist$. In contrast, \citet{rashidinejad2021bridging} consider the class of tabular instances with bounded single-policy concentrability coefficient. 

\begin{definition}\label{def:concentrability}
The single-policy concentrability coefficient is defined as $C^\star \coloneqq \sup_{\state \in \stateS} \frac{\rho(s)}{\mu(s,\pi^\star(s))}$.
\end{definition}

\noindent \Cref{cor:tabular} readily recovers the performance guarantee for \LCB{} of
\version{
$\tilde{O}(\sqrt{ { SC^\star } / {n} })$ 
}{
  \begin{align*} 
    \val(\optPolicy) - \val(\estPolicy_\infty) \lesssim \sqrt{ \frac{ SC^\star \log(SA/\confLevel) }{n} }
  \end{align*}
}
established in the paper~\citep{rashidinejad2021bridging}, which is optimal in the regime where $C^\star \ge 2$.

\paragraph{Connection to pessimistic value iteration.}
We give another example of how to interpret pessimistic learning rules using the idea of confidence set construction. Consider the pessimistic value iteration (\PEVI{}) learning rule proposed by~\citet{jin2021pessimism}. \PEVI{} directly extends \Cref{eq:lcb} to the linear setting:
\begin{align}\label{eq:pevi}
  \estPolicy_\mathsf{PEVI}(s) \coloneqq \argmax_{\action \in \actionS} \ \feature(\state,\action)^\top \thetaOLS - \beta \cdot \norm{\empCov^{-1/2} \feature(\state,\action)}_2,
\end{align}
where the right hand side still acts as a lower confidence bound for the true mean reward $r(s,a)$.
\PEVI{} bears striking resemblance with the max-only formulation~\eqref{eq:thetalp-estimator} (with $p=q=2$), with the key difference that the max-only formulation is ``averaged'' over the test distribution $\testDist$, while \PEVI{} directly discounts every $(\state,\action)$ pair. \PEVI{} does not immediately fit into our confidence set framework. However, if we modify the minimization over confidence sets to minimization over functionals $\theta: \stateS \to \R^d$, then we can rewrite \PEVI{} as
\version{
\begin{align*}
  \estPolicy_\mathsf{PEVI} &\coloneqq \argmax_{\pi \in \Pi} \inf_{\theta \in \Theta } \ \E_{\state \sim \testDist} [ \feature(\state, \policy(\state) )^{\top} \theta(s) ], \\
  &\text{where } \Theta = \inbraces{s\mapsto \theta(s) \mid \norm{ \empCov^{1/2}(\theta(s) - \thetaOLS) }_2 \le \beta, \text{ for all } s}.
\end{align*}}{
  \begin{align*}
    \estPolicy_\mathsf{PEVI} \coloneqq \argmax_{\pi \in \Pi} \inf_{\theta \in \Theta } \ \E_{\state \sim \testDist} [ \feature(\state, \policy(\state) )^{\top} \theta(s) ], \quad \text{where } \Theta = \inbraces{s\mapsto \theta(s) \mid \norm{\empCov^{1/2}(\theta(s) - \thetaOLS)}_2 \le \beta, \text{ for all } s}.
  \end{align*}
}
In other words, \PEVI{} enlarges the $\ell_2$ confidence set by separately picking a pessimistic parameter $\theta(s)$ for each state~$\state\in \stateS$. \citet{jin2021pessimism} prove the guarantee (up to log factors) of \version{$\sqrt{d^2/n} \cdot \E_{\state \sim \testDist} \insquare{\lVert \empCov^{-1/2} \feature(\state, \optPolicy(\state)) \rVert_2}$,}{\begin{align*}
  \val(\optPolicy) - \val(\estPolicy_{\mathsf{PEVI}}) \lesssim \sqrt{\frac{d^2}{n}} \cdot  \E_{\state \sim \testDist} \insquare{\norm{\empCov^{-1/2} \feature(\state, \optPolicy(\state))}_2},
\end{align*}}
which is loose due to the extra factor of $d$ and the interchanging of the expectation and the norm. However, their guarantee holds for all test distributions---as opposed to 
a fixed test distribution $\testDist$. This is a consequence of being pessimistic for every state $\state$.

\section{Which learning rule should one use?}\label{sec:lower-bounds}
Having introduced a general strategy for building pessimistic learning rules by constructing $\ell_p$ confidence sets, it is natural to ask which $\estPolicy_p$ one should use. To enable such comparisons, we investigate the statistical limits of offline linear contextual bandits over constrained sets of problem instances. 

\subsection{Minimax lower bound for constrained instances}
For any feature mapping $\feature: \stateS \times \actionS \to \R^d$, sample size $n \in \N$, and two quantities $q \in [1,\infty)$, $\Lambda > 0$, we define a set of linear contextual bandit (CB) instances\footnote{For brevity, we omit the dependence on $\feature$ and $n$ in the notation $\mathsf{CB}_q(\Lambda)$.} as follows: 
\version{
    \begin{align*}
        \mathsf{CB}_q(\Lambda) \!\! \coloneqq  \!\! \inbraces{( \testDist, \{(s_i, a_i)\}_{i=1}^n, \theta^\star , R) \mid \norm{ \empCov^{-1/2} \E_{\state \sim \testDist} [\phi(\state, \optPolicy(\state))] }_q \leq \Lambda,\ R \text{ is 1-subgaussian} }.
    \end{align*}
}{
\begin{align}\label{eq:CB-instances}
    \mathsf{CB}_q(\Lambda) \coloneqq \inbraces{( \testDist, \{(s_i, a_i)\}_{i=1}^n, \theta^\star , R) \mid \norm{ \empCov^{-1/2} \E_{\state \sim \testDist} [\phi(\state, \optPolicy(\state))] }_q \leq \Lambda,\ R \text{ is 1-subgaussian} }.
\end{align}
}
The set $\mathsf{CB}_q(\Lambda)$ includes all the linear contextual bandit instances such that a sort of ``complexity measure'' 
$\complexity_q \coloneqq \lVert \empCov^{-1/2} \E_{\state \sim \testDist} [\phi(\state, \optPolicy(\state))] \rVert_q$ is at most $\Lambda$. 
Our motivation to consider the rate of estimation in the CB family $\mathsf{CB}_q(\Lambda)$ are 
two-fold. 
First, in view of \Cref{thm:lp-estimator-bound}, the family $\mathsf{CB}_q(\Lambda)$ admits a good learning rule, specifically $\estPolicy_p$ with $1/p + 1/q = 1$, since for every $\calQ \in \mathsf{CB}_q(\Lambda)$, w.p.~at least $1-\delta$,
\version{
    \begin{align}\label{eq:upper-for-constrained-class}
        \val_\CBinstance^\star - \val_\CBinstance(\estPolicy_p) \lesssim d^{1/p} \sqrt{ { \log(d/\confLevel)} / {n} } \cdot \Lambda,
    \end{align}
}{
    \begin{align}\label{eq:upper-for-constrained-class}
        \val_\CBinstance^\star - \val_\CBinstance(\estPolicy_p) \lesssim d^{1/p} \sqrt{ \frac{ \log(d/\confLevel)}{n} } \cdot \Lambda,
    \end{align}
}
where $\val_{\CBinstance}^\star$ denotes the optimal value in instance $\calQ$ and $\val_\CBinstance(\policy)$ denotes the value of policy $\policy$ in instance $\calQ$. Thus, it is natural to view $\mathfrak{C}_q$ 
as a certain complexity measure for offline learning in linear contextual bandits.  
Second, prior work~\citep{zanette2021provable,rashidinejad2021bridging,yin2021towards} has proven various types of lower bounds on offline learning using either the $\ell_2$ quantity $\complexity_2$ or the $\ell_1$ quantity $\complexity_1$.  We will elaborate more on this point later.

Now we are ready to present the minimax lower bounds for these families of CB instances.



\begin{theorem}\label{thm:lp-lower-bound}
    For every $d \geq 2$, there exists a feature mapping $\feature$ such that the following lower bound holds. For any $p,q \geq 1$ such that $1/p + 1/q = 1$, as long as $\Lambda \ge \sqrt{8} \cdot d^{1/q - 1/2}$ and $n \ge d^{2/p} \Lambda^2$, we have
    \version{
        \begin{align*}
            \inf_{\estPolicy} \sup_{\CBinstance \in \mathsf{CB}_q(\Lambda)} \E
            [\optVal_{ \CBinstance } - \val_{ \CBinstance }( \estPolicy ) ] \ge c \cdot  d^{1/p} \sqrt{ 1/n } \cdot \Lambda,
        \end{align*}
    }{
    \begin{align*}
        \inf_{\estPolicy} \sup_{\CBinstance \in \mathsf{CB}_q(\Lambda)} \E
        [\optVal_{ \CBinstance } - \val_{ \CBinstance }( \estPolicy ) ] \ge c \cdot  d^{1/p} \sqrt{ \frac{1}{n} } \cdot \Lambda,
    \end{align*}}
    where $c > 0$ is some universal constant. Furthermore, when $p=\infty, q=1$, the lower bound holds for the extended range of $\Lambda \ge 2$.
\end{theorem}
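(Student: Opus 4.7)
The plan is to prove the matching minimax lower bound via Assouad's lemma applied to a hypercube family of Gaussian bandit instances, with the effective hypercube dimension $k$ tuned as a function of $(\Lambda, q)$. Specifically, take $\stateS = [d]$, $\actionS = \{+, 0\}$ with features $\feature(\state, +) = e_\state$ and $\feature(\state, 0) = 0$. Given $(\Lambda, q)$, pick $k \le d$ and $\gamma \ge 1$ so that $\Lambda = k^{1/q - 1/2} \sqrt{\gamma}$: in the primary range $\Lambda \ge \sqrt{8}\, d^{1/q - 1/2}$ take $k = d$ and $\gamma = \Lambda^2 d^{1 - 2/q} \ge 8$; in the extended range $p = \infty, q = 1$ take $\gamma$ to be a small constant and $k = \Lambda^2/\gamma \le d$. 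Place $n_\state = n/(k\gamma)$ copies of $(\state, +)$ in the covariates for $\state \in [k]$, one copy of $(\state, +)$ for each $\state > k$ (so $\empCov$ is invertible), and waste the remaining samples on null actions (whose zero feature does not contribute to $\empCov$). Let $\testDist$ be uniform on $[k]$, and parameterize the hard family by $\omega \in \{\pm 1\}^k$ via $\optparam_\state = \epsilon\,\omega_\state$ for $\state \in [k]$ and $\optparam_\state = 1$ for $\state > k$ (making non-null trivially optimal at inactive states), with $\epsilon = c\sqrt{k\gamma/n}$ for a small universal constant $c$ and Gaussian rewards of unit variance.

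\textbf{Validity and per-state decoupling.} A direct calculation shows $\empCov$ is diagonal with $\empCov_{\state\state} = 1/(k\gamma)$ for $\state \in [k]$, so $\|\empCov^{-1/2}\E_{\state \sim \testDist}[\feature(\state, \optPolicy(\state))]\|_q \le (1/k)\sqrt{k\gamma} \cdot k^{1/q} = \Lambda$ (with equality at $\omega = (+1,\ldots,+1)$) and hence every instance lies in $\mathsf{CB}_q(\Lambda)$; the hypothesis $n \ge d^{2/p}\Lambda^2 = k\gamma$ guarantees $n_\state \ge 1$ at active states. Because the features $\{e_\state\}$ are orthogonal across active states, the observations decouple into $k$ independent Gaussian-mean tests: the suboptimality factorizes as $\val(\optPolicy) - \val(\estPolicy) = \epsilon \sum_{\state \in [k]} \testDist(\state)\,\mathbb{1}[\estPolicy(\state) \ne \optPolicy(\state)]$, and each per-state test (distinguishing $\omega_\state = \pm 1$ from $n_\state$ samples) has KL divergence $2 n_\state \epsilon^2 \le 2c^2$. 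Assouad's lemma with $c$ small enough then yields $\inf_{\estPolicy}\sup_\omega \E[\val(\optPolicy) - \val(\estPolicy)] \gtrsim \epsilon = c\sqrt{k\gamma/n}$. Using $1/p + 1/q = 1$, one has $\sqrt{k\gamma} = \Lambda\, k^{1/p}$; with $k = d$ this becomes $c\Lambda d^{1/p}/\sqrt{n}$ (primary regime), while with $k = \Lambda^2/\gamma$ it gives $c\Lambda/\sqrt{n}$ (extended $p = \infty$ regime).

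\textbf{Main obstacle.} The delicate bookkeeping: $k$, $\gamma$, and $n_\state$ must be chosen jointly so that $\|\empCov^{-1/2}\E_\testDist[\feature(\state,\optPolicy(\state))]\|_q \le \Lambda$ is saturated (tightness of the lower bound) while the per-coordinate KL stays $O(1)$ (applicability of Assouad). The threshold $\Lambda \ge \sqrt{8}\, d^{1/q - 1/2}$ reflects the need for $\gamma \ge 8$ when $k = d$, whereas the extended $\Lambda \ge 2$ regime at $p = \infty$ is accessible only because the $\ell_1$ norm can be saturated by a sparse hypercube with $k < d$.
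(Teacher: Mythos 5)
Your proof is correct, but it takes a genuinely different route from the paper's. The paper reduces to a tensorized Bayes-risk lower bound for offline multi-armed bandits (Theorem 1 of Xiao et al.), instantiated on a tabular instance with $S=d/2$ states and two \emph{genuine} arms per state whose counts are split asymmetrically as $n/(S\Gamma)$ versus $n/S\cdot(1-1/\Gamma)$ with $\Gamma = S^{2/p-1}\Lambda^2/2$; the extended $q=1$ range is likewise handled by restricting the test distribution to a sparse subset of states. You instead run a self-contained Assouad argument over a sign hypercube $\omega\in\{\pm1\}^k$ with Gaussian rewards, using one informative arm per state and a degenerate null arm with zero feature so that the remaining sample budget can be discarded. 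The two constructions encode the same tension (under-sampling the optimal arm at rate $1/\gamma$ per active state while saturating the $\ell_q$ complexity), and your parameter bookkeeping --- $\gamma = \Lambda^2 d^{1-2/q}\ge 8$ forcing $\Lambda\ge\sqrt{8}\,d^{1/q-1/2}$, $n\ge k\gamma = d^{2/p}\Lambda^2$ for $n_\state\ge1$, and a sparse hypercube $k=\Lambda^2/\gamma$ for the $q=1$ extension --- lines up exactly with the theorem's hypotheses. What your approach buys is an elementary, fully explicit information-theoretic argument with no external black box; what the paper's buys is avoiding the zero feature vector (which is permissible under the problem setup but slightly degenerate) and reusing an existing offline-bandit lower bound. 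Two small points to tighten: in the extended regime ``$\gamma$ a small constant'' must be read as a constant \emph{at least} $1$ (as you do state upfront), since only an $n/\gamma$ fraction of the budget is placed on informative arms; and the counts $n/(k\gamma)$ should be floored, costing only constant factors, as in the paper's own construction.
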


\version{The proof can be found in \Cref{sec:improved-lower-bound}. It relies on a reduction to a bound for the minimax regret of the multi-armed bandit problem.}{\noindent The proof can be found in \Cref{sec:improved-lower-bound}. It relies on a reduction to a bound for the minimax regret of the multi-armed bandit problem.\medskip\noindent}

\noindent We note that \Cref{thm:lp-lower-bound} also consists of a family of lower bounds for each $\ell_q$ norm constrained CB class. 
By comparing the lower bound in \Cref{thm:lp-lower-bound} with the upper bound~\eqref{eq:upper-for-constrained-class} obtained by $\estPolicy_p$, we see that for the $\ell_q$ norm constrained class $\mathsf{CB}_q(\Lambda)$, the learning rule $\estPolicy_p$ with $1/p + 1/q = 1$ is minimax rate-optimal, up to a $\log d$ factor. For instance, over the $\ell_2$ class $\mathsf{CB}_2(\Lambda)$, the minimax rate of estimation is $\tilde{\Theta}(\sqrt{ d / n } \cdot \Lambda) $, while over the $\ell_1$ class $\mathsf{CB}_1(\Lambda)$, the rate is given by $\tilde{\Theta}(\sqrt{1 / n } \cdot \Lambda)$.

On a technical front, it would be interesting to extend \Cref{thm:lp-lower-bound} to the entire range of $\Lambda \ge 0$. It is unclear whether the same minimax rate of $\Omega(d^{1/p}/\sqrt{n} \cdot \Lambda)$ holds when $\Lambda = O(d^{1/q-1/2})$, or whether we can achieve faster rates in the small $\Lambda$ regime. In the tabular setting, \citet{rashidinejad2021bridging} recently showed that \LCB{} achieves fast $1/n$ rates when the single policy concentrability coefficient is small; similar results might hold in the linear setting. Several limitations prevent us from extending the range of $\Lambda$ in \Cref{thm:lp-lower-bound}; \Cref{sec:lb-limitations} provides more technical details.

\subsection{Adaptive minimax optimality of \name{}}\label{sec:adaptive-minimax-opt}

We point out a even stronger message delivered in \Cref{thm:lp-lower-bound}: \name{} is \emph{adaptively minimax optimal} for solving the offline linear contextual bandit problem. This is illustrated in \Cref{fig:adaptive-opt}, where we plot the sample complexity $n$ required in order to achieve constant suboptimality (say, 0.01) for various $\mathsf{CB}_{p/(p-1)}(\Lambda)$. (For sake of  illustration, it is more convenient to work with $p$ rather than $q$ on the $x$-axis.)

\version{\begin{wrapfigure}[19]{r}{0.45\linewidth}
    \centering
    \vspace{-1em}
    \includegraphics[scale=0.19, trim={0cm 19.5cm 35cm 0cm},clip]{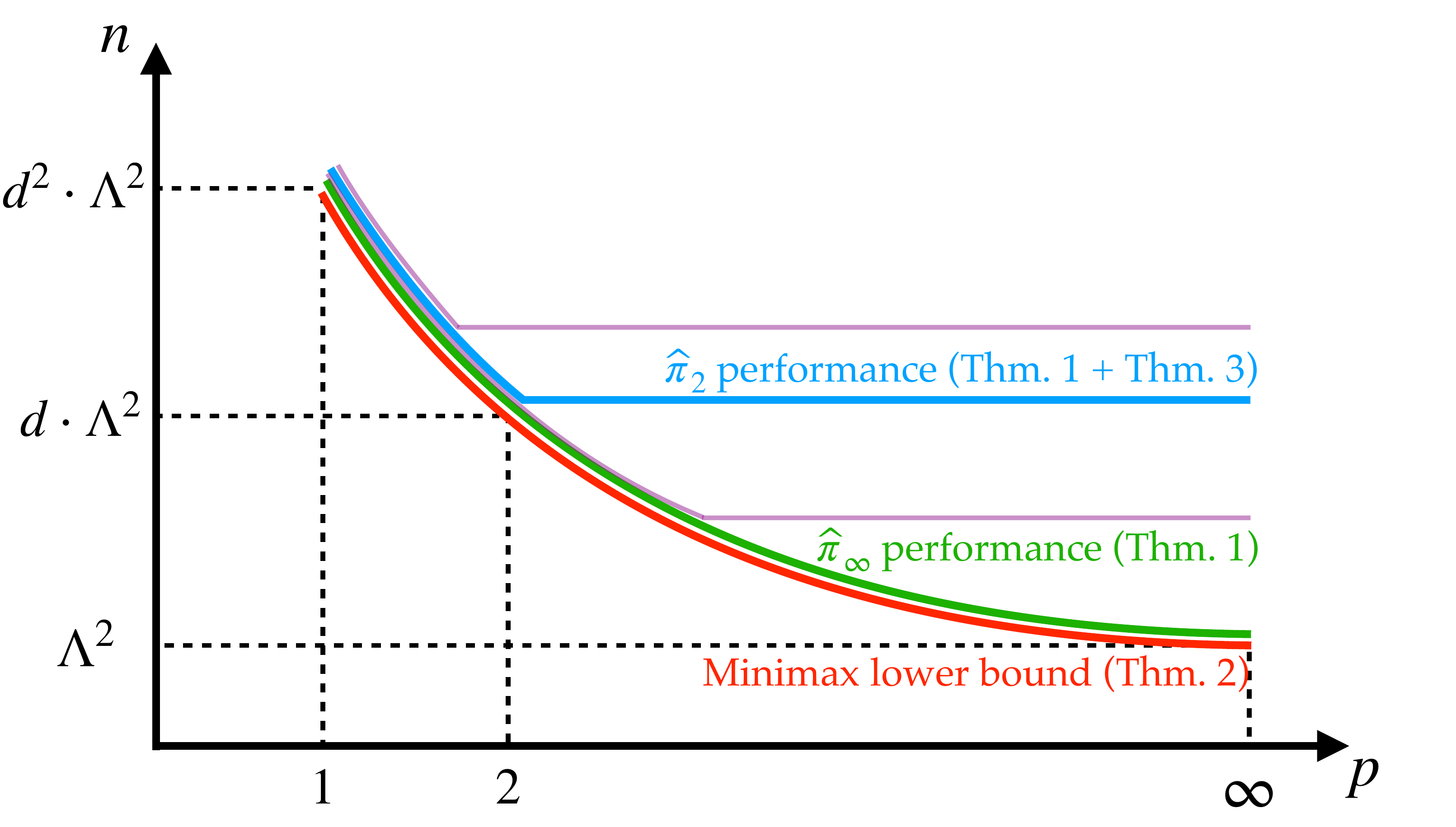}
    \caption{Sample complexity of $\estPolicy_{\widetilde{p}}$ (for various $\widetilde{p}$) over different $\mathsf{CB}_{p/(p-1)}(\Lambda)$ classes. The red line corresponds the minimax lower bound. Other lines correspond to different values of $\widetilde{p}$ and show the number of samples $n$ required to ensure $\sup_{\calQ \in \mathsf{CB}_{p/(p-1)} (\Lambda)} \E[\val_{\CBinstance}^\star- \val_{\CBinstance}(\estPolicy)] \le 0.01$. The blue and green lines correspond to $\estPolicy_2$ and $\estPolicy_\infty$ respectively.  Two purple lines correspond to $\estPolicy_{\widetilde{p}}$ for some $\widetilde{p}\in(1,2)$ and $\widetilde{p} \in (2,\infty)$. \name{} attains minimax optimality over every class, while other $\estPolicy_{\widetilde{p}}$ do not.}\label{fig:adaptive-opt}
\end{wrapfigure}}{\begin{figure}
    \centering
    \includegraphics[scale=0.28, trim={0cm 19cm 35cm 0cm},clip]{adaptive_opt.pdf}
    \caption{Sample complexity of $\estPolicy_{\widetilde{p}}$ (for various $\widetilde{p}$) over different $\mathsf{CB}_{p/(p-1)}(\Lambda)$ classes. The red line corresponds the minimax lower bound. Other lines correspond to different values of $\widetilde{p}$ and show the number of samples $n$ required to ensure $\sup_{\calQ \in \mathsf{CB}_{p/(p-1)} (\Lambda)} \E[\val_{\CBinstance}^\star- \val_{\CBinstance}(\estPolicy)] \le 0.01$. The blue and green lines corresponds to $\estPolicy_2$ and $\estPolicy_\infty$ respectively.  Two purple lines correspond to $\estPolicy_{\widetilde{p}}$ for some $\widetilde{p}\in(1,2)$ and $\widetilde{p} \in (2,\infty)$. \name{} attains minimax optimality over every class, while other $\estPolicy_{\widetilde{p}}$ do not.}
    \label{fig:adaptive-opt}
\end{figure}}

As indicated by the red line, \Cref{thm:lp-lower-bound} shows that every learning rule must incur sample complexity at least $\Omega(d^{2/p} \Lambda^2)$. Likewise, we can also follow the discussion after \Cref{thm:lp-estimator-bound} to see that the performance upper bound of $\estPolicy_\infty$ is
\version{$d^{1/p} \sqrt{ \tfrac{ \log(d/\confLevel) }{n} } \cdot \complexity_q$ for all $p, q \geq 1, 1/p + 1/q = 1$.}{\begin{align*}
\val(\optPolicy) - \val(\estPolicy_{\infty}) &\lesssim \sqrt{ \frac{ \log(d/\confLevel) }{n} } \cdot \norm{\empCov^{-1/2} \E_{\state \sim \testDist} \insquare{ \feature( \state, \optPolicy(\state) )}}_1 \\
&\leq d^{1/p} \sqrt{ \frac{ \log(d/\confLevel) }{n} } \cdot \norm{\empCov^{-1/2} \E_{\state \sim \testDist} [\phi(\state, \optPolicy(\state))] }_q, \: \text{for all } p, q \geq 1, 1/p + 1/q = 1.
\end{align*}}
Thus, \name{} attains the green line in \Cref{fig:adaptive-opt}; that is, \name{} is \emph{simultaneously} minimax rate-optimal for all $\ell_q$-norm constrained classes $\mathsf{CB}_q(\Lambda)$, up to a $\log d$ factor.\footnote{We did not investigate when the $\log d$ factor in \Cref{thm:lp-estimator-bound} can be removed, so for example, it is possible that $\estPolicy_2$ beats $\estPolicy_\infty$ over $\mathsf{CB}_2(\Lambda)$ by a factor of $\sqrt{\log d}$.} From worst-case perspective, one should always prefer using \name{} given an unknown CB instance.

Is this adaptive optimality property unique to \name{} among the family $\{\estPolicy_p\}_{p\ge 1}$ we consider? Below, we answer this question in the positive by presenting a separation result.



\begin{theorem}[Informal]\label{thm:separation-l2-linfty-new} Fix any $p \ge 1$. For sufficiently large $n, d$, there exists a contextual bandit instance $\CBinstance \in \mathsf{CB}_1(\Lambda)$ with $\Lambda = \sqrt{8d}$, such that with probability at least $1/4$, $\estPolicy_p$ has suboptimality at least $\Omega(d^{1/p}/\sqrt{n} \cdot \Lambda)$. 
\end{theorem}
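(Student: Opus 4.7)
The plan is to exhibit a multi-armed bandit instance in $\mathsf{CB}_1(\Lambda)$ with $\Lambda = \sqrt{8d}$ on which $\estPolicy_p$ is forced, by the (too-wide) $\ell_p$ confidence width, to overpenalize the only good arm and pick an uninformative one. Concretely, I would take a single-state problem with $\actionS = \{a_1,\dots,a_d\}$, features $\feature(a_j) = e_j$, and a fixed-design dataset in which arm $a_1$ appears $n_1 = \lfloor n/(8d)\rfloor$ times and each $a_j$ for $j\ge 2$ appears $n_j = \lfloor (n-n_1)/(d-1)\rfloor \asymp n/d$ times. Assign Gaussian rewards with means $\reward(a_1) = \epsilon$ and $\reward(a_j) = 0$ for $j \ge 2$, where $\epsilon \coloneqq c\cdot d^{1/p}\Lambda/\sqrt{n}$ for a small absolute constant $c>0$. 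Then $\optPolicy = a_1$, $\empCov = \mathrm{diag}(n_j/n)$ is invertible, and $\norm{\empCov^{-1/2}\feature(a_1)}_1 = \sqrt{n/n_1} = \sqrt{8d} = \Lambda$, so $\CBinstance \in \mathsf{CB}_1(\Lambda)$.

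Next I would apply the max-only formulation \Cref{eq:thetalp-estimator}. Since $\empCov^{-1/2}e_j = \sqrt{n/n_j}\,e_j$ has a single nonzero coordinate, its $\ell_q$ norm is $\sqrt{n/n_j}$ independently of $q$; the choice of $p$ enters the decision only through the width $\beta = d^{1/p}\sqrt{8\log(d/\confLevel)/n}$. The rule $\estPolicy_p$ therefore selects some $a_j$ with $j\ge 2$ over $a_1$ as soon as
\[
\max_{j\ge 2}\empReward(a_j)\;>\;\empReward(a_1)\;-\;\tfrac{\beta}{2}\big(\sqrt{n/n_1}-\sqrt{n/n_j}\big)\;=\;\empReward(a_1)\;-\;\Theta\big(\beta\sqrt{d}\,\big),
\]
where $\beta\sqrt{d}=\Theta\!\left(d^{1/p+1/2}\sqrt{\log d/n}\right)$ carries the critical $d^{1/p}$ inflation forced by the $\ell_p$-pessimism-validity of $\thetalp$ (\Cref{lemma:valid-lp}).

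It remains to combine a standard concentration bound with a one-line anti-concentration fact. Since $\empReward(a_1)-\epsilon$ is sub-Gaussian with scale $1/\sqrt{n_1}=\sqrt{8d/n}$, one has $\empReward(a_1)\le \epsilon + C_1\sqrt{d/n}$ with probability at least $3/4$. Independently, the variables $\{\empReward(a_j)\}_{j\ge 2}$ are iid symmetric mean-zero Gaussians, so $\max_{j\ge 2}\empReward(a_j)\ge 0$ with probability at least $1-2^{-(d-1)}$. On the intersection of these two events the selection inequality reduces to checking $0\ge \epsilon + C_1\sqrt{d/n} - \Theta(\beta\sqrt{d})$, which holds for all sufficiently large $d$ (and appropriately small $c$) because $\beta\sqrt{d}$ carries an extra $\sqrt{\log d}$ factor beyond both $\epsilon$ and $\sqrt{d/n}$. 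Hence $\estPolicy_p\ne a_1$ with probability at least $1/2 \geq 1/4$, and its suboptimality on $\CBinstance$ is exactly $\val(\optPolicy)-\val(\estPolicy_p)=\epsilon=\Omega(d^{1/p}\Lambda/\sqrt{n})$.

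The main obstacle is conceptual rather than computational: one has to recognize that the $d^{1/p}$ factor in $\beta$, which is forced by \Cref{lemma:valid-lp}, creates an asymmetric penalty large enough to overrule even the modest coverage gap between the well- and poorly-sampled arms, and that this gap scales the \emph{lower} bound as $\beta\sqrt d$. Once this is isolated, the probabilistic analysis is essentially trivial---a subgaussian tail bound for $\empReward(a_1)$ and the observation that $d-1$ independent symmetric Gaussians cannot all be negative. The constant $1/4$ in the theorem is loose; sharper control of $\max_{j\ge 2}\empReward(a_j)$ via Gaussian lower-tail bounds would boost the probability arbitrarily close to $1$.
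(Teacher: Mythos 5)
Your construction is correct, and it reaches the same conclusion by a genuinely different route than the paper. The paper's hard instance (Theorem~\ref{thm:separation-l2-linfty-formal}) is a tabular problem with $S=d/2$ states and two actions, in which only state $1$ presents a real choice and its optimal action is drastically undersampled ($n(1,a^{(1)})=n/(9S^3)$ versus $n/S$ elsewhere); the $d^{1/p}$ loss there emerges from how the $\ell_q$ penalty \emph{aggregates} the per-state terms $\rho^p(s)/(n(s,\pi(s))/n)^{p/2}$ into a single norm, so that the one badly covered state inflates the penalty gap between the two candidate policies by a factor of order $S^{3/2}$. Your single-state, $d$-armed instance instead makes every direction $1$-sparse, so $\lVert\empCov^{-1/2}e_j\rVert_q$ is the \emph{same} for every $q$, and the entire $d^{1/p}$ inflation is carried by the width $\beta=d^{1/p}\sqrt{8\log(d/\delta)/n}$ that Lemma~\ref{lemma:valid-lp} forces on any pessimism-valid $\ell_p$ ball. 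The membership check $\lVert\empCov^{-1/2}e_1\rVert_1=\sqrt{n/n_1}=\sqrt{8d}$, the concentration of $\empReward(a_1)$ at scale $\sqrt{8d/n}\ll\epsilon$, and the trivial anti-concentration of $\max_{j\ge2}\empReward(a_j)$ all go through (the floors in the $n_j$ only perturb the penalties negligibly, and since all arms $j\ge 2$ carry essentially the same penalty, comparing the best of them to $a_1$ is legitimate). Your version is arguably cleaner and isolates the width inflation as the culprit; the paper's version has the complementary virtue of exhibiting the failure through the cross-state averaging of the $\ell_q$ norm even when the per-state coverage pattern is the natural one from the $C^\star$-style lower bounds, and its formal statement quantifies over all valid width multipliers $\xi\ge K_\xi$ rather than one fixed $\beta$ — a robustness your argument also supports but should state explicitly if you want to match the formal theorem. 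The only cosmetic caveat is that the probability-$1/4$ clause is met with room to spare in your construction (you get $\ge 1/2$), which, as you note, is fine.
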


\noindent Since \name{} attains a suboptimality of $\tilde{O}(1/\sqrt{n} \cdot \Lambda)$ over the class $\mathsf{CB}_1(\Lambda)$, \Cref{thm:separation-l2-linfty-new} shows that every other $\estPolicy_p$ is \emph{suboptimal} over the class $\mathsf{CB}_1(\Lambda)$.

\version{}{\medskip}

\noindent A formal statement of \Cref{thm:separation-l2-linfty-new} and its proof can be found in \Cref{sec:separations}. The key intuition in the proof is that the $\ell_p$ confidence sets capture a notion of error that is ``averaged'' over all directions, while the $\ell_\infty$ confidence sets separately estimate the error in each direction. In the hard instance we construct, only one direction determines the difficulty of the offline learning problem, so $\estPolicy_p$ is worse by a factor of $d^{1/p}$. There is nothing special about the choice $\Lambda = \sqrt{8d}$, and our construction works for any $\Lambda \ge \Omega(\sqrt{d})$; we pick it to enable comparison with \Cref{thm:lp-lower-bound}.

For sake of discussion, consider $\estPolicy_2$. \Cref{thm:lp-estimator-bound} shows that $\estPolicy_2$ attains the rate:
\begin{align*}
\val(\optPolicy) - \val(\estPolicy_{2}) 
&\lesssim \begin{cases}
d^{1/p} \sqrt{  \frac{\log(d/\confLevel) }{n} } \cdot \norm{\empCov^{-1/2} \E_{\state \sim \testDist} \insquare{ \feature( \state, \optPolicy(\state) )}}_q &\text{when } q \ge 2,\\
\sqrt{ \frac{d \log(d/\confLevel) }{n} } \cdot \norm{\empCov^{-1/2} \E_{\state \sim \testDist} \insquare{ \feature( \state, \optPolicy(\state) )}}_q &\text{when } q \in [1,2].
\end{cases}
\end{align*}
Together, \Cref{thm:lp-lower-bound} and \ref{thm:separation-l2-linfty-new} provide the message that both cases in the upper bound are tight (up to log factors).
In the range $p \in [1,2]$ (or $q \ge 2$), \Cref{thm:lp-lower-bound} shows that $\estPolicy_2$ attains the minimax optimal rate (up to log factors)  over $\mathsf{CB}_{p/(p-1)}(\Lambda)$, i.e.,~it is adaptively minimax optimal here. This explains the curved part of the blue line in \Cref{fig:adaptive-opt}. On the other hand, \Cref{thm:separation-l2-linfty-new} shows that $\estPolicy_2$ cannot obtain the minimax rate over $\mathsf{CB}_1(\Lambda)$. Instead, $\estPolicy_2$ may require $\Omega(d \cdot \Lambda^2)$ samples in order to achieve constant suboptimality. Since for any $p$, the set $\mathsf{CB}_{p/(p-1)}(\Lambda) \supseteq \mathsf{CB}_1(\Lambda)$, we know that $\estPolicy_2$ may require $\Omega(d\cdot \Lambda^2)$ samples for any $\mathsf{CB}_{p/(p-1)}(\Lambda)$. Thus, the second case is tight when $p \ge 2$ (or $q\in[1,2]$), explaining the flat part of the blue line in \Cref{fig:adaptive-opt}. In general, for any finite $\widetilde{p}$, the learning rule $\estPolicy_{\widetilde{p}}$ will be adaptively optimal for $\mathsf{CB}_{p/(p-1)}(\Lambda)$ only in the range $p\in[1,\widetilde{p}]$, and afterwards the sample complexity will ``flatten out'', as illustrated by the purple lines in \Cref{fig:adaptive-opt}.

\begin{figure}[t]
    \centering
    \subfigure[Random rotation]{\includegraphics[width=0.32\linewidth]{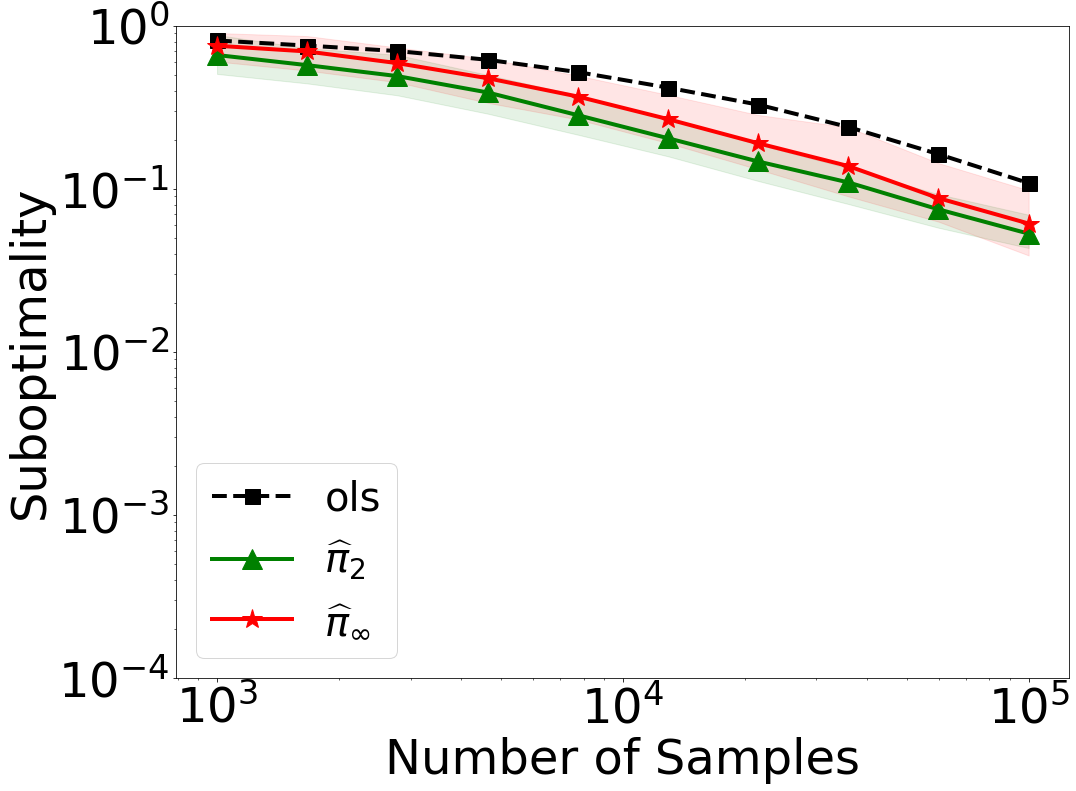}\label{fig:nonidrr}}
    \hfill
    \subfigure[Basis-aligned]{\includegraphics[width=0.32\linewidth]{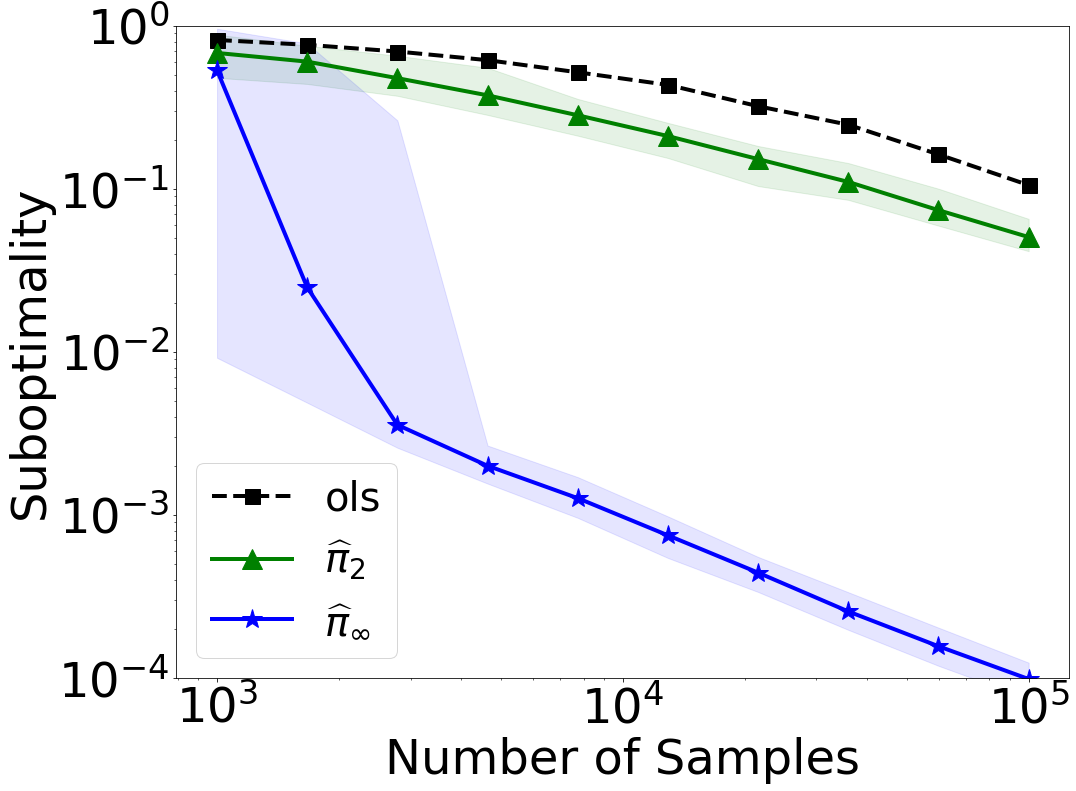}\label{fig:nonidbb}}
    \hfill
    \subfigure[$\ell_1$ and $\ell_2$ complexities]{\includegraphics[width=0.32\linewidth]{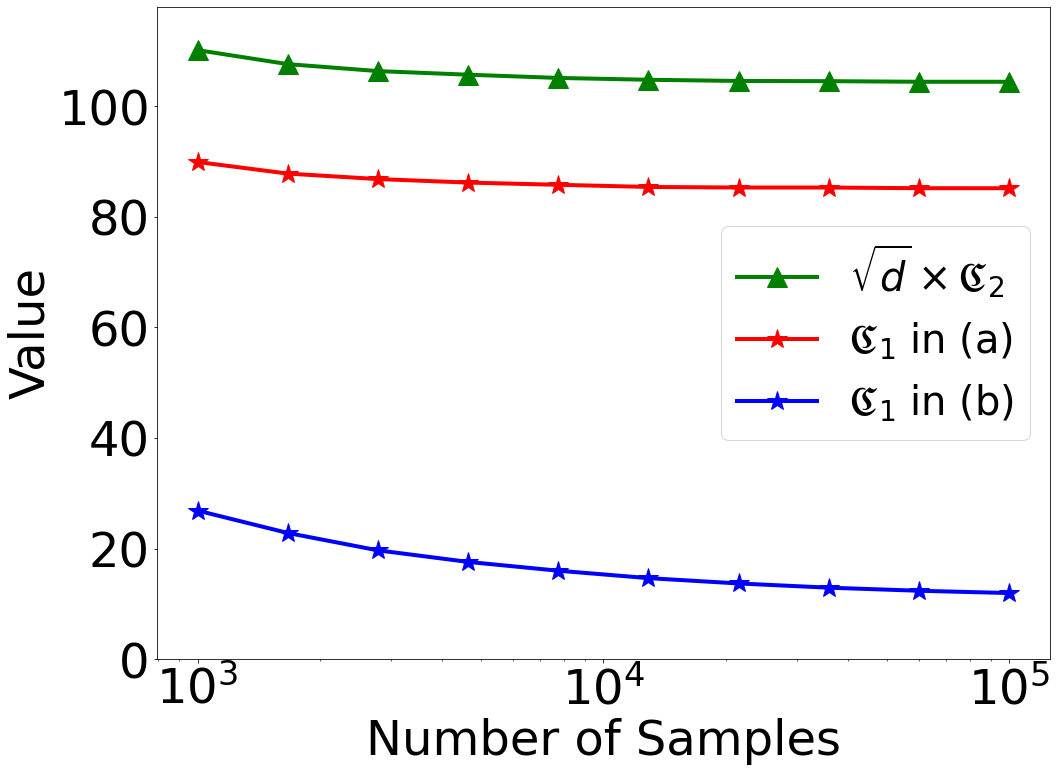}\label{fig:nonidbb}}
    \caption{Comparing the performance of the plug-in rule, $\estPolicy_2$, and $\estPolicy_\infty$ on linear contextual bandit instances with $d=100$, averaged over 100 trials, with 90\% confidence intervals. (a) $\phi_i \sim \calN(0, Q D Q^\top)$ and $\theta^\star = Qe_{20}$, where $Q$ is a random rotation matrix and $D$ is a diagonal matrix with entries $D_{ii} = i^{-1}/(\sum_i i^{-1})$. (b) $\phi_i \sim \calN(0, D)$ and $\theta^\star = e_{20}$. (c) computed average values for $\complexity_1$ and $\sqrt{d}\times \complexity_2$. The quantity $\complexity_2$ is identical in both plots (a) and (b). For (a), $\complexity_1 \approx \sqrt{d} \times \complexity_2$, while for (b), $\complexity_1 \ll \sqrt{d}\times \complexity_2$.}
    \label{fig:unitball}
    \version{\vspace*{-1.5em}}{}
\end{figure}

\paragraph{Experimental Evidence.} In order to further validate this claim, we provide experimental evidence which shows that $\estPolicy_2$ does not adapt to ``easy'' CB instances. In \Cref{fig:unitball}, we consider a simple offline linear contextual bandit in which there is a single state and the feature set is $B_2^d$; thus the policy learning problem is equivalent to finding a vector $\pi \in \mathbb{S}^{d-1}$ that maximizes $V(\pi) \coloneqq \pi^\top \theta^\star$. We vary the offline dataset distribution and the hidden parameter $\theta^\star$. When $\theta^\star$ is basis-aligned, we have $\complexity_1 \ll \sqrt{d}\times \complexity_2$; when $\theta^\star$ is not basis-aligned, the two quantities are on the same order.

\subsection{Comparisons with prior lower bounds}
There exist several lower bound results for offline reinforcement learning in the literature. In this section, we compare our lower bounds (cf.~\Cref{thm:lp-lower-bound}) with the prior bounds and highlight several improvements offered by our results. 

\paragraph{Comparison to lower bounds w.r.t.~a single $\Lambda$.}

Our lower bounds are stronger than those provided in the papers~\citep{jin2021pessimism, zanette2021provable}, which hold for specific choices of $p=q=2$ and a single fixed $\Lambda$. Take Theorem 2 of \citet{zanette2021provable} for example. Zanette et al.~proved that the minimax rate of estimation over $\mathsf{CB}_2(\Lambda = d)$ is given by $d^{3/2} / \sqrt{n}$. Such a lower bound fails to uncover the fundamental scaling on the complexity $\Lambda$.\footnote{For instance, their result does not preclude the possibility that the correct lower bound over $\mathsf{CB}_2(\Lambda)$ takes an expression, say, $d^{-98.5} \Lambda^{100}/\sqrt{n}$.} Theorem 4.7~of \citet{jin2021pessimism} shows a result in similar spirit; their construction essentially shows a minimax lower bound of $1/\sqrt{n}$ over $\mathsf{CB}_2(\Lambda)$ when $\Lambda = \Theta(1)$. Furthermore, their lower bound is loose by a factor of $\sqrt{d}$ since they reduce to a two-point hypothesis testing problem. In contrast, our lower bound holds for nested families of CB instances with \emph{varying} complexities $\Lambda$\version{}{ (cf.~Equation~\eqref{eq:CB-instances})}, which better showcases that the norm quantity is an intrinsic measure of difficulty for offline learning. 

\paragraph{Connections with single-policy concentrability.}
\version{
}{
\begin{figure}[]
    \centering
    \includegraphics[scale=0.16, trim={0cm 10cm 9cm 0cm},clip]{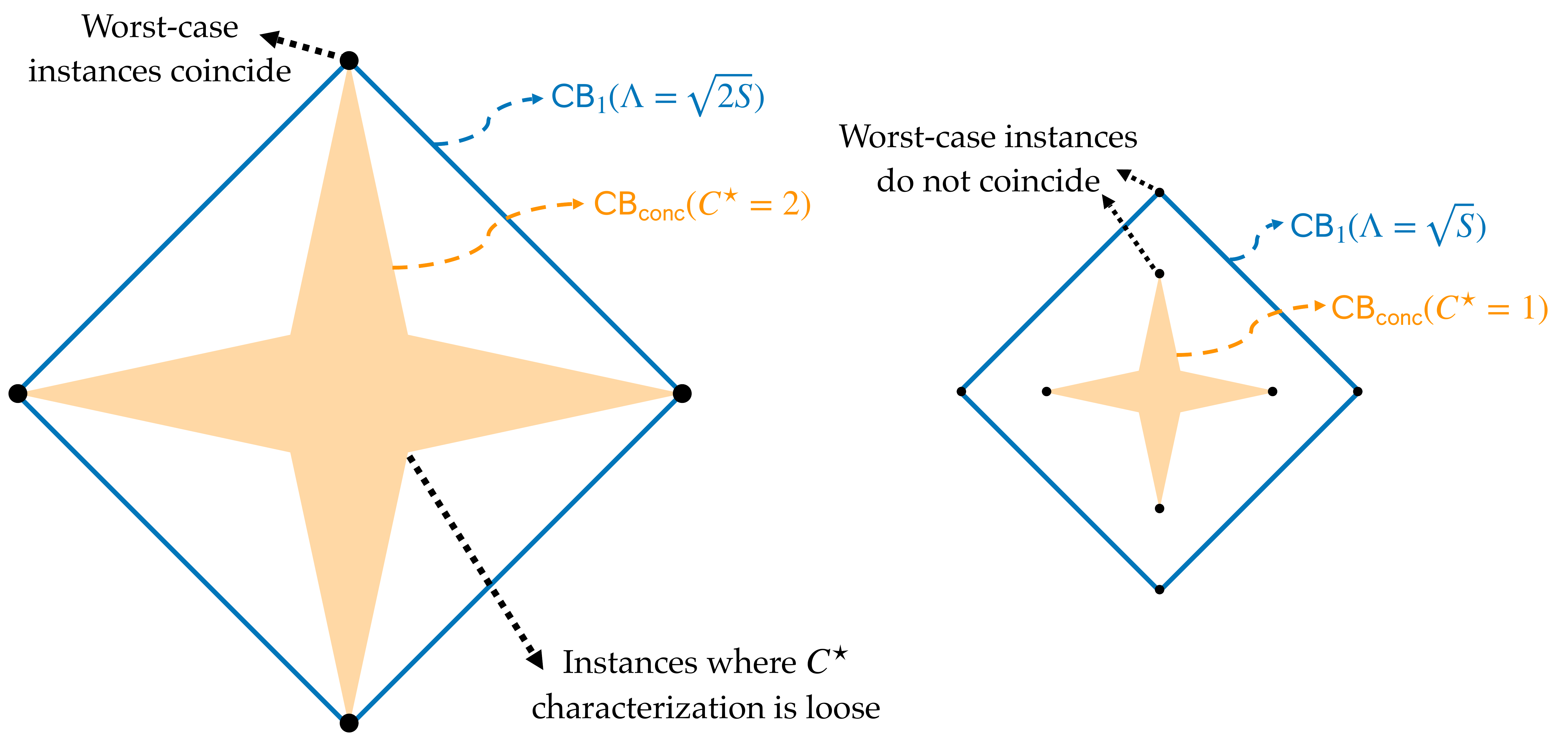}
    \caption{An illustration of the relationship between single policy concentrability and boundedness of $\complexity_1$. Left: When $C^\star = 2$, the complexity $\complexity_1$ always provides a tighter characterization of the problem difficulty, and the worst-case instances coincide. Right: When $C^\star = 1$, the complexity $\complexity_1$ does not provide a tighter characterization (since the minimax rate is $\sqrt{S/n}$); on the other hand, the minimax rate over the $C^\star$ class is the smaller $S/n$.}
    \label{fig:concentrability}
\end{figure}
}
Our lower bound shares a similar flavor as that established in the paper~\citep{rashidinejad2021bridging}, with the key difference lying in the class of CB instances under consideration: \citet{rashidinejad2021bridging} consider the contextual bandit instances $\mathsf{CB}_{\mathrm{conc}}(C^\star)$ with bounded single-policy concentrability coefficient $C^\star$ (cf.~\Cref{def:concentrability}), while we consider the instances with bounded complexity $\complexity_1$. These two quantities are intimately related, and we illustrate the relationship in \Cref{fig:concentrability}. As we have alluded to in \Cref{sec:ub-discussion}, one has the inclusion 
\begin{align*}
\mathsf{CB}_{\mathrm{conc}}(C^\star) \subseteq \mathsf{CB}_1(\sqrt{SC^\star}).
\end{align*}
When $C^\star \ge 2$, the minimax rate of estimation over $\mathsf{CB}_{\mathrm{conc}}(C^\star)$ exactly matches that over $\mathsf{CB}_1(\sqrt{SC^\star})$, which implies that the hard instances for $\mathsf{CB}_{\mathrm{conc}}(C^\star)$ are also the hard instances in $\mathsf{CB}_1(\sqrt{SC^\star})$.
However, this no longer holds when $C^\star \in [1,2)$. Take $C^\star = 1$ as an example. Rashidinejad et al.~show that the optimal rate over $\mathsf{CB}_{\mathrm{conc}}(C^\star=1)$ is $S / n$, while \Cref{thm:lp-lower-bound} indicates that the optimal rate over $\mathsf{CB}_1(\Lambda = \sqrt{S})$ is $\sqrt{S / n}$. There is no contradiction, since the hard instances we construct for $\mathsf{CB}_1(\sqrt{S})$ satisfy $C^\star \ge 2$. This shows that when $C^\star < 2$, we ``lose something'' by working with the larger $\mathsf{CB}_1(\sqrt{SC^\star})$ class, as we are no longer able to achieve the fast rates possible over $\mathsf{CB}_{\mathrm{conc}}(C^\star)$.

On the flip side, the quantity $\complexity_1$ can give tighter suboptimality guarantees than the $C^\star$ bound for a given instance. Consider the tabular instance where $\testDist = \mathrm{Unif}(\stateS)$ and $\mu(1, \optPolicy(1)) = 1/S^3$, while $\mu(\state, \optPolicy(\state)) = 1/S$ for all $s\ge 2$. This instance has $C^\star = S^2$, implying a guarantee of $S^{3/2}/\sqrt{n}$, while $\complexity_1 = O(\sqrt{S})$, implying a better guarantee of $\sqrt{S/n}$.

\version{
    \begin{figure}[t]
        \centering
        \includegraphics[scale=0.14, trim={0cm 10cm 8cm 0cm},clip]{c_star_comparison.pdf}
        \caption{Illustrating the relationship between single policy concentrability and boundedness of $\complexity_1$. Left: When $C^\star = 2$, the quantity $\complexity_1$ always provides a tighter characterization of the problem difficulty, and the worst-case instances coincide. R: When $C^\star = 1$, the quantity $\complexity_1$ does not provide a tight characterization in general.}
        \label{fig:concentrability}
        \vspace*{-1.5em}
    \end{figure}
}{
}

\subsection{A better complexity measure?}\label{sec:better-complexity}
Our results lend support to the claim that we should always use \name{}, since it is simultaneously minimax rate-optimal over all the $\ell_q$ norm-constrained contextual bandit classes. Furthermore, the $\ell_1$ quantity $\complexity_1$ can be thought of as a ``complexity measure'' that dominates other $\ell_q$ ``complexity measures'' $\complexity_q$ for $q > 1$. To see this, consider the following thought experiment. Suppose before solving the linear contextual bandit problem, an oracle told us that the instance satisfies $\complexity_q \le \Lambda$. The results herein show that we do not lose anything by assuming that the instance satisfies the weaker condition $\complexity_1 \le d^{1/p} \Lambda$; using \name{} will give us the optimal rate of $d^{1/p}/\sqrt{n} \cdot \Lambda$.

However this is certainly not the complete answer to guiding question of ``which pessimistic learning rule should one use for offline linear contextual bandits?''. One piece of evidence comes from the comparison with the single policy concentrability assumption: in the regime where $C^\star \in [1,2)$, we do ``lose something'' when we assume the instance satisfies the weaker condition $\complexity_1 \le \sqrt{SC^\star}$. Below we discuss another drawback associated with using $\complexity_1$ as the complexity measure.

\paragraph{Rotation ambiguity. }
One drawback of the complexity $\complexity_1$ (as well as the learning rule \name{}) lies in the fact that it is not rotation invariant. (In fact, $\complexity_2$ is the only rotational invariant complexity!) 
To see this, let $U \in \mathbb{R}^{d \times d}$ be a fixed rotation matrix. Suppose that the features are rotated from $\phi$ to $U \phi$, which yields a different $\ell_1$ complexity 
\version{$\complexity_1(U) \coloneqq \lVert U \empCov^{-1/2} \E_{\state \sim \testDist} \insquare{ \feature( \state, \optPolicy(\state) )}\rVert_1$,}{\begin{align*} \complexity_1(U) \coloneqq 
\norm{U \empCov^{-1/2} \E_{\state \sim \testDist} \insquare{ \feature( \state, \optPolicy(\state) )}}_1,
\end{align*}}
where $\empCov$ is defined using the old feature mapping. Since the $\ell_1$ norm is not rotation invariant, the $\complexity_1(U)$ varies for differing choices of $U$, by up to a $\sqrt{d}$ factor. Thus, we cannot claim that any ``complexity measure'' $\complexity_1(U)$ dominates others. A naive attempt to modify the $\ell_1$ set to be rotationally invariant by taking a minimization over $U$ also fails; observe that:
\begin{align*}
    \Theta_1^\mathrm{min} \coloneqq \inbraces{ \theta \in \R^{d} \mid  \inf_{U} \norm{U\empCov^{1/2}(\theta - \thetaOLS)}_1 \le \beta } = \inbraces{ \theta \in \R^{d} \mid \norm{\empCov^{1/2}(\theta - \thetaOLS)}_2 \le \beta } \eqqcolon \Theta_2,
\end{align*}
that is, we recover $\estPolicy_2$. A similar equivalence holds if we take the $\max$ inside the confidence set; we will recover the confidence set with an extra factor of $\sqrt{d}$.

\version{}{
However, it may be possible to leverage rotation ambiguity to design better heuristics for offline RL. In particular, one can develop iterative algorithms which first learn a good rotation $\hat{U}$ for the data, then solve the $\ell_\infty$ pessimism problem. We leave this to future work.}

\paragraph{Instance-dependent optimality?} 
Arguably, the strongest possible support for \name{} would be an instance-dependent lower bound which shows that for \emph{every specific} linear contextual bandit instance, the performance achieved by \name{} is not improvable. Instance-dependent optimality results have been shown for related problems such as policy evaluation~\citep{pananjady2020instance,khamaru2020temporal} and optimal value estimation~\citep{khamaru2021instance} in tabular MDPs; the recent work \cite{foster2021statistical} also employs the local minimax method for online bandit and RL problems. For offline bandits, the paper \cite{xiao2021optimality} shows how a particular definition of instance optimality cannot be achieved by any algorithm. Establishing instance-dependent guarantees for offline learning is an important direction for future research. 

Recent work~\citep{yin2021towards} establishes the local minimax rate for offline learning in terms of the complexity $\complexity_1$ for tabular contextual bandits. However, the theorem seems incorrect; we provide a counterexample in \Cref{sec:counterexample} to demonstrate---via explicit construction---that the complexity $\complexity_1$ cannot characterize the local minimax risk for a two-armed bandit instance. The key observation is that the reduction used in the proof of the paper \cite{yin2021towards} from offline policy learning to optimal value estimation is invalid; if the gap in rewards for different actions is large, offline policy learning is fundamentally easier than optimal value estimation. This in turn allows us to break the claimed parametric $1 / \sqrt{n}$ rate.


\section{Conclusion}\label{sec:conclusion}

In this paper, we introduce a family $\{\estPolicy_p\}_{p \ge 1}$ of pessimistic learning rules that include a number of prior works as special cases for the problem of offline learning in linear contextual bandits. We prove upper bounds for each learning rule $\estPolicy_p$ and show matching minimax lower bounds over appropriately defined constrained instance classes. Our results highlight the benefits of using \name{}, the $\estPolicy_\infty$ learning rule: namely (1) the guarantee for \name{} dominates all others; (2) \name{} is the sole learning rule with an adaptive minimax property. In particular, our results demonstrate that prior learning rules based on $\ell_2$ pessimism can be suboptimal (by a factor of $\sqrt{d}$).

Below we list several interesting directions for future investigation. 
\version{\vspace{-\topsep}}{}
\version{\begin{itemize}[leftmargin=0.5cm]}{\begin{itemize}}
    \item \emph{Extending to MDPs.} The MDP setting is more difficult due to the long horizon and transition dynamics. Extending the results of this paper to the MDP setting is an interesting future direction. One possible approach is to modify the PACLE algorithm \cite{zanette2021provable} to solve for any $\ell_p$ learning rule. 
    \item \emph{Gap-dependent bounds.} In online RL, there is a wealth of results which adapt to easy instances which are characterized by gap structure in the rewards, see, e.g., \cite{foster2020instance, wagenmaker2022instance}. Obtaining tight gap-dependent bounds for the offline setting is an interesting direction for future work.
    \item \emph{Offline RL with general function approximation.} In this paper, we focus on offline RL with linear function approximation. What is the right extension of these $\ell_p$ learning rules to general function approximation? While $\estPolicy_2$ has the natural interpretation of defining a version space with small squared prediction error, no such interpretation exists for \name{}. It would be interesting to establish an analog for \name{} for general function classes.
\end{itemize}

\version{}{\subsection*{Acknowledgements}
This work is supported by funding from the Institute for Data, Econometrics, Algorithms, and Learning (IDEAL).}

\newpage
\bibliographystyle{plainnat}
\bibliography{lcb_bounds_arxiv}

\newpage
\appendix
\section{Additional literature}\label{sec:related-work}
In this section, we mention a few additional related works which are not discussed in the main text.

First, we mention that offline contextual bandits have an extensive history dating back to early work in recommender systems \cite{li2010contextual, bottou2013counterfactual}. A popular approach is to use importance-reweighting to estimate the values of policies \cite{strehl2010learning, bottou2013counterfactual, swaminathan2015counterfactual, swaminathan2015self}. A key aspect of this approach is that it requires one to estimate or know the behavior policy which generated the offline data. In contrast, our work relies on the principle of pessimism and directly bounds the suboptimality in terms of the quality of the data coverage.

We highlight additional works which employ the principle of pessimism in order to address the issue of dataset coverage in offline RL. In the tabular setting, several additional works study the single policy concentrability assumption \cite{xie2021policy,shi2022pessimistic,yin2021towards, yan2022efficacy, li2022settling}. Several additional works study offline RL with function approximation, e.g.,~\cite{yin2022near, uehara2021pessimistic, uehara2021representation, cheng2022adversarially, zhan2022offline, chen2022offline}.

We also highlight several lower bounds for the offline RL problem. Most relevant to our work are lower bounds for the single policy concentrability assumption in the tabular setting \cite{rashidinejad2021bridging, xie2021policy}. In addition, there is a line of work for offline RL with function approximation which studies the interplay between data coverage and representation power \cite{chen2019information, foster2021offline, zanette2021exponential, wang2020statistical}. These results generally show lower bounds with exponential dependence on $H$. Since we focus on the linear contextual bandit setting ($H=1$), these results are tangential to our discussion.

\section{Proofs for \Cref{sec:pess-est}}

This section gathers the proofs for the results in~\Cref{sec:pess-est}.

\subsection{Proof of \Cref{lemma:valid-lp}}\label{sec:pf-lp-ub}

We decompose the proof into two steps: (1) proving $\theta^\star \in \thetalp$, and (2) proving $\sup_{\theta \in \Theta} \norm{\theta^\star - \theta} \le \beta$.

\paragraph{Step 1: proving $\theta^\star \in \thetalp$.}

It is easy to see that $\empCov^{1/2}(\theta^\star - \thetaOLS) = \frac{1}{\sqrt{n}} (\datamatrix^\top \datamatrix)^{-1/2} \datamatrix^\top \errorvec$, where $\errorvec \in \R^n$ has $i$-th entry equal to $\reward_i - \E[R(\state_i, \action_i)]$. Since each row of $(\datamatrix^\top \datamatrix)^{-1/2} \datamatrix^\top$ has unit $\ell_2$ norm, $[(\datamatrix^\top \datamatrix)^{-1/2} \datamatrix^\top \errorvec]_j$ is a 1-subgaussian random variable for each $1 \leq j \leq d$. 
By the standard tail bound for subgaussian random variables and the union bound, we have 
\begin{align*}
  \P_{\errorvec} \insquare{ \sup_{1 \leq j \leq d} [(\datamatrix^\top \datamatrix)^{-1/2} \datamatrix^\top \errorvec]_{j} \ge \sqrt{2 \log \frac{d}{\confLevel}}} \le \delta.
\end{align*}
As a result, we have with probability at least $1 - \confLevel$ 
\begin{align*}
  \norm{\empCov^{1/2}(\theta^\star - \thetaOLS)}_{p} = 
  \frac{1}{\sqrt{n}} \norm{(\datamatrix^\top \datamatrix)^{-1/2} \datamatrix^\top \errorvec}_{p} 
  \leq d^{1/p} \sqrt{ \frac{ 2 \log(d/\confLevel) }{n} },
\end{align*}
which implies $\theta^\star \in \thetalp$.

\paragraph{Step 2: proving $\sup_{\theta \in \Theta} \norm{\theta^\star - \theta} \le \beta$.}
By the triangle inequality, we have 
\begin{align*}
  \sup_{\theta \in \Theta} \norm{\theta^\star - \theta} \le 
  \sup_{\theta \in \Theta} \norm{\thetaOLS - \theta} + 
  \norm{\thetaOLS - \theta^\star} \leq \frac{\beta}{2} + \frac{\beta}{2} = \beta. 
\end{align*}
Here the second inequality relies on the definition of $\Theta$, and the consequence $\theta_\star \in \Theta$ of Step 1. 

\noindent Combining both steps finishes the proof of this lemma.  

\subsection{Proof of \Cref{cor:tabular}}\label{sec:proof-cor-tabular}
We state and prove a more general version of \Cref{cor:tabular} that provides a guarantee for each $\estPolicy_p$.

\begin{corollary}
In the tabular setting, the learning rule $\estPolicy_p$ with $\Theta$ given by \Cref{eq:thetalp} achieves the suboptimality with probability at least $1-\confLevel$:
\begin{align*}
\val(\optPolicy) - \val(\estPolicy_p) \lesssim (SA)^{1/p} \sqrt{ \frac{\log(SA/\confLevel)}{n} } \cdot \inparen{\sum_{s} \frac{\testDist^q(s)}{(n(\state,\optPolicy(\state))/n)^{q/2}}}^{1/q}.
\end{align*}
as long as $n \gtrsim \log(\numS/\confLevel) \cdot \inparen{\min_s\{\behDist(\state, \optPolicy(s))\} }^{-1}$.
\end{corollary}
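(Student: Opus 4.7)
The plan is to invoke \Cref{thm:lp-estimator-bound} directly and simplify, exploiting the diagonal structure of the empirical covariance in the tabular setting; the hypothesis on $n$ is used only to ensure the relevant inverse exists in the random design.

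First, I would specialize the ingredients. In the tabular setting, $d = SA$ and $\feature(\state, \action) = e_{\state\action}$, so
\[
    \empCov = \frac{1}{n} \sum_{i=1}^n e_{\state_i \action_i} e_{\state_i \action_i}^\top = \mathrm{diag}\inparen{n(\state,\action)/n \,:\, (\state,\action) \in \stateS \times \actionS},
\]
which (on its support) has inverse square root $\empCov^{-1/2} = \mathrm{diag}(\sqrt{n/n(\state,\action)})$. Next, using $\feature(\state, \optPolicy(\state)) = e_{\state, \optPolicy(\state)}$, one computes
\[
    \empCov^{-1/2} \E_{\state \sim \testDist}[\feature(\state, \optPolicy(\state))] = \sum_{\state} \testDist(\state) \sqrt{\frac{n}{n(\state, \optPolicy(\state))}}\, e_{\state, \optPolicy(\state)}.
\]
Since these basis vectors are orthogonal, the $\ell_q$ norm collapses to
\[
    \inparen{\sum_{\state} \frac{\testDist(\state)^q}{(n(\state, \optPolicy(\state))/n)^{q/2}}}^{1/q}.
\]
Plugging this into \Cref{thm:lp-estimator-bound} with $d = SA$ delivers the claimed bound.

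The only subtlety is that \Cref{thm:lp-estimator-bound} requires $\empCov$ to be invertible, whereas under the random design of \citet{rashidinejad2021bridging} some $n(\state, \action)$ may vanish. To handle this, I would observe that the induced confidence set and the dual-norm upper bound~\eqref{eq:linear-upper-bound} depend only on the coordinates $(\state, \optPolicy(\state))$ that actually appear in $\E_{\state \sim \testDist}[\feature(\state, \optPolicy(\state))]$, so after restricting to these coordinates it suffices that $n(\state, \optPolicy(\state)) > 0$ for every $\state$ with $\testDist(\state) > 0$ (alternatively, one could use the regularized variant $\empCov + \lambda I$ noted in the main text). A standard multiplicative Chernoff bound combined with a union bound over $\state \in \stateS$ shows that under the sample-size condition $n \gtrsim \log(\numS/\confLevel)/\min_\state \behDist(\state, \optPolicy(\state))$, with probability at least $1-\confLevel/2$ we have $n(\state, \optPolicy(\state)) \ge \tfrac{1}{2} n \behDist(\state, \optPolicy(\state)) > 0$ simultaneously for every $\state$.

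The main obstacle, modest as it is, lies in this last step: namely, justifying that invertibility of the full $\empCov$ is not really required and that the restricted-coordinate argument yields exactly the stated bound in terms of $n(\state, \optPolicy(\state))/n$ rather than $\behDist(\state, \optPolicy(\state))$. Once that is dispatched, the rest of the argument is a one-line substitution into \Cref{thm:lp-estimator-bound}, absorbing the constant $\sqrt{8}$ into the $\lesssim$.
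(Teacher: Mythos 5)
Your proposal is correct and follows essentially the same route as the paper: specialize \Cref{thm:lp-estimator-bound} using the diagonal structure of $\empCov$ in the tabular setting (so the $\ell_q$ dual norm collapses to the stated sum over states), and use a multiplicative Chernoff bound with a union bound over $\stateS$ under the stated sample-size condition to guarantee the counts $n(\state,\optPolicy(\state))$ are bounded below (hence positive). Your explicit remark that invertibility is only needed on the coordinates supporting $\E_{\state\sim\testDist}[\feature(\state,\optPolicy(\state))]$ is a slightly more careful articulation of a point the paper leaves implicit, but the argument is the same.
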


\begin{proof}
We specialize \Cref{thm:lp-estimator-bound} to the tabular setting. In the tabular setting, we have $d=\numS \numA$. The empirical second moment matrix is a diagonal matrix with values $n(\state,\action)/n$. The vector $\E_{\state\sim \testDist}[\feature(\state, \optPolicy(\state))]$ takes the value in the $(\state,\action)$-th coordinate of $\rho(s) \indicator{a = \optPolicy(s)}$. Thus, we can derive the guarantee that with probability at least $1-\confLevel/2$,
  \begin{align*}
    \val(\optPolicy) - \val(\estPolicy_p) \lesssim (SA)^{1/p} \sqrt{ \frac{\log(SA/\confLevel)}{n} } \cdot \inparen{\sum_{s} \frac{\testDist^q(s)}{(n(\state,\optPolicy(\state))/n)^{q/2}}}^{1/q}.
  \end{align*}
  In order to convert this to a guarantee in terms of the behavior distribution, we use the following lemma.
  
  \begin{lemma}\label{lem:concentration-counts}
  If $n \ge 8 \log(SA/\confLevel) \cdot \inparen{\min_s\{\behDist(\state, \optPolicy(s))\} }^{-1} $, then with probability at least $1-\confLevel$:
  \begin{align*}
    n(\state,\optPolicy(\state)) \ge \frac{n \cdot \behDist(\state,\optPolicy(\state))}{2}, \text{ for all } \state.
  \end{align*}
  \end{lemma}
  \noindent Applying the lemma concludes the proof of the first part of the statement.   
\end{proof}

\noindent It remains to prove the lemma.

\paragraph{Proof of \Cref{lem:concentration-counts}.}
Fix any $s \in \stateS$. Using Chernoff bounds, one has
\begin{align*}
  \Pr \insquare{ n( \state, \optPolicy(\state) ) \le \frac{n}{2} \cdot \behDist( \state, \optPolicy(\state) } \le \exp(-n \behDist( \state, \optPolicy(\state) )/8).
\end{align*}
Thus we have by union bound, 
\begin{align*}
\Pr\insquare{\text{exists } s: n ( \state, \optPolicy(\state) ) \le \frac{n}{2} \cdot \behDist( \state, \optPolicy(\state) ) } 
\le \numS \cdot \exp(-n \behDistmin /8),
\end{align*}
where $\behDistmin \coloneqq \min_s\{\behDist(\state, \optPolicy(s))\}$.
By setting the RHS to $\confLevel$, we prove the result.\qed

\subsection{Tighter analysis for $\estPolicy_2$ in the tabular setting}\label{sec:tabular-l2}

In this section, we point out an issue with adapting the general result of \Cref{thm:lp-estimator-bound} to the tabular setting. This reveals a flaw with pessimism-validity, namely the requirement that $\theta^\star \in \Theta$.

Consider the single policy concentrability assumption of \Cref{def:concentrability}. It is straightforward to compute that for any $q \ge 1$, we have:
\begin{align*}
  \sum_{s} \frac{\testDist^q(s)}{\behDist^{q/2}(\state, \optPolicy(\state))} \le (C^\star)^{q/2} \cdot \sum_{s} \testDist^{q/2}(s) \le (C^\star)^{q/2} \cdot S^{1-q/2}.
\end{align*}
Therefore, in the tabular setting, the $\estPolicy_p$ recovers the guarantee of
\begin{align*}
  \val(\optPolicy) - \val(\estPolicy_p) \lesssim A^{1/p} \sqrt{ \frac{ SC^\star \log(SA/\confLevel) }{n} }.
\end{align*}

Therefore, under the single policy concentrability assumption, all $\estPolicy_p$ attain the minimax-optimal rate of $\tilde{O}(\sqrt{SC^\star/n})$ in the regime $C^\star \ge 2$ \cite{rashidinejad2021bridging}, up to the additional dependence on $A^{1/p}$. The polynomial dependence on $A$ can be removed by considering a smaller confidence set $\Theta$ for the tabular setting and directly analyzing the suboptimality. We illustrate how to do this for $p=2$; it is trivial to extend to all $p$.

\begin{proposition}\label{thm:l2-tight}
Fix any $\delta \in (0,1/2)$. In the tabular setting, the learning rule $\estPolicy_2$ configured with $\beta \coloneqq \sqrt{\frac{16 S \log(SA/\confLevel)}{n}}$ yields the suboptimality with probability at least $1-\delta$:
\begin{align*}
  \val(\optPolicy) - \val(\estPolicy_2) \lesssim \sqrt{ \frac{ SC^\star \log(SA/\confLevel) }{n} },
\end{align*}
as long as  $n \gtrsim \log(\numS/\confLevel) \cdot \inparen{\min_s\{\behDist(\state, \optPolicy(s))\} }^{-1}$.
\end{proposition}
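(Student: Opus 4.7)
The plan is to avoid the naive $\sqrt{\numS\numA}$ radius one would get by plugging the tabular case into \Cref{lemma:valid-lp}, by exploiting the sparsity of $\E_{\state\sim\testDist}[\feature(\state,\pi(\state))]$: for any deterministic $\pi$, this vector is supported on only $\numS$ coordinates, namely $\{(\state,\pi(\state))\}_\state$. So in the Cauchy--Schwarz bound that governs pessimism only the $\numS$ corresponding entries of $\thetaOLS-\theta^\star$ ever matter. The key move is to replace the global pessimism-validity notion of \Cref{def:pess-valid} with a policy-restricted version that only controls $\thetaOLS-\theta^\star$ on each $\pi$'s support, uniformly in $\pi$.

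Concretely, the first step is a pointwise Hoeffding bound: a union bound over the $\numS\numA$ pairs shows that, with probability at least $1-\confLevel/2$, $|\empReward(\state,\action)-\reward(\state,\action)| \le \sqrt{2\log(2\numS\numA/\confLevel)/\n(\state,\action)}$ for every $(\state,\action)$. Squaring, multiplying by $\n(\state,\action)/\n$, and summing over the $\numS$ $\pi$-relevant coordinates gives
\begin{align*}
\sum_\state \tfrac{\n(\state,\pi(\state))}{\n}\bigl(\empReward(\state,\pi(\state))-\reward(\state,\pi(\state))\bigr)^2 \le \tfrac{2\numS \log(2\numS\numA/\confLevel)}{\n} \le (\beta/2)^2
\end{align*}
uniformly in $\pi$, for the stated choice of $\beta$. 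Applying Cauchy--Schwarz restricted to the $\pi$-support then shows that the value estimator used to define $\estPolicy_2$ is pessimistic on the entire policy class. Plugging this into the suboptimality decomposition \eqref{eq:subopt-decomposition} and using the same bound once more with $\pi=\optPolicy$ yields $\val(\optPolicy)-\val(\estPolicy_2) \le \beta\,\bigl\lVert\empCov^{-1/2}\E_{\state\sim\testDist}[\feature(\state,\optPolicy(\state))]\bigr\rVert_2$.

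The remaining step converts the dual-norm factor to single-policy concentrability. \Cref{lem:concentration-counts} (applicable under the assumed lower bound on $\n$) gives $\n(\state,\optPolicy(\state))/\n \ge \behDist(\state,\optPolicy(\state))/2$ for all $\state$ with probability $\ge 1-\confLevel/2$, so
\begin{align*}
\Bigl\lVert\empCov^{-1/2}\E_{\state\sim\testDist}[\feature(\state,\optPolicy(\state))]\Bigr\rVert_2^2 = \sum_\state \frac{\testDist^2(\state)}{\n(\state,\optPolicy(\state))/\n} \le 2\sum_\state \testDist(\state)\cdot\frac{\testDist(\state)}{\behDist(\state,\optPolicy(\state))} \le 2C^\star.
\end{align*}
A union bound over the two high-probability events delivers the claimed $\widetilde{O}(\sqrt{\numS C^\star/\n})$ rate. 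The main obstacle is the first step: the framework of \Cref{def:pess-valid} forces $\theta^\star$ into a global $\ell_2$ ball around $\thetaOLS$, which costs $\sqrt{\numS\numA}$, and one must instead recognize that only a policy-restricted version of pessimism-validity is needed. That restriction is exactly what recovers the missing $\sqrt{\numA}$ factor and brings the guarantee in line with the known minimax rate.
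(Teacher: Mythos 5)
Your proposal is correct and follows essentially the same route as the paper's proof: both bypass the global $(\beta,\delta)$ pessimism-validity requirement by directly bounding the two terms of the decomposition \eqref{eq:subopt-decomposition}, using a pointwise Hoeffding bound over all $(\state,\action)$ pairs together with a Cauchy--Schwarz step restricted to the $\numS$ coordinates on each policy's support (which is where the $\sqrt{\numS}$ rather than $\sqrt{\numS\numA}$ radius comes from), and then converting $\lVert\empCov^{-1/2}\E_{\state\sim\testDist}[\feature(\state,\optPolicy(\state))]\rVert_2$ to $\sqrt{2C^\star}$ via \Cref{lem:concentration-counts}. Your framing of the argument as a policy-restricted relaxation of \Cref{def:pess-valid} is a clean way of articulating exactly the observation the paper makes informally before its proof, namely that insisting on $\theta^\star\in\Theta$ is what costs the extra $\sqrt{\numA}$.
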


\noindent Before we prove the theorem, we remark that this observation implies that the proof techniques relying on assuming $\theta^\star \in \Theta$ (i.e.,~the results in this paper, as well as extensions to general function classes \cite{xie2021bellman}) can be \emph{fundamentally loose}. Indeed one can check that we will not have $\theta^\star \in \Theta$ for the above learning rule with high probability. We leave developing a tighter analysis of pessimism via confidence sets that bypasses this assumption for general function approximation to future work.

\begin{proof}[Proof of \Cref{thm:l2-tight}.]
By \Cref{eq:thetalp-estimator}, we can write the $\estPolicy_2$ in the tabular setting as follows:
\begin{align*}
  \estPolicy_2 \coloneqq \argmax_{\policy \in \policyclass} \estVal(\policy), \quad \text{where } \estVal(\policy) = \inparen{ \sum_\state \testDist(\state) \empReward(\state, \policy(\state)) } - \frac{\beta}{2} \cdot \sqrt{\sum_\state \frac{\testDist^2(\state)}{n(\state, \policy(\state))/n}}.
\end{align*}
We will directly bound the suboptimality in view of \Cref{eq:subopt-decomposition}. We claim that
\begin{subequations}
	\begin{align}
		\val(\optPolicy) - \estVal(\optPolicy) &\lesssim \sqrt{\frac{SC^\star \log(SA/\delta)}{ n } } ; \label{eq:bound-term-1}\\
    \estVal(\estPolicy_2) - \val(\estPolicy_2) &\le 0. \label{eq:bound-term-2}
	\end{align}
\end{subequations}
We will prove the two inequalities conditioned on the following event:
\begin{align*}
  \calE \coloneqq \inbraces{ \text{for all } \state \in \stateS, \quad  r(\state, \optPolicy(\state)) - \empReward(\state, \optPolicy(\state)) \le \sqrt{\frac{4 \log (SA/\confLevel)}{n(\state, \optPolicy(\state))}} }.
\end{align*}
By Hoeffding's inequality and union bound, with probability at least $1-\confLevel$, $\calE$ holds.

\paragraph{Proof of \Cref{eq:bound-term-1}.} We compute the bound that 
\begin{align*}
  \val(\optPolicy) - \estVal(\optPolicy) &= \sum_\state \testDist(\state) \inparen{r(\state, \optPolicy(\state)) - \empReward(\state, \optPolicy(\state))} + \frac{\beta}{2} \cdot \sqrt{\sum_\state \frac{\testDist^2(\state)}{n(\state, \optPolicy(\state))/n}} \\
  &\stackrel{(i)}{\le} \sum_\state \testDist(\state) \sqrt{\frac{4 \log(SA/\delta)}{n(\state, \optPolicy(\state))}} + \frac{\beta}{2} \cdot \sqrt{\sum_\state \frac{\testDist^2(\state)}{n(\state, \optPolicy(\state))/n}} \\
  &\stackrel{(ii)}{\le} \sqrt{\sum_\state \testDist^2(\state) \cdot \frac{4 S\log(SA/\delta)}{n(\state, \optPolicy(\state)) }  } + \frac{\beta}{2} \cdot \sqrt{\sum_\state \frac{\testDist^2(\state)}{n(\state, \optPolicy(\state))/n}}\\
  &\stackrel{(iii)}{=} \sqrt{\sum_\state \testDist^2(\state) \cdot \frac{16 S\log(SA/\delta)}{n(\state, \optPolicy(\state)) }  }.
\end{align*}
Inequality (i) follows by Hoeffding's inequality, (ii) is due to Cauchy-Schwarz, and (iii) is the definition of $\beta$. After applying Chernoff bounds and applying the assumption of single policy concentrability, we see that with probability at least $1-\delta$,
\begin{align*}
  \val(\optPolicy) - \estVal(\optPolicy) &\lesssim \sqrt{\frac{ SC^\star \log(SA/\delta)}{ n } }.
\end{align*}

\paragraph{Proof of \Cref{eq:bound-term-2}.} The proof for this inequality follows a similar outline. Fix any policy $\policy$. We can compute that with probability at least $1-\delta$:
\begin{align*}
  \estVal(\policy) - \val(\policy) &= \sum_\state \testDist(\state) \inparen{\empReward(\state, \policy(\state))  - r(\state, \policy(\state)) } - \frac{\beta}{2} \cdot \sqrt{\sum_\state \frac{\testDist^2(\state)}{n(\state, \policy(\state))/n}} \\
  &\le \sqrt{\sum_\state \testDist^2(\state) \cdot \frac{4 S\log(SA/\delta)}{n(\state, \policy(\state)) }  } - \frac{\beta}{2} \cdot \sqrt{\sum_\state \frac{\testDist^2(\state)}{n(\state, \policy(\state))/n}} \le 0,
\end{align*}
again using Hoeffding's inequality, Cauchy-Schwarz, and the definition of $\beta$.

\medskip
\noindent By combining the two inequalities we prove the proposition. 
\end{proof}

\section{Proof of \Cref{thm:lp-lower-bound}}\label{sec:improved-lower-bound}

The proof of the lower bound relies on a reduction to a bound on the minimax regret of the offline multi-armed bandit problem \cite{xiao2021optimality}. We will utilize the following lemma, which lower bounds the minimax regret for tabular contextual bandit instances.

\begin{lemma}\label{lem:tabular-lb}
Let $\numS$ be arbitrary and $\numA \ge 2$. Fix any test distribution $\rho \in \Delta(\stateS)$ and counts ${\bf n} = \{n(\state, \action)\}_{\state \in \stateS, \action \in \actionS}$. Define the set of contextual bandit instances:
\begin{align*}
    \mathsf{CB}_{\rho, {\bf n}} \coloneqq \inbraces{ R: R(\state,\action) \text{ is } 1\text{-subgaussian for all } (\state,\action) }.
\end{align*}
Then there exists a universal constant $c > 0$ such that
\begin{align}\label{eq:tab-lower-bound}
    \inf_{\estPolicy} \sup_{ \CBinstance \in \mathsf{CB}_{\rho, {\bf n}} } \E[ \optVal_{ \CBinstance } - \val_{ \CBinstance }( \estPolicy ) ] 
    \ge \const \cdot \sum_\state \inparen{ \testDist(\state) \max_{\action \in \{ \actioni{1}, \actioni{2} \} } \frac{1}{\sqrt{n(s,a)}} }.
\end{align}
\end{lemma}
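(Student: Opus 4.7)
The strategy is to exploit the fact that reward observations decouple across distinct samples (and in particular across states) and construct a family of hard instances indexed by $\epsilon \in \{0,1\}^{\stateS}$, then apply Assouad's lemma. The essential per-state mechanism is a two-point Le Cam argument: I would make the optimal action at state $\state$ unidentifiable given only $n_\state^{\star}$ samples on the less-observed arm, where $n_\state^{\star} \coloneqq \min_{\action \in \{\actioni{1}, \actioni{2}\}} n(\state, \action)$. The product structure then aggregates these per-state bounds into the desired sum.

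Concretely, denote $a_\state^{\star} \coloneqq \arg\min_{\action \in \{\actioni{1}, \actioni{2}\}} n(\state, \action)$ and set $\Delta_\state \coloneqq 1/(2\sqrt{n_\state^{\star}})$. For each $\epsilon \in \{0,1\}^{\stateS}$ define the instance $P_\epsilon$ with Gaussian rewards of unit variance (hence $1$-subgaussian): at state $\state$, arm $a_\state^{\star}$ has mean $1/2 + (-1)^{\epsilon_\state} \Delta_\state$, the other arm in $\{\actioni{1}, \actioni{2}\}$ has mean $1/2$, and all remaining arms have mean $0$ (strictly suboptimal). Under $P_\epsilon$, the optimal action at state $\state$ is determined by $\epsilon_\state$, and any policy that mistakes it for its sibling in $\{\actioni{1}, \actioni{2}\}$ incurs per-state suboptimality at least $\Delta_\state$.

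The core calculation is the KL divergence between neighboring instances $P_\epsilon, P_{\epsilon'}$ that differ only at state $\state$: only the $n_\state^{\star}$ samples with $(\state_i, \action_i) = (\state, a_\state^{\star})$ carry different reward laws, so $\mathsf{KL}(P_\epsilon \| P_{\epsilon'}) = n_\state^{\star} \cdot (2\Delta_\state)^2/2 = 1/2$, and hence $\mathsf{TV}(P_\epsilon, P_{\epsilon'}) \leq 1/2$ by Pinsker's inequality. A policy $\estPolicy$ induces a binary decision $\hat\epsilon_\state \coloneqq \indicator{\estPolicy(\state) \neq a_\state^{\star}}$ at each state, and its suboptimality under $P_\epsilon$ dominates $\sum_\state \testDist(\state)\, \Delta_\state \indicator{\hat\epsilon_\state \neq \epsilon_\state}$. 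Assouad's lemma then yields
\begin{align*}
\inf_{\estPolicy} \sup_{\CBinstance \in \mathsf{CB}_{\testDist, {\bf n}}} \E[\optVal_{\CBinstance} - \val_{\CBinstance}(\estPolicy)] \;\gtrsim\; \sum_\state \testDist(\state)\, \Delta_\state \;=\; \sum_\state \frac{\testDist(\state)}{2\sqrt{n_\state^{\star}}},
\end{align*}
which is the claimed bound once one recognizes $1/\sqrt{n_\state^{\star}} = \max_{\action \in \{\actioni{1}, \actioni{2}\}} 1/\sqrt{n(\state, \action)}$.

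The main technical point to justify carefully is the cross-state aggregation. I would do it through the Bayesian form of Assouad: place the uniform prior on $\{0,1\}^{\stateS}$ so that the sup becomes an average, and use that observations across distinct samples are independent---this means the marginal posterior on each coordinate $\epsilon_\state$ depends only on the $n_\state^{\star}$ samples indexed by $(\state, a_\state^{\star})$. The Bayes risk then literally decouples into a sum of per-state risks, each lower bounded by the standard two-point Le Cam calculation above. Edge cases such as $n(\state, a_\state^{\star}) = 0$ are handled by the convention $1/0 = \infty$: in that situation $\Delta_\state$ can be taken arbitrarily large while the KL remains $0$, so the induced per-state suboptimality is unbounded, matching the right-hand side.
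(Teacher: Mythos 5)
Your proof is correct, and it reaches the same bound as the paper by a different, fully self-contained route. The paper's proof delegates both halves of the argument to citations: it invokes Theorem~1 of \citet{xiao2021optimality} for a per-state Bayes lower bound of order $\max_{\action}1/\sqrt{n(\state,\action)}$ against a suitable prior over bandit instances, and then sums these across states via a tensor-product minimax lemma, using the decomposition $\val^\star_\CBinstance-\val_\CBinstance(\policy)=\sum_\state\testDist(\state)(r_\CBinstance(\state,\optPolicy)-r_\CBinstance(\state,\policy))$. You instead build the hard family explicitly---unit-variance Gaussians with the under-sampled arm's mean perturbed by $\pm\tfrac{1}{2}\,n(\state,a^\star_\state)^{-1/2}$---and run Assouad with the uniform product prior on $\{0,1\}^{\stateS}$; the per-coordinate KL of $1/2$ and the reduction of suboptimality to a weighted Hamming loss are both verified correctly (including the case where $\estPolicy(\state)$ lands on a third, mean-zero arm), and the product structure of the data is exactly what lets the coordinatewise Le~Cam bounds add up. The two proofs share the same architecture (per-state two-point testing plus Bayesian aggregation across states); what yours buys is independence from the external theorem and an explicit universal constant ($c=1/8$ from $\tfrac14\cdot\tfrac{1}{2\sqrt{n^\star_\state}}$), while the paper's is shorter on the page and inherits whatever robustness Xiao et al.'s construction provides. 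One cosmetic remark: the bound only requires perturbing arms $\actioni{1},\actioni{2}$, which is exactly what you do, so the restriction of the max to those two actions in the statement is matched by your construction.
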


\begin{proof}
Theorem 1 of \citet{xiao2021optimality} proves a lower bound on the Bayes suboptimality for any multi-armed bandit instance. Specifically, they show that for any $A$-armed bandit problem, one can define a collection of instances $\calB$; for any sequence of counts ${\bf \n} = \{n(a)\}_{a \in \actionS}$, the Bayes suboptimality is at least:
\begin{align}\label{eq:bayes-lb}
    \inf_{\estPolicy} \E_{\CBinstance \sim \mathrm{Unif}(\calB)}[\reward^\star_\CBinstance - \reward_\CBinstance(\estPolicy)] \gtrsim \max_{a \in \actionS} \frac{1}{\sqrt{n(a)}}.
\end{align}
In order to prove a guarantee for the \emph{tabular contextual bandit problem}, we will tensorize this result. Intuitively, we can treat the estimation of the policy for each state as a separate multi-armed bandit policy estimation problem. Formally, the Bayes suboptimality bound of \Cref{eq:bayes-lb} allows us to do this.
Note that the suboptimality for any policy $\policy$ can be written as
\begin{align*}
    \val_\CBinstance^\star - \val_\CBinstance(\policy)  = \sum_{\state} \testDist(\state) \cdot \inparen{ r_\CBinstance(\state, \optPolicy) - r_\CBinstance(\state, \policy) }.
\end{align*}
Therefore, we can lower bound the minimax suboptimality of the contextual bandit problem as the sum of the Bayes suboptimalities for the $S$ individual bandit problems:
\begin{align*}
    \inf_{\estPolicy} \sup_{ \CBinstance \in \mathsf{CB}_{\rho, {\bf n}} } \E[ \optVal_{ \CBinstance } - \val_{ \CBinstance }( \estPolicy ) ]
    \ge \sum_\state \testDist(\state) \cdot \inparen{ \inf_{\estPolicy_\state} \ \E_{\calQ_s \sim \mathrm{Unif}(\calB_\state)} \insquare{ \reward^\star_{\CBinstance_s} - \reward_{\CBinstance_s}(\estPolicy_s)} }.
\end{align*}
Here, $\estPolicy_s$ is taken to be any estimation procedure that looks only at the subset of the dataset over state $\state$; the collection $\calB_s$ is the set of bandit instances defined by \citet{xiao2021optimality} (for the counts $\{n(\state,\action)\}_{\action\in\actionS}$). The inequality follows by the lower bound for the minimax risk of a tensor product (see, e.g.,~\cite{wunotes}, Theorem 3.1). The proof concludes by applying the bounds on the Bayes risk for each state in \eqref{eq:bayes-lb} to the previous display.
\end{proof}

\noindent With \Cref{lem:tabular-lb} in hand, we now prove the theorem.

\medskip
\paragraph{Proof of \Cref{thm:lp-lower-bound}.} The family of hard instances we construct are tabular contextual bandits. We begin by describing the state and action spaces. Define $S = d/2$ (in the case where $d$ is odd, we set $S = \floor{d/2}$ and add a ``dummy'' coordinate to the tabular feature mapping, which does not affect the rest of the proof). We set $\stateS = \{1, 2, \dots, \numS\}$ and $\actionS = \{ \actioni{1}, \actioni{2} \}$. Thus, we use the tabular feature mapping, \ie $\feature(\state,\action) = e_{\state,\action}$.

In view of \Cref{lem:tabular-lb}, our task is as follows. For each tuple $(\Lambda, q, n)$ we need to supply a test distribution $\rho$ and counts ${ \bf n }$ which satisfies two properties:
\version{\vspace{-\topsep}}{}
\version{\begin{enumerate}[leftmargin=0.5cm]}{\begin{enumerate}}
    \item $\mathsf{CB}_{\rho, {\bf n}} \subseteq \mathsf{CB}_q(\Lambda)$. In other words, we need $\rho$ and ${\bf n}$ to satisfy the inequality
    \begin{align*}
        \lVert \empCov^{-1/2} \E_{\state \sim \testDist} [\phi(\state, \optPolicy_v(\state))] \rVert_q = \inparen{ \sum_\state \testDist^q(\state) \max_{\action \in \{ \actioni{1}, \actioni{2} \} } \frac{1}{(n(\state,\action)/n)^{q/2}} }^{1/q} \leq \Lambda.
    \end{align*}
    \item The RHS of \Cref{eq:tab-lower-bound} is sufficiently large:
    \begin{align*}
        \sum_\state \testDist(\state) \max_{\action \in \{ \actioni{1}, \actioni{2} \} } \frac{1}{\sqrt{n(s,a)/n}}  \gtrsim d^{1/p} \Lambda.
    \end{align*}
\end{enumerate}

\paragraph{Hard instance construction.}
Fix any value of $\Lambda \ge \sqrt{8} d^{1/2-1/p}$. We will utilize the construction from \citet{rashidinejad2021bridging}. Let us set:
\begin{align*}
    \testDist = \mathrm{Unif}(\{1, \dots, S\}), \quad \text{and} \quad 
    \begin{array}{l}
        n(\state, \actioni{1}) = \floor{n/S \cdot (1/\Gamma)}, \\
        n(\state, \actioni{2}) = \floor{n/S \cdot  (1-1/\Gamma)}.
    \end{array}
\end{align*}
The parameter $\Gamma$ is set to be $\Gamma = S^{2/p-1}\Lambda^2/2$; without loss of generality we will require $\Gamma \ge 2$ so that $n(\state, \actioni{1}) \le n(\state, \actioni{2})$. Note that as long as $n \ge S^{2/p}\Lambda^2 = 2S\Gamma$ (which holds whenever $n \ge d^{2/p}\Lambda^2$), we have $n(\state, \actioni{1}) \ge 1/2 \cdot n/S \cdot (1/\Gamma)$ and $n(\state, \actioni{2}) \ge 1/2 \cdot n/S \cdot (1-1/\Gamma)$. It is easy to calculate the bounds that:
\begin{align*}
    \sum_\state \testDist^q(\state) \max_{\action \in \{ \actioni{1}, \actioni{2} \} } \frac{1}{(n(\state,\action)/n)^{q/2}}  &\le \Lambda^q,\\
    \sum_\state \testDist(\state) \max_{\action \in \{ \actioni{1}, \actioni{2} \} } \frac{1}{\sqrt{n(\state,\action)/n}} &\ge S^{1/p} \Lambda/\sqrt{2} \gtrsim d^{1/p} \Lambda,
\end{align*}
thus the conditions are satisfied. Lastly, we note that in order to ensure $\Gamma \ge 2$, we need $\Lambda \ge 2 S^{1/2-1/p}$; using the fact that $S = d/2$, a sufficient condition is $\Lambda \ge \sqrt{8} d^{1/2-1/p}$. By reduction to the contextual bandit lower bound, this proves the result.

\paragraph{Improving the range of $\Lambda$ for $q=1$.} We show how we can prove the theorem for an extended range of $\Lambda$ when $q=1$. Fix any value of $\Lambda \ge 2$. We will slightly tweak the previous construction to be $T$-sparse for some $T \in \{1, \dots, S\}$. Let us set:
\begin{align*}
    \testDist = \mathrm{Unif}(\{1, \dots, T\}), \quad \text{and} \quad 
    \begin{array}{l}
        n(\state, \actioni{1}) = \floor{n/T \cdot \Gamma}, \\
        n(\state, \actioni{2}) = \floor{n/T \cdot (1-1/\Gamma)}.
    \end{array}
\end{align*}
We will pick parameters $T$ and $\Gamma$ such that $\Gamma \ge 2$ and $T \coloneqq \floor{\Lambda^2/(2\Gamma)}$. As long as $\Lambda \ge 2$, such a choice is indeed valid because when $\Gamma = 2$ (the smallest possible choice for $\Gamma$), we have $\Lambda^2/\Gamma \ge 1 \Rightarrow T \ge 1$; also, we can always pick $\Gamma$ sufficiently large so that $T \le S$. In addition, we note that under the sample complexity requirement $n \ge \Lambda^2 \ge 2T\Gamma$, we have $n(\state, \actioni{1}) \ge 1/2 \cdot n/T \cdot 1/\Gamma$ and $n(\state, \actioni{2}) \ge 1/2 \cdot n/T \cdot (1-1/\Gamma)$.

Now it remains to verify the conditions. We calculate that
\begin{align*}
     \Lambda \lesssim \sqrt{\Gamma T} \le \sum_\state \testDist(\state) \max_{\action \in \{ \actioni{1}, \actioni{2} \} } \frac{1}{\sqrt{n(s,a)/n}} \le \Lambda.
\end{align*}
Thus, in the case where $p=\infty$, $q=1$, we have proven that the instance satisfies both conditions.\qed

\subsection{Limitations of the lower bound}\label{sec:lb-limitations}

We discuss in more detail the technical challenges with strengthening the lower bound by proving that the condition holds over a larger range of $\Lambda$. Since we argue that $\Lambda$ roughly corresponds to the difficulty of the offline policy learning problem, it would be desirable to show that the lower bound still holds even when $\Lambda$ is small. We are able to do this for the $p=\infty, q=1$ case, but extending this construction to finite $p$ seems challenging.

\paragraph{Fundamental limitation of tabular design.} First, we observe that there is already a fundamental limitation in the reduction to contextual bandits; namely, we cannot hope to prove \Cref{thm:lp-lower-bound} for any value of complexity parameter $\Lambda \le O(1/d^{1/p})$. Observe that for any tabular contextual bandit, the key $\ell_1$ quantity takes the form:
\begin{align*}
    \sum_\state \testDist(\state) \max_{\action \in \{ \actioni{1}, \actioni{2} \} } \frac{1}{\sqrt{n(s,a)/n}}.
\end{align*}
Since $\testDist \in \Delta(\stateS)$ and $n(s,a) \le n$ for all $(s,a)$ pairs, this quantity must be $\Omega(1)$. Using the identity $\norm{v}_1 \le d^{1/p} \norm{v}_q$, we see that it is not even possible to construct tabular contextual bandit instances unless $\Lambda \ge \Omega(d^{-1/p})$. Thus we must look beyond tabular lower bound constructions.

\paragraph{Extending $\Lambda$ range for $q \in (1,2]$.} We provide an example of how extend the lower bound to a larger range of $\Lambda$ when $q \in (1,2]$ using a more complicated tabular construction. The catch is that the sample complexity increases from $n \ge d^{2/p} \Lambda^2$ to $n \ge d^{2/q} \Lambda^2$. We conjecture that this is an artifact of our analysis, and that it is possible to prove the lower bound under the requirement $n \ge d^{2/p} \Lambda^2$ (which would allow us to show a lower bound that takes all values $d^{1/p} \Lambda /\sqrt{n} \in (0, O(1)]$).

\begin{proposition}
    For every $d \geq 2$, there exists a feature mapping $\feature$ such that the following lower bound holds. Fix any $q \in (1,2]$. As long as $\Lambda \ge \sqrt{12}$ and $n \ge d^{2/q} \Lambda^2$, we have
    \begin{align*}
        \inf_{\estPolicy} \sup_{\CBinstance \in \mathsf{CB}_q(\Lambda)} \E
        [\optVal_{ \CBinstance } - \val_{ \CBinstance }( \estPolicy ) ] \ge c \cdot  d^{1/p} \sqrt{ \frac{1}{n} } \cdot \Lambda,
    \end{align*}
    where $c > 0$ is some universal constant.
\end{proposition}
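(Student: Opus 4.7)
The plan is to mirror the proof of \Cref{thm:lp-lower-bound} by appealing to the tabular reduction in \Cref{lem:tabular-lb}. Specifically, with $\numS = d/2$ states and $\numA = 2$ actions under tabular features $\phi(s, a) = e_{s, a}$, the goal is to exhibit a test distribution $\rho$ and count profile $\{n(s, a)\}$ such that (i) every bandit instance $\calQ$ arising from any $1$-subgaussian reward $R$ lies in $\mathsf{CB}_q(\Lambda)$, i.e., $\lVert \empCov^{-1/2}\E_{\state \sim \testDist}[\phi(\state, \optPolicy(\state))] \rVert_q \le \Lambda$, and (ii) the per-state contributions $\sum_\state \testDist(s) \max_{a \in \{a^{(1)}, a^{(2)}\}} 1/\sqrt{n(\state, a)}$ are at least $c \cdot d^{1/p} \Lambda/\sqrt{n}$. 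Then \Cref{lem:tabular-lb} delivers the desired minimax bound.

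I would split into two regimes. For $\Lambda \ge \sqrt{8} \cdot d^{1/q - 1/2}$, the construction used in \Cref{thm:lp-lower-bound} applies verbatim: note that for $q \in (1,2]$, one has $p = q/(q-1) \geq q$, hence $d^{2/q} \ge d^{2/p}$, so the proposition's stronger sample complexity $n \ge d^{2/q}\Lambda^2$ already implies $n \ge d^{2/p}\Lambda^2$ and the counts used there are integer-valued. The substantive case is $\sqrt{12} \le \Lambda < \sqrt{8} \cdot d^{1/q - 1/2}$. Here I would consider a $T$-sparse uniform test distribution $\testDist = \mathrm{Unif}(\{1, \dots, T\})$ with a data split $(x, 1 - x)$ across the two actions within each of the $T$ active states. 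The $\ell_q$ constraint translates to $x \ge T^{2/q - 1}/\Lambda^2$, while condition (ii) requires the effective sparsity $T$ to be $\Theta(d)$ so that the sum $\Lambda T^{1 - 1/q}$ matches $d^{1 - 1/q}\Lambda$. The inflated sample budget $n \ge d^{2/q}\Lambda^2$ is what permits the resulting integer counts $n(s, a^{(k)}) = \lfloor n \cdot \mu(s, a^{(k)}) \rfloor$ to remain positive throughout this regime.

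The main obstacle is reconciling $T = \Theta(d)$ with the probability constraint $x \le 1/2$: naively, $x = T^{2/q - 1}/\Lambda^2$ exceeds $1/2$ once $\Lambda$ drops below $\sqrt{2}(d/2)^{1/q - 1/2}$, which is precisely the threshold separating the two regimes. I would try to circumvent this by enlarging the per-state action set (adding suboptimal $a^{(3)}, \dots, a^{(A)}$ that absorb probability mass), so that the two ``hard'' actions $a^{(1)}, a^{(2)}$ are no longer constrained to jointly consume all of the state's probability budget, while \Cref{lem:tabular-lb}'s bound over just $\{a^{(1)}, a^{(2)}\}$ remains in force. A complementary ingredient would be a two-phase construction---combining a balanced sub-family of states (contributing to the tabular-lemma lower bound) with a concentrated sub-family (tightening the $\ell_q$ constraint)---to interpolate between the $\ell_1$ construction used for $p = \infty$ and the balanced tabular construction. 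Verifying that such a modified design simultaneously meets (i) and (ii) under the sample complexity $n \ge d^{2/q}\Lambda^2$, and then invoking \Cref{lem:tabular-lb} with the resulting $(\rho, \mathbf{n})$, is the crux of the proof.
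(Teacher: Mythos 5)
Your reduction to \Cref{lem:tabular-lb} and your diagnosis of the obstruction are both correct: with a uniform test distribution on $T=\Theta(d)$ states, any within-state allocation forces $\min_{a}n(\state,\action)\le n/(2T)$, hence the $\ell_q$ quantity is at least $\sqrt{2}\,T^{1/q-1/2}$, so the inclusion $\mathsf{CB}_{\rho,\mathbf n}\subseteq\mathsf{CB}_q(\Lambda)$ genuinely fails below that threshold and the design must change. The problem is that your proposal stops exactly where the proof has to start: you name two candidate fixes and explicitly defer ``the crux.'' The first fix (padding each state with dummy actions $\actioni{3},\dots,\actioni{A}$) is a dead end. In the fixed-design setting there is no per-state probability budget to ``absorb''---the only global constraint is $\sum_{\state,\action}n(\state,\action)=n$---and the obstruction above is driven by $\min_{\action\in\{\actioni{1},\actioni{2}\}}n(\state,\action)$, which dummy actions can only decrease (they consume samples), making the $\ell_q$ norm larger, not smaller.

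Your second fix (a concentrated sub-family plus a balanced sub-family) is indeed the idea the paper uses, but instantiating and verifying it \emph{is} the content of the proposition, and none of that appears in your write-up. Concretely, the paper adds a single heavy state $0$ with test mass $\rho_0=1-S^{1-2/q}/4$ and split parameter $\Gamma_0=\Lambda^2/(8\rho_0)$, and spreads the residual mass $1-\rho_0$ uniformly over $S$ light states with $\Gamma_1=\Lambda^2/2$. The choice $1-\rho_0=S^{1-2/q}/4$ is the whole trick: the heavy state contributes $(\Lambda/2)^q$ to the $q$-th power of the $\ell_q$ quantity and the $S$ light states contribute another $(\Lambda/2)^q$ in total, so the constraint $\le\Lambda^q$ holds, while in the $\ell_1$ sum the light states contribute $\sqrt{S(1-\rho_0)\Gamma_1}\asymp S^{1-1/q}\Lambda=S^{1/p}\Lambda$, which dominates and gives the claimed rate. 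One must also check that $\Gamma_0,\Gamma_1\ge 2$ (this is where $\Lambda\ge\sqrt{12}$ enters) and that every count is a positive integer within a constant factor of its nominal value---each light state receives only about $nS^{-2/q}/4$ samples, which is where the inflated requirement $n\ge d^{2/q}\Lambda^2$ (rather than $d^{2/p}\Lambda^2$) comes from. Without these parameter choices and verifications, the argument for the regime $\sqrt{12}\le\Lambda<\sqrt{8}\,d^{1/q-1/2}$ is missing; the easy regime you handle by citing \Cref{thm:lp-lower-bound} is correct but is not the substance of the statement.
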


\begin{proof}
We will modify the construction. We redefine the state space to be $\{0, 1, \dots, S\}$ (adding an extra state). This only changes everything by a constant at most but makes the definitions simpler. State 0 is a special state with:
\begin{align*}
    \testDist(0) = \rho_0, \quad \text{and} \quad 
    \begin{array}{l}
        n(0,\actioni{1}) = \floor{n\rho_0 / \Gamma_0}, \\
        n(0,\actioni{2}) = \floor{n\rho_0\inparen{1-1/\Gamma_0}}.
    \end{array}
\end{align*}
The other states $\state \ge 1$ are set as follows:
\begin{align*}
    \testDist(\state) = (1-\rho_0)/S, \quad \text{and} \quad 
    \begin{array}{l}
        n(s,\actioni{1}) = \floor{n\frac{1-\rho_0}{S} \cdot \frac{1}{\Gamma_1}}, \\
        n(s,\actioni{2}) = \floor{n\frac{1-\rho_0}{S}\inparen{1-\frac{1}{\Gamma_1}}}.
    \end{array}
\end{align*}
Again, as long as $n \ge 2 \max\inparen{\Gamma_0/ \rho_0, S \Gamma_1 / (1-\rho_0) }$, we have
\begin{align*}
    &n(0,\actioni{1}) \ge 1/2 \cdot (n\rho_0 / \Gamma_0),
    &n(0,\actioni{2}) \ge 1/2 \cdot n\rho_0\inparen{1-1/\Gamma_0}, \\
    &n(s,\actioni{1}) \ge 1/2 \cdot n\frac{1-\rho_0}{S} \cdot \frac{1}{\Gamma_1}, 
    &n(s,\actioni{2}) \ge n\frac{1-\rho_0}{S}\inparen{1-\frac{1}{\Gamma_1}}.
\end{align*}
We will now pick the values of $\rho_0 \in [0,1]$ and $\Gamma_0, \Gamma_1 \ge 2$ in order to satisfy the properties enumerated in the proof of \Cref{thm:lp-lower-bound}.

First, we compute expressions for the bounds. We see that
\begin{align*}
    \sum_\state \testDist(\state) \max_{\action \in \{ \actioni{1}, \actioni{2} \} } \frac{1}{\sqrt{n(\state,\action)/n}} &\ge \sqrt{\rho_0 \Gamma_0} +  \sqrt{S(1-\rho_0)\Gamma_1}, \\
    \inparen{ \sum_\state \testDist^q(\state) \max_{\action \in \{ \actioni{1}, \actioni{2} \} } \frac{1}{(n(\state,\action)/n)^{q/2}} }^{1/q} &\le \inparen{(2\rho_0 \Gamma_0)^{q/2} + S^{1-q/2} (2(1-\rho_0)\Gamma_1)^{q/2} }^{1/q},
\end{align*}
We pick
\begin{align*}
    \rho_0 = 1 - S^{1-2/q}/4, \quad \Gamma_0 = \Lambda^2/(8\rho_0), \quad \Gamma_1 = \Lambda^2/2.
\end{align*}
One can calculate that
\begin{align*}
     &\sum_\state \testDist^q(\state) \max_{\action \in \{ \actioni{1}, \actioni{2} \} } \frac{1}{(n(\state,\action)/n)^{q/2}}  \le 2(\Lambda/2)^q \le \Lambda^q,\\
     &\sum_\state \testDist(\state) \max_{\action \in \{ \actioni{1}, \actioni{2} \} } \frac{1}{n(\state,\action)/n} \ge \Lambda/2 + S^{1/p}\Lambda/2 \gtrsim S^{1/p}\Lambda.
\end{align*}
When $q\in (1,2]$, we have the bound that $\rho_0 \in (3/4, 1]$. In order for $\Gamma_0$ and $\Gamma_1$ to satisfy the requirement that $\Gamma_0, \Gamma_1 \ge 2$, we require $\Lambda \ge \sqrt{12}$. In terms of sample complexity, we require
\begin{align*}
    n \ge \max\inbraces{\frac{4}{9} \Lambda^2,  2 S^{2/q} \Lambda^2},
\end{align*}
which is satisfied whenever $n \ge d^{2/q} \Lambda^2$.
\end{proof}

\section{Separations between pessimistic learning rules}\label{sec:separations}

We begin by stating a formal version of \Cref{thm:separation-l2-linfty-new}.
\begin{theorem}\label{thm:separation-l2-linfty-formal}
Fix any $p \ge 1$. Fix any dimension $d\ge 20$, sample size $n \ge 9d^3$, and $\xi(d,n)$, where $\xi: \mathbb{Z}_{+}\times \mathbb{Z}_{+} \to \R_{+}$ is a functional that returns a positive real that satisfies $\xi \ge K_\xi $ for an absolute numerical constant $K_\xi > 0$. Then there exists a contextual bandit instance $\CBinstance \in \mathsf{CB}_1(\sqrt{8d})$ such that:
\version{\vspace{-\topsep}}{}
\version{\begin{enumerate}[leftmargin=0.5cm]}{\begin{enumerate}}
  \item With probability at least $1/4$, the learning rule $\estPolicy_p$ configured with $\beta = \xi \cdot d^{1/p} / \sqrt{n}$ has suboptimality at least
  \begin{align*}
    \val(\optPolicy) - \val( \estPolicy_p ) \ge \frac{K_\xi}{\sqrt{8}} \cdot \frac{d^{1/p+1/2}}{\sqrt{n}}.
  \end{align*}
  This implies a lower bound on the expected suboptimality of $\Omega(K_\xi \cdot d^{1/p+1/2}/\sqrt{n})$.
  \item The learning rule $\estPolicy_\infty$ configured with $\beta = \sqrt{\frac{8\log (K_\xi d^{5/2}/\sqrt{8})}{n}}$ has expected suboptimality at most
  \begin{align*}
    \E_\dataset \insquare{ \val(\optPolicy) - \val( \estPolicy_\infty ) } \le c \cdot \sqrt{\frac{d \log(K_\xi d)}{n}},
  \end{align*}
  where $c > 0$ is an absolute numerical constant.
\end{enumerate}

\end{theorem}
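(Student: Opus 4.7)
The plan is to exhibit a single instance $\calQ \in \mathsf{CB}_1(\sqrt{8d})$ that witnesses the failure of $\estPolicy_p$ and at the same time admits a tight application of \Cref{thm:lp-estimator-bound} to $\estPolicy_\infty$. Take $\stateS = \{s_0\}$ and $\actionS = \{a^\star, a_0, a'_1, \ldots, a'_d\}$ with features $\feature(s_0, a^\star) = \sqrt{8}\,e_1$, $\feature(s_0, a_0) = 0$, and $\feature(s_0, a'_j) = e_j$, and arrange the offline dataset to visit each design action $a'_j$ exactly $n/d$ times, yielding $\empCov = I/d$ (so $\empCov^{-1/2} = \sqrt{d}\,I$). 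Setting
\[
\theta^\star = \tfrac{\epsilon}{\sqrt{8}}\,e_1 - \sum_{j\ge 2} e_j, \qquad \epsilon \coloneqq \tfrac{K_\xi}{\sqrt{8}} \cdot \tfrac{d^{1/p+1/2}}{\sqrt{n}},
\]
gives $\val(a^\star) = \epsilon$, $\val(a_0) = 0$, $\val(a'_1) = \epsilon/\sqrt{8}$, and $\val(a'_j) = -1$ for $j \ge 2$, so $\optPolicy$ plays $a^\star$. A direct computation yields $\complexity_1 = \|\empCov^{-1/2} \sqrt{8}\,e_1\|_1 = \sqrt{8d}$, certifying $\calQ \in \mathsf{CB}_1(\sqrt{8d})$.

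For part (1), I expand $\estPolicy_p$ via the max-only formulation \eqref{eq:thetalp-estimator}. The OLS coordinates $\zeta_j \coloneqq \thetaOLS_j - \theta^\star_j$ are independent and $\sqrt{d/n}$-subgaussian, of scale much smaller than the pessimism bonus (namely $\sqrt{d}\beta/2 = (\xi/2)\,d^{1/p+1/2}/\sqrt{n}$ for each $a'_j$ and $\sqrt{2}\xi \cdot d^{1/p+1/2}/\sqrt{n}$ for $a^\star$, larger than the noise standard deviation by a factor of $d^{1/p}$). A direct calculation shows that $a^\star$ and $a'_1$ both have negative pessimistic value precisely when $\zeta_1 < (\xi/2 - K_\xi/8)\,d^{1/p+1/2}/\sqrt{n}$; standardizing and using $\xi \ge K_\xi$, this is implied by $Z_1 < 3K_\xi/8$ for a standard normal $Z_1$, which has probability at least $\Phi(3K_\xi/8) > 1/2$. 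Each remaining action $a'_j$ ($j \ge 2$) has pessimistic value $-1 + \zeta_j - (\xi/2)\,d^{1/p+1/2}/\sqrt{n}$, which is negative on an event of probability $1 - d\exp(-n/(2d)) = 1 - o(1)$ under $n \ge 9d^3$. Intersecting these two events yields probability at least $1/4$ that $\estPolicy_p$ selects $a_0$, producing sub-optimality $\epsilon$ exactly as claimed; the expected-value statement follows by multiplying by this probability.

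For part (2), I invoke \Cref{thm:lp-estimator-bound} with $p = \infty$, $q = 1$: matching the prescribed $\beta_\infty$ corresponds to $\confLevel = \sqrt{8}/(K_\xi d^{3/2})$, so on the event of probability $1 - \confLevel$,
\[
\val(\optPolicy) - \val(\estPolicy_\infty) \;\le\; \sqrt{8\log(d/\confLevel)/n}\cdot \complexity_1 \;=\; O\!\bigl(\sqrt{d\log(K_\xi d)/n}\bigr).
\]
The complementary event contributes at most $\confLevel \cdot (\epsilon + 1) = O\bigl(1/(K_\xi d^{3/2})\bigr)$ in expectation, a lower-order term under $n \ge 9d^3$; combining yields $\E[\val(\optPolicy) - \val(\estPolicy_\infty)] = O\bigl(\sqrt{d\log(K_\xi d)/n}\bigr)$.

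The main obstacle is calibrating the absolute constant $K_\xi$ so that simultaneously (i) $K_\xi < \xi$ makes the $\estPolicy_p$ bonus strictly dominate $\epsilon$ with probability bounded away from $0$ uniformly in $p \ge 1$ and $d \ge 20$, and (ii) $K_\xi$ is not so small that $\confLevel \cdot 2$ overtakes the $\sqrt{d\log d/n}$ target in part (2). A secondary subtlety is that the ``signal'' $d^{1/p}$ in $\Phi\bigl((\xi/2 - K_\xi/8)\,d^{1/p}\bigr)$ shrinks toward $1$ as $p \to \infty$, so the probability $\Phi(3K_\xi/8)$ only slightly exceeds $1/2$ for large $p$; this is why the statement settles on the weak threshold of $1/4$ rather than an overwhelming-probability failure claim.
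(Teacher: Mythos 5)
Your Part (1) argument is correct, and it takes a genuinely different route from the paper: the paper builds a tabular instance with $S=d/2$ states and two actions, with one under-sampled state whose per-state penalty gets inflated through the $\ell_p$ aggregation, whereas you use a single state with $d+2$ actions, a $1$-sparse direction $\Sigma_\calD^{-1/2}\phi(s_0,a^\star)$ (so the $\ell_q$ penalty is norm-independent and the entire $d^{1/p}$ blow-up is carried by the width $\beta=\xi d^{1/p}/\sqrt n$), and a zero-feature fallback action $a_0$ that the pessimistic rule retreats to. The event calculus (threshold $(\xi/2-K_\xi/8)d^{1/p+1/2}/\sqrt n>0$ for coordinate $1$, intersected with the high-probability event on coordinates $j\ge 2$, independence across coordinates) is sound and cleanly yields the probability-$1/4$ claim and the suboptimality $\epsilon=\tfrac{K_\xi}{\sqrt 8}d^{1/p+1/2}/\sqrt n$; the membership $\complexity_1=\sqrt{8d}$ checks out. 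This single-state construction is arguably more transparent than the paper's.

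Part (2), however, has a genuine gap caused by your own instance design. On the failure event of probability $\confLevel=\sqrt 8/(K_\xi d^{3/2})$, the worst-case suboptimality in your instance is $\epsilon+1$ because the actions $a'_j$, $j\ge 2$, have value $-1$, a \emph{constant} gap. You claim $\confLevel\cdot(\epsilon+1)=O(1/(K_\xi d^{3/2}))$ is ``a lower-order term under $n\ge 9d^3$,'' but the comparison goes the wrong way: $1/(K_\xi d^{3/2})\lesssim\sqrt{d\log(K_\xi d)/n}$ requires an \emph{upper} bound $n\lesssim K_\xi^2 d^4\log(K_\xi d)$, and the theorem places no upper bound on $n$. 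For $n\gg d^4$ your failure-event term dominates the target rate and the stated expectation bound fails. The paper avoids this by making every gap scale as $\mathrm{poly}(d)/\sqrt n$ (rewards $1/\sqrt n$ vs.\ $0$ at the well-covered states and gap $\gamma\propto S^{1/p+3/2}/\sqrt n$ at the hard state), so that $\confLevel$ times the worst-case suboptimality is $O(1/\sqrt n)$. Your construction is salvageable without changing the instance: bound the probability that $\estPolicy_\infty$ selects any $a'_j$ with $j\ge 2$ separately by $d\,e^{-\Omega(n/d)}$ (each such action has pessimistic value below $0=\estVal(a_0)$ unless $\zeta_j\gtrsim 1$), and note that on the rest of the failure event the suboptimality is at most $\epsilon$, giving a contribution $\confLevel\cdot\epsilon\le 1/\sqrt n$. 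As written, though, the step is incorrect and needs this repair.
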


\medskip 

\noindent The rest of the section is devoted to the proof of \Cref{thm:separation-l2-linfty-formal}.

\paragraph{Construction of the instance.} The instance we construct is a tabular contextual bandit instance with the canonical basis as the feature mapping. We set $S = d/2$\footnote{Again, if $d$ is odd then we let $S = \floor{\frac{d}{2}}$ which does not affect the rest of the proof.} and let the state space be $\stateS = \{1, 2, \ldots, \numS\}$. Also we choose the action space be $\actionS = \{\actioni{1}, \actioni{2}\}$. For simplicity, we assume that the sample size $n$ is an integer multiple of $S^3$.\footnote{Otherwise we can pad the samples with at most $S^3$ dummy state-action pairs.} For each state $\state$, we define $\testDist( \state ) = \frac{1}{\numS}$. Now we will define the empirical counts for each $(\state,\action)$ pair:
\begin{align*} 
  n(\state, \actioni{1}) = \begin{cases}
    \frac{n}{9\numS^3}, &\text{if} \ \state = 1,\\
    \frac{n}{\numS}, &\text{otherwise},
  \end{cases}
  \qquad \text{and} \qquad
  n(\state, \actioni{2}) = \begin{cases}
    \frac{n}{\numS} - \frac{n}{9\numS^3}, &\text{if} \ \state = 1,\\
    0, &\text{otherwise}.
  \end{cases}
\end{align*}
Let $\gamma > 0$ be a \emph{gap parameter} that we will specify later. We set the reward distributions as follows:
\begin{align*}
  \rewardDist(1,a) = \begin{cases}
    \gauss{\gamma}{1}, & \action = \actioni{1},\\
    \gauss{0}{1}, & \action = \actioni{2},
  \end{cases} 
  \qquad \text{and} \qquad
   \rewardDist( \state, \action ) = \begin{cases}
    \frac{1}{\sqrt{\n}}, & \action = \actioni{1},\\
    0, & \action = \actioni{2},
  \end{cases}
  \quad \text{ for } \state \in \{2, 3, \dots, \numS \}.
\end{align*}
It is seen that for this instance, $\optPolicy( \state ) = \actioni{1}$ for all 
$\state \in \stateS$. The difficulty of offline learning lies solely in 
selecting the optimal action for $\state=1$.

Lastly, we verify that the instance indeed lies in $\mathsf{CB}_1(\Lambda)$ for $\Lambda = \sqrt{8d}$. Algebraic manipulation yields
\begin{align}\label{eq:l1-calculation}
  \lVert \empCov^{-1/2} \E_{\state \sim \testDist} [\feature(\state, \optPolicy(\state))] \rVert_1 = \sum_\state \frac{\testDist(s)}{\sqrt{n(\state,\optPolicy(s))/n}} = \frac{1}{\numS} \inparen{3S^{3/2} + (S-1)S^{1/2}} = 4S^{1/2} - S^{-1/2}.
\end{align}
Plugging in $S = d/2$ proves that $\lVert \empCov^{-1/2} \E_{\state \sim \testDist} [\feature(\state, \optPolicy_v(\state))] \rVert_1 \le \Lambda$, \ie the CB instance is in $\mathsf{CB}_1(\Lambda)$ for $\Lambda = \sqrt{8d}$.

\paragraph{Lower bounding the performance of $\estPolicy_p$.} Now we will prove that $\estPolicy_p$ achieves expected suboptimality which scales with $d^{1/p+1/2}$. For the tabular setting, $\estPolicy_p$ can be rewritten as (cf.~\Cref{eq:thetalp-estimator}):
\begin{align*}
  \estPolicy_p = \argmax_{\policy: \stateS\to \actionS} \inbraces{ \sum_\state \testDist(\state) \empReward(\state, \policy(\state)) - \beta \inparen{\sum_\state \frac{\testDist^p(s)}{(n(\state,\policy(s))/n)^{p/2}} }^{1/p} }, \quad \text{where }\beta \coloneqq \xi \frac{(2S)^{1/p}}{\sqrt{n}}. 
\end{align*}
By the definition of $\empReward(\state, \action)$ and $\n(\state, \action)$, 
we know that $\estPolicy_p(\state) = \optPolicy( \state ) = \actioni{1}$ 
for $\state \ge 2$. Therefore, we have that the suboptimality $\val ( \optPolicy ) - \val(\estPolicy_p) \ge \gamma / S$ under the event $\inbraces{ \estPolicy_p(1) = \actioni{2} }$.
Therefore our goal boils down to lower bounding the probability 
$\P_{\dataset} \insquare{ \estPolicy_p(1) = \actioni{2} }$. We can rewrite the learning rule on $\state=1$ as follows:
\begin{align*}
  \estPolicy_p(1) = \argmax_{ \action \in \{\actioni{1}, \actioni{2}\} } 
  \quad  \empReward(1, \action) - \xi (2S)^{1/p} 
  \Big( \tfrac{1}{ \n(1, \action)^{p/2} } + \sum_{\state \ge 2} 
  \tfrac{1}{ \n(\state, \actioni{1})^{p/2} } \Big)^{1/p} \; \eqqcolon \;
   \argmax_{ \action \in \{\actioni{1}, \actioni{2}\} } \quad 
   { \tilde{\reward}(1, \action)},
\end{align*}
where we use the observation that $\estPolicy_p(\state) = \optPolicy( \state ) = \actioni{1}$ for $\state \ge 2$.
As a result, we have the identity
\begin{align*}
\P_{\dataset} \insquare{ \estPolicy_p(1) = \actioni{2} } = 
\P_{\dataset} \insquare{ \tilde{\reward}( 1, \actioni{2} ) > 
\tilde{\reward}( 1, \actioni{1} )}.
\end{align*}
The lower bound of the right hand side is provided in the following lemma, 
whose proof is deferred to the end of this section.  

\begin{lemma}\label{lem:lower-bound-prob-v2}
  Setting the gap parameter $\gamma = K_\xi  \frac{ S^{1/p+3/2} }{ \sqrt{\n } }$, one has $\P_{\dataset} \insquare{ \tilde{\reward}( 1, \actioni{2} ) > 
  \tilde{\reward}( 1, \actioni{1} )} \geq \tfrac{1}{4}$.
\end{lemma}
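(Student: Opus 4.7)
The plan is to reduce the event $\{\tilde r(1, \actioni{2}) > \tilde r(1, \actioni{1})\}$ to a univariate Gaussian tail computation. Because $n(s, \actioni{2}) = 0$ for every $s \geq 2$, any learning rule that maximizes $\tilde r$ must select $\actioni{1}$ at those states, so the decision at state 1 decouples from the rest. Let $P_j$ denote the $\ell_p$-penalty term associated with choosing $\actioni{j}$ at state 1, and set $D \coloneqq \xi(2S)^{1/p}(P_1 - P_2)$. Since $\hat r(1, \actioni{1}) \sim \mathcal{N}(\gamma, 1/n(1, \actioni{1}))$ and $\hat r(1, \actioni{2}) \sim \mathcal{N}(0, 1/n(1, \actioni{2}))$ are independent, we have $\tilde r(1, \actioni{2}) - \tilde r(1, \actioni{1}) = U + D$ with $U \sim \mathcal{N}(-\gamma, \sigma^2)$ for $\sigma^2 = 1/n(1, \actioni{1}) + 1/n(1, \actioni{2})$. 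The desired probability is therefore $\Pr[\mathcal{N}(0,1) > (\gamma - D)/\sigma]$, which is at least $1/2 \geq 1/4$ as soon as $D \geq \gamma$.

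Hence everything reduces to a lower bound on the penalty gap $P_1 - P_2$. Just from the state-1 contribution, $P_1 \geq (n(1, \actioni{1}))^{-1/2} = 3 S^{3/2}/\sqrt{n}$. For $P_2$ I would apply Minkowski's inequality to split off the state-1 term from the sum defining $P_2^p$:
\[
  P_2 \;\leq\; (n(1, \actioni{2}))^{-1/2} \;+\; (S-1)^{1/p}\sqrt{S/n} \;\leq\; \sqrt{2}\,S^{1/2}/\sqrt{n} \;+\; S^{1/p+1/2}/\sqrt{n},
\]
using $(S-1)^{1/p} \leq S^{1/p}$ and the crude bound $(n(1, \actioni{2}))^{-1} \leq 2S/n$ (valid since $1-1/(9S^2) \geq 1/2$). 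For $p \geq 1$ and $S \geq 2$ one has $S^{1/p} \leq S$, and the elementary inequality $2S - \sqrt{2} \geq S$ then yields $P_1 - P_2 \geq S^{3/2}/\sqrt{n}$. Multiplying by $\xi(2S)^{1/p} \geq \xi\,S^{1/p}$ gives $D \geq \xi\, S^{1/p+3/2}/\sqrt{n}$.

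Choosing $K_\xi$ to be any fixed absolute constant (e.g.\ $K_\xi = 1$) so that $\gamma = K_\xi S^{1/p+3/2}/\sqrt{n}$, the hypothesis $\xi \geq K_\xi$ immediately delivers $D \geq \gamma$, completing the proof. The main technical wrinkle I anticipate is obtaining the penalty-gap estimate \emph{uniformly} in $p \in [1,\infty]$: Minkowski's inequality supplies a clean splitting, and the factor $(S-1)^{1/p}$ interpolates smoothly between the $p=1$ regime (where $P_2 \approx S^{3/2}/\sqrt{n}$ is comparable to $P_1$, and the gap is carried entirely by the $3$-versus-$1$ discrepancy in the state-1 scale) and the $p=\infty$ regime (where $P_2 \approx \sqrt{S/n} \ll P_1$).
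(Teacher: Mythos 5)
Your proof is correct and follows essentially the same route as the paper's: both arguments reduce the event to showing that the penalty gap $\xi(2S)^{1/p}(P_1-P_2)$ dominates the reward gap $\gamma$, and then finish with a Gaussian tail bound. Your write-up is marginally sharper—folding the two independent empirical rewards into a single Gaussian gives probability at least $1/2$ rather than the paper's $1/4$, and the Minkowski-based estimate of $P_1-P_2$ makes fully explicit (and valid already for $S\ge 2$) the penalty-gap inequality that the paper only verifies ``for sufficiently large $S$.''
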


\noindent Putting together the previous claims yields the performance guarantee stated in part (1).

\paragraph{Upper bounding the performance of $\estPolicy_\infty$.} By \Cref{thm:lp-estimator-bound}, under the choice $\beta \coloneqq \sqrt{\frac{8\log (2S/\confLevel)}{n}}$, with probabilty at least $1-\confLevel$, the learning rule $\estPolicy_\infty$ achieves the guarantee
\begin{align*}
  \val(\optPolicy) - \val( \estPolicy_\infty ) \le \sqrt{\frac{8 \log(d/\confLevel)}{n}} \cdot \lVert \empCov^{-1/2} \E_{\state \sim \testDist} [\feature(\state, \optPolicy_v(\state))] \rVert_1 \le \sqrt{\frac{32S \log(2S/\confLevel)}{n}},
\end{align*}
where the inequality uses \Cref{eq:l1-calculation}. Thus the expected suboptimality is at most
\begin{align*}
  \E_\dataset\insquare{ \val(\optPolicy) - \val( \estPolicy_\infty ) } \le (1-\confLevel) \cdot \sqrt{\frac{32S \log(2S/\confLevel)}{n}} + \confLevel \cdot \inparen{ \frac{K_\xi S^{3/2}}{\sqrt{n}} + \frac{1}{\sqrt{n}}}.
\end{align*}
The second term in the expression comes from the fact that on this instance, the worst suboptimality any policy can incur is at most:
\begin{align*}
  \frac{1}{S} \inparen{ \gamma + (S-1) \cdot \frac{1}{\sqrt{n}} } \le \frac{K_\xi S^{1/p+1/2}}{\sqrt{n}} + \frac{1}{\sqrt{n}}.
\end{align*}
Now we pick $\confLevel = 1/(K_\xi S^{3/2})$, which shows that when $\beta = \sqrt{\frac{8\log (2S^{5/2} K_\xi)}{n}}$, we achieve the guarantee in expectation
\begin{align*}
  \E_\dataset\insquare{ \val(\optPolicy) - \val( \estPolicy_\infty ) } \le c \cdot \sqrt{\frac{d \log(K_\xi d)}{n}},
\end{align*}
for some absolute numerical constant $c > 0$.

\subsection{Proof of \Cref{lem:lower-bound-prob-v2}}\label{sec:lower-bound-prob}

We calculate that
\begin{align*}
  \P_{\dataset} \insquare{ \tilde{\reward}( 1, \actioni{2} ) > 
  \tilde{\reward}( 1, \actioni{1} )}
  = \: &\P_{\dataset} \Bigg[  \empReward(1, \actioni{2} ) - \xi (2S)^{1/p}  
  \Big( \tfrac{1}{ \n(1, \actioni{2} )^{p/2} } + \sum_{\state \ge 2} 
  \tfrac{1}{ \n(\state, \actioni{1})^{p/2} } \Big)^{1/p}  \\
  &\quad\quad  > \empReward(1, \actioni{1} ) - \xi (2S)^{1/p}  
  \Big( \tfrac{1}{ \n(1, \actioni{1} )^{p/2} } + \sum_{\state \ge 2} 
  \tfrac{1}{ \n(\state, \actioni{1})^{p/2} } \Big)^{1/p} \Bigg] \\
  = \: &\P_{\dataset} \Bigg[  \empReward(1, \actioni{2} ) - \xi \frac{(2S)^{1/p}}{\sqrt{n}}  
  \Big( \inparen{\tfrac{1}{S} - \tfrac{1}{9S^3}}^{-p/2} + (S-1) S^{p/2} \Big)^{1/p}  \\
  &\quad\quad > \empReward(1, \actioni{1} ) - \xi \frac{(2S)^{1/p}}{\sqrt{n}}   
  \Big( (9S^3)^{p/2} + (S-1)S^{p/2} \Big)^{1/p} \Bigg].
\end{align*}
Note that the empirical reward $\empReward(1, \actioni{1} )$ is distributed as a mean-$\gamma$ Gaussian, so we can lower bound the previous display as
\begin{align*}
  &\P_{\dataset} \insquare{ \tilde{\reward}( 1, \actioni{2} ) > 
  \tilde{\reward}( 1, \actioni{1} )} \\
  &\quad \geq \ \frac{1}{2} \cdot \P_{\dataset} \Bigg[  \empReward(1, \actioni{2} ) > \gamma \\
  & \quad\quad + \xi \frac{(2S)^{1/p}}{\sqrt{n}} \inbraces{\Big( \inparen{\tfrac{1}{S} - \tfrac{1}{9S^3}}^{-p/2} + (S-1) S^{p/2} \Big)^{1/p} -\Big( (9S^3)^{p/2} + (S-1)S^{p/2} \Big)^{1/p}  } \Bigg] 
  .
\end{align*}
Now observe that under the choice of gap parameter $\gamma = K_\xi  \frac{ S^{1/p+3/2} }{\sqrt{\n} } $, for sufficiently large $S$ (say $S\ge 10$) we have:
\begin{align*}
  &\gamma + \xi \frac{(2S)^{1/p}}{\sqrt{n}} \inbraces{\Big( \inparen{\tfrac{1}{S} - \tfrac{1}{9S^3}}^{-p/2} + (S-1) S^{p/2} \Big)^{1/p} -\Big( (9S^3)^{p/2} + (S-1)S^{p/2} \Big)^{1/p}  } \\
  &\quad  < \gamma + K_\xi \frac{(2S)^{1/p}}{\sqrt{n}} \inbraces{\Big( \inparen{\tfrac{1}{S} - \tfrac{1}{9S^3}}^{-p/2} + (S-1) S^{p/2} \Big)^{1/p} -\Big( (9S^3)^{p/2} + (S-1)S^{p/2} \Big)^{1/p}  } \\
  &\quad < 0.
\end{align*}
Using the fact that $\empReward(1, \actioni{2} )$ is distributed as a mean-zero Gaussian, we can further lower bound the probability as $P_{\dataset} \insquare{ \tilde{\reward}( 1, \actioni{2} ) > 
\tilde{\reward}( 1, \actioni{1} )} \geq \tfrac{1}{4}$, thus proving the lemma.

\section{Plug-in estimation performance}\label{sec:plugin}
In this section, we discuss the performance of the simple plug-in rule:
\begin{align*}
  \plugin(\state) \coloneqq \argmax_{\action\in \actionS} \phi(s,a)^\top \thetaOLS.
\end{align*}
This is a natural baseline to study. It is an instantiation of the so-called \emph{certainty equivalence principle} from control theory, which is shown to be optimal for some online control problems \citep[see, e.g.,][]{mania2019certainty, simchowitz2020naive}.

\subsection{Performance upper bound}
\begin{proposition}[Folklore; see, e.g.,~\cite{duan2021risk}, pg.~50]\label{prop:plugin-bound}
Assume that $\norm{\feature(\state,\action)}_2 \le B$. Then with probability at least $1-\confLevel$, $\plugin$ achieves a suboptimality 
\begin{align*}
  \val(\optPolicy) - \val(\plugin) \le \sqrt{\frac{8 d \log(d/\confLevel)}{n}} \cdot B \cdot \lambdamin(\empCov)^{-1/2}.
\end{align*}
\end{proposition}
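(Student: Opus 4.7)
}
The plan is to reduce the plug-in guarantee to the $\ell_2$ concentration statement already proved in \Cref{lemma:valid-lp}, via the standard two-step argument: first replace $\thetaOLS$ by $\theta^\star$ inside the value difference using the optimality of $\plugin$, then control the resulting estimation error in Euclidean norm. Concretely, for each state $s$, the definition of $\plugin$ gives $\phi(s,\plugin(s))^\top \thetaOLS \ge \phi(s,\optPolicy(s))^\top \thetaOLS$, so adding and subtracting $\phi(s,\plugin(s))^\top\thetaOLS$ and $\phi(s,\optPolicy(s))^\top\thetaOLS$ in the pointwise value gap yields
\begin{align*}
\phi(s,\optPolicy(s))^\top \optparam - \phi(s,\plugin(s))^\top \optparam \le \phi(s,\optPolicy(s))^\top(\optparam-\thetaOLS) + \phi(s,\plugin(s))^\top(\thetaOLS-\optparam).
\end{align*}
Taking expectation over $s\sim\testDist$ and applying Cauchy--Schwarz together with the feature bound $\|\phi(s,a)\|_2 \le B$ gives $\val(\optPolicy)-\val(\plugin) \le 2B\,\|\optparam-\thetaOLS\|_2$.

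Next I would convert the unweighted Euclidean error into the $\empCov$-weighted error that \Cref{lemma:valid-lp} controls, by writing
\begin{align*}
\|\optparam-\thetaOLS\|_2 = \|\empCov^{-1/2} \empCov^{1/2}(\optparam-\thetaOLS)\|_2 \le \lambdamin(\empCov)^{-1/2}\cdot \|\empCov^{1/2}(\optparam-\thetaOLS)\|_2.
\end{align*}
Applying \Cref{lemma:valid-lp} with $p=2$ gives, with probability at least $1-\confLevel$, the bound $\|\empCov^{1/2}(\optparam-\thetaOLS)\|_2 \le \sqrt{2d\log(d/\confLevel)/n}$ (half of the width parameter $\beta$ for $p=2$). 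Combining the three displays,
\begin{align*}
\val(\optPolicy)-\val(\plugin) \le 2B\cdot \lambdamin(\empCov)^{-1/2}\cdot \sqrt{\tfrac{2d\log(d/\confLevel)}{n}} = B\cdot\lambdamin(\empCov)^{-1/2}\cdot\sqrt{\tfrac{8d\log(d/\confLevel)}{n}},
\end{align*}
which is exactly the claimed bound.

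There is no real obstacle here; every step is either a use of $\plugin$'s optimality, Cauchy--Schwarz, the spectral inequality $\|v\|_2 \le \lambdamin(\empCov)^{-1/2}\|\empCov^{1/2}v\|_2$, or a direct invocation of \Cref{lemma:valid-lp}. The only thing to double-check is the constant tracking through $\beta/2 = \sqrt{2d\log(d/\confLevel)/n}$ so that the factor of $2$ from the two-term decomposition combines correctly with the $\sqrt{2}$ in the concentration bound to yield $\sqrt{8}$ under the square root.
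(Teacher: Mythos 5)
Your proposal is correct and follows essentially the same route as the paper's proof: decompose the value gap by inserting $\thetaOLS$ and dropping the nonpositive middle term via the optimality of $\plugin$, bound each remaining term by $B\,\lambdamin(\empCov)^{-1/2}\,\|\empCov^{1/2}(\theta^\star-\thetaOLS)\|_2$, and invoke \Cref{lemma:valid-lp} with $p=2$. The constant tracking ($2\times\sqrt{2}=\sqrt{8}$) also matches.
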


\begin{proof}
Let us denote $\estVal(\pi) \coloneqq \phi(s,a)^\top \thetaOLS$. We use \Cref{eq:subopt-decomposition} and note that 
\begin{align*}
  \val(\optPolicy) - \estVal(\optPolicy) &= \Ex{\state \sim \testDist }{ \feature(\state,\optPolicy(\state) )^\top (\theta^\star - \thetaOLS)}, \\
  \estVal(\plugin) - \val(\plugin) &= \Ex{\state \sim \testDist}{\feature(\state,\plugin(\state))^\top (\thetaOLS - \theta^\star)}.
\end{align*} 
Note that for any $(s,a)$:
\begin{align*}
  \feature(\state,\action)^\top (\theta^\star - \thetaOLS) \le B \norm{\theta^\star - \thetaOLS}_2 \le \norm{\empCov^{1/2} (\theta^\star - \thetaOLS)}_2 \cdot B\cdot \lambdamin(\empCov)^{-1/2}. 
\end{align*}
Lastly, the $\ell_2$ norm term is bounded as a consequence of \Cref{lemma:valid-lp}, which shows that:
\begin{align*}
  \norm{\empCov^{1/2}(\theta^\star - \thetaOLS)}_2 \le \sqrt{\frac{2 d \log(d/\confLevel)}{n}}.
\end{align*}
(The $\log d$ factor can be removed with a more involved argument.) This proves the result.
\end{proof}

Note that \Cref{prop:plugin-bound} gives a dependence on $B \cdot \lambdamin(\empCov)^{-1/2}$, which is always worse than the guarantee for $\estPolicy_2$ learning rule (where the relevant ``complexity measure'' is $\complexity_2$). The plug-in rule requires the covariates $\{(\feature(\state_i, \action_i))\}_{i=1}^n$ to cover all directions well in order to obtain low suboptimality, even if the optimal policy is well-covered by the dataset. The benefit of pessimism is that when the features cover the optimal policy well, one can achieve better performance.

\subsection{Separation between plug-in estimation and pessimism}
Now we state a separation which shows when pessimism is prefered over the plug-in rule. This result is included for completeness; a version of it appears in prior work, see, e.g.,~Proposition 1 from the paper \cite{rashidinejad2021bridging}. It is stated for the random design setting (where we assume the covariates $(\state,\action)$ are drawn i.i.d.~from a behavior distribution $\behDist$). This implies a fixed design result (in line with the other results of this paper).

\begin{proposition}\label{prop:separation-erm-gp-fixed}
Fix any $\numA \ge 8$. There exists a multi-armed bandit instance with 
$\numA$ actions such that 
\version{\vspace{-\topsep}}{}
\version{\begin{enumerate}[leftmargin=0.5cm]}{\begin{enumerate}}
  \item For any $\n \le 2^\numA$, the plug-in rule obeys 
  \begin{align*}
  \E_\dataset \insquare{ \val(\optPolicy) - \val(\estPolicy_\mathsf{plug})} \ge \const_1.
  \end{align*}
  \item As long as $\n \ge 200$, the learning rule $\estPolicy_\infty$ achieves the guarantee
  \begin{align*}
    \E_\dataset \insquare{ \val(\optPolicy) - \val( \estPolicy_\infty ) } \le 
  \const_2 \sqrt{ \frac{  \log (\numA \n) }{ \n } }.
  \end{align*}
\end{enumerate}
Here $\const_1, \const_2 > 0$ are two universal constants. 
\end{proposition}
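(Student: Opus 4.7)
}
The plan is to construct a tabular multi-armed bandit (the single-state specialization of the tabular contextual bandit setting) with $A$ arms that simultaneously has (i) a constant suboptimality gap on every non-optimal arm, and (ii) a behavior distribution which guarantees that, for every $n$ up to $2^A$, a large number of suboptimal arms are sampled only a logarithmic number of times. Concretely, I would take arm $a^{(1)}$ to be optimal with low-noise reward near mean $1$, and arms $a^{(2)},\dots,a^{(A)}$ to have i.i.d.\ unit-variance (e.g.\ Gaussian) rewards centered at $1-\Delta$ for a universal constant $\Delta>0$. The behavior distribution is chosen as $\mu(a^{(1)})=1-\alpha$ and $\mu(a^{(k)})=\alpha/(A-1)$ for $k\ge 2$, with $\alpha$ of order $(A-1)\log(A)/2^A$. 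This choice ensures $\mu(a^{(1)})=\Theta(1)$ while the expected sample count for every suboptimal arm is $O(\log A)$ across the entire range $n\le 2^A$.

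The first main step is the lower bound for the plug-in rule. Since each suboptimal arm receives only $O(\log A)$ samples, its empirical mean fluctuates with standard deviation $\Omega(1/\sqrt{\log A})$. Taking the maximum over $A-1$ approximately independent empirical means of unit-variance random variables produces a deviation of order $\sqrt{\log(A-1)}\cdot\Omega(1/\sqrt{\log A})=\Omega(1)$ by standard anti-concentration (Gaussian maximum, or a Paley--Zygmund argument for Bernoulli/sub-Gaussian noise). Tuning $\Delta$ small enough relative to this fluctuation makes $\max_{k\ge 2}\widehat r(a^{(k)})>\widehat r(a^{(1)})$ with constant probability, in which case $\plugin$ picks a suboptimal arm and incurs suboptimality $\Delta$. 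Multiplying this event probability by $\Delta$ gives the constant $\const_1$.

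The second main step is the upper bound for $\estPolicy_\infty$. Since the instance is tabular with $S=1$, I can directly apply \Cref{cor:tabular}, which yields, with probability at least $1-\confLevel$,
\begin{align*}
\val(\optPolicy)-\val(\estPolicy_\infty) \lesssim \sqrt{\frac{\log(A/\confLevel)}{n}}\cdot\frac{1}{\sqrt{\mu(a^{(1)})}} \lesssim \sqrt{\frac{\log(A/\confLevel)}{n}},
\end{align*}
using $\mu(a^{(1)})=1-\alpha=\Theta(1)$. The precondition of \Cref{cor:tabular} ($n\gtrsim\log(1/\confLevel)/\mu(a^{(1)})$) holds for $n\ge 200$ provided $\confLevel$ is not exponentially small. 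Converting to an in-expectation bound is routine: set $\confLevel=1/n$, use the bound above on the high-probability event, and bound the contribution on the failure event by the trivial constant maximum possible gap; this gives the target rate $c_2\sqrt{\log(An)/n}$.

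\emph{The main obstacle} is uniformity in $n$ for the plug-in lower bound, particularly at the extremes of the range. When $n$ is only of order $1$ to $\log A$, one must argue that even with very few total samples, the event that some sampled suboptimal arm produces a high empirical estimate still occurs with constant probability; when $n$ is close to $2^A$, the concentration-of-maximum argument becomes tightest. I would handle this by verifying the noise-maximization argument separately in the two regimes $n\le n_0$ and $n>n_0$ for a threshold $n_0$ of order $\log A$: in the small-$n$ regime use direct computation (the event that the dataset contains a single "lucky" suboptimal sample is enough), and in the large-$n$ regime use the Gaussian-max estimate above, tuning $\alpha$ so that the expected per-arm count stays $O(\log A)$ throughout.
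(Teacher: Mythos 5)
Your part (2) is fine and matches the paper's argument: apply \Cref{cor:tabular} with $S=1$ and $\behDist(a^{(1)})=\Theta(1)$, take $\confLevel=1/n$, and absorb the failure event using the bounded value gap. The genuine gap is in part (1): a behavior distribution that places all suboptimal arms at a \emph{single} scale cannot make the plug-in fail for \emph{every} $n\le 2^{\numA}$, and the proposition requires one fixed instance that works uniformly over $n$. With $\behDist(a^{(k)})=\log(\numA)/2^{\numA}$ for $k\ge 2$, the probability that the dataset contains even one sample from a suboptimal arm is $1-(1-\alpha)^{n}\le n\alpha = n(\numA-1)\log(\numA)/2^{\numA}$, which tends to $0$ for any $n\ll 2^{\numA}/(\numA\log \numA)$. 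On the complementary event every suboptimal arm is unobserved, the plug-in (however unobserved arms are regularized) returns $a^{(1)}$, so $\E_\dataset[\val(\optPolicy)-\val(\plugin)]\le \Delta\cdot n\alpha\to 0$ rather than being bounded below by $\const_1$. Your proposed patch for small $n$ (``a single lucky suboptimal sample is enough'') does not close this, because that event itself has vanishing probability throughout the wide intermediate range $200\le n\ll 2^{\numA}/(\numA\log \numA)$; your max-of-Gaussians anti-concentration only becomes available once $n$ is within a $\mathrm{poly}(\numA)$ factor of $2^{\numA}$.

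The missing idea is to place suboptimal arms at \emph{every} sampling scale simultaneously. The paper's construction takes $\behDist(a_k)=2^{-k}$ for $k\le \numA-1$ (and $\behDist(a_{\numA})=2^{-\numA+1}$), with $R(a_1)\equiv 0.99$ deterministic and $R(a_k)=\ber(1/2)$ for $k\ge 2$. Then for any $8\le n\le 2^{\numA}$ there exists a suboptimal arm $k$ with $\behDist(a_k)\in[1/n,2/n]$; with probability at least $0.1$ that arm is sampled exactly once, and with probability $1/2$ that single Bernoulli draw equals $1>0.99$, so the plug-in abandons $a_1$ with constant probability and incurs suboptimality $0.49$. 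Note that once you have this dyadic layering, the max-over-arms fluctuation argument is unnecessary---one under-sampled arm at the critical scale already suffices---and a deterministic (rather than merely low-noise) reward on the optimal arm makes the comparison clean. If you retain a single-scale construction, no choice of $\alpha$ can be correct for all $n$ in the required range, since $\alpha$ cannot be retuned as a function of $n$.
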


\begin{proof}
\textbf{Proof of part (1). } We specify the behavior distribution as follows
\begin{align*}
  \behDist(\action_k) = 2^{-k}\quad  \text{ for } 1 \leq k \leq \numA-1, 
  \quad \text{and} \quad \behDist( \action_\numA ) = 2^{ -\numA + 1 }.
\end{align*}
In addition, the reward distribution is chosen to be
\begin{align*}
  \rewardDist( \action_1 ) = 0.99, \quad \text{and} \quad 
  \rewardDist(\action_i ) = \ber(1/2) \text{ for all } k \ge 2 . 
\end{align*}
Fix any sample count $8 \le \n \le 2^\numA$. There must exist some 
$1 \leq k \leq \numA$ such that $\behDist \inparen{ \action_{k}} \in 
[\frac{1}{\n}, \frac{2}{\n}]$. Consider the event $\calE_k \coloneqq 
\inbraces{\n(k) = 1}$. Then we have
\begin{align*}
  \P[\calE_k] = \n \cdot \inparen{1 - \behDist(\action_k)}^{\n-1} 
  \behDist(\action_k) \stackrel{(i)}{\ge} \inparen{1- \frac{2}{\n}}^{\n-1} 
  \stackrel{(ii)}{\ge} 0.1.
\end{align*} 
Here step (i) follows from the fact that $\behDist(\action_k) \in [\frac{1}{\n}, 
\frac{2}{\n}]$, and step (ii) is a result of the assumption $\n \ge 8$.
Thus we can lower bound the expected suboptimality of the plug-in rule as
\begin{align*}
  \E_\dataset \insquare{ \val(\optPolicy) - \val(\estPolicy_\mathsf{plug})} \ge 
  0.49 \cdot \P \insquare{\estPolicy_\mathsf{plug} \ne 1} \ge 0.49 \cdot 
  \P\insquare{ \calE_k \cap \{ \empReward(k) = 1 \} } \ge 0.02.
\end{align*}
This proves part (1).

\medskip 
\noindent \textbf{Proof of part (2). } We use \Cref{cor:tabular} to prove part (2). In the multi-armed bandit instance we construct, we have $S=1$ and $\mu(\pi^\star) = 1/2$. We set $\confLevel = 1/n$ to get the in-expectation guarantee:
\begin{align*}
  \E_\dataset \insquare{ \val(\optPolicy) - \val(\estPolicy_\infty) } \lesssim ( 1 - \confLevel ) \cdot \sqrt{\frac{
  \log (\numA /\confLevel) }{ \n }} + \confLevel \cdot 0.49 \lesssim \sqrt{ \frac{  \log (\numA \n) }{ \n } }.
\end{align*}
It remains to check the sample complexity requirement. Under the choice of $\confLevel = 1/n$, the sample complexity holds whenever $n \gtrsim \log n$, which is true if $n$ is sufficiently large (say, $n \ge 200$).

\end{proof}

\noindent The proof of \Cref{prop:separation-erm-gp-fixed} illustrates that pessimistic learning rules can outperform plug-in estimation when the optimal (or near-optimal) actions are well-represented in the dataset. In this case, the optimal action appears $1/2$ the time in expectation. \Cref{prop:separation-erm-gp-fixed} \emph{does not show} whether pessimism is better in a minimax sense. Indeed, the recent paper~\cite{xiao2021optimality} shows the intriguing result that pessimism, plug-in estimation, and even optimism (an algorithmic paradigm used in the \emph{online} setting) are equally ``minimax-optimal'', and there exist instances where any one of the three outperforms the other two.

\section{Counterexample to Theorem 4.3 in the paper~\cite{yin2021towards}}\label{sec:counterexample}
In this section, we present a counterexample to showcase that the statement 
in Theorem 4.3 in the paper~\cite{yin2021towards} cannot hold in general. 

We consider the simple two-armed bandit problem with Bernoulli reward distributions. 
Fix the instance $\CBinstance_1$ to be $\behDist(\action_1) = \behDist(\action_1) 
= 1/2$, $\rewardDist(\action_1) = \ber(1)$, and $\rewardDist(\action_2) = \ber(0)$. 
We make the following two observations. First, the optimal action in this instance 
$\CBinstance_1$ is $\action_1$. Second, the quantity $\complexity_1$ is given by
\begin{align*}
\frac{1}{ \sqrt{ \behDist(\action_1) } } = \sqrt{2}.
\end{align*}

To demonstrate that Theorem 4.3 in the paper~\cite{yin2021towards} cannot hold 
in general, it suffices to prove the following proposition.

\begin{proposition}
For any alternative instance $\CBinstance_2$, there exists a learning rule $\estPolicy$ 
such that 
\begin{align}\label{eq:yu-xiang-counter-claim}
	\max_{ \CBinstance \in \CBinstance_1, \CBinstance_2} 
	\E_{\dataset} \insquare { \optVal_{ \CBinstance} - \val_{ \CBinstance} ( 
	\estPolicy) } \leq e^{- \const \n} 
\end{align}
holds for some constant $\const > 0$.
\end{proposition}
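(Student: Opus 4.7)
The plan is to design a learning rule tailored to $\CBinstance_2$ that exploits the deterministic structure of $\CBinstance_1$. First, I split into two cases based on the optimal action of $\CBinstance_2$. If the optimal action in $\CBinstance_2$ is $\actioni{1}$, then the constant rule $\estPolicy \equiv \actioni{1}$ achieves zero suboptimality on both instances, and \eqref{eq:yu-xiang-counter-claim} holds trivially. So assume from now on that the optimal action in $\CBinstance_2$ is $\actioni{2}$, i.e., $\reward_2(\actioni{2}) > \reward_2(\actioni{1})$.

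In this nontrivial case, I would use an ``exact-match'' test: let $E$ be the event that (a) both actions are observed at least once in $\dataset$, (b) every observation of $\actioni{1}$ has reward exactly $1$, and (c) every observation of $\actioni{2}$ has reward exactly $0$. Define $\estPolicy(\dataset) = \actioni{1}$ when $E$ holds and $\estPolicy(\dataset) = \actioni{2}$ otherwise. Under $\CBinstance_1$, the reward distributions are point masses, so $E$ fails only if some action is never sampled; by a union bound $\P_{\dataset \sim \CBinstance_1}[E^c] \le 2\cdot 2^{-n}$, and since the suboptimality is at most $1$, the expected suboptimality on $\CBinstance_1$ is at most $2^{-n+1}$.

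It then suffices to bound $\P_{\dataset \sim \CBinstance_2}[E]$. The key observation is that the condition ``optimal action in $\CBinstance_2$ is $\actioni{2}$'' is incompatible with simultaneously having $\behDist_2(\actioni{i}) > 0$ for $i=1,2$, $R_2(\actioni{1}) = \ber(1)$, and $R_2(\actioni{2}) = \ber(0)$ (the latter two would force $\reward_2(\actioni{1}) = 1 > 0 = \reward_2(\actioni{2})$). A small case analysis then yields $\P_{\dataset \sim \CBinstance_2}[E] \le e^{-cn}$ for some $c = c(\CBinstance_2) > 0$: if $\behDist_2(\actioni{i}) = 0$ for some $i$, then that action is never observed under $\CBinstance_2$ and $\P[E] = 0$ outright; otherwise, without loss of generality $p \coloneqq \P_{X \sim R_2(\actioni{1})}[X = 1] < 1$, and conditioning on the (binomial) count $\n(\actioni{1})$ gives
\[
\P_{\CBinstance_2}[E] \le \E\bigl[p^{\n(\actioni{1})}\bigr] = \bigl(1 - \behDist_2(\actioni{1})(1-p)\bigr)^n = e^{-cn}.
\]
Multiplying by the (instance-dependent but $n$-independent) gap $\reward_2(\actioni{2}) - \reward_2(\actioni{1})$ and absorbing this constant into the exponent (at the cost of slightly reducing $c$) yields the desired bound on the expected suboptimality under $\CBinstance_2$ as well.

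The argument is conceptually simple; the only mild obstacle is the bookkeeping of the degenerate sub-cases enumerated above (actions with zero behavior probability, or reward distributions with atoms at the extremal values $0$ or $1$), and each such sub-case either trivially forces $\P[E] = 0$ or collapses to the generic Bernoulli-style bound $(1-\beta)^n$. No deeper obstacle is anticipated: the essential point is that $\CBinstance_1$ is testable against any alternative $\CBinstance_2$ with exponentially decaying error, and in the binary-action setup a correct hypothesis test directly picks out the correct optimal action.
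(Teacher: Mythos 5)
Your proof is correct, but it takes a genuinely different route from the paper. The paper also first disposes of the case where $\CBinstance_2$ shares the optimal action $\actioni{1}$, but in the remaining case it invokes the \emph{optimal likelihood ratio test} between $\CBinstance_1$ and $\CBinstance_2$, bounds the sum of the two error probabilities by $1-\mathsf{TV}(\CBinstance_1^n\|\CBinstance_2^n)\le (1-\mathsf{Hel}^2(\CBinstance_1\|\CBinstance_2))^n$, and then computes the Hellinger affinity explicitly, showing $\mathsf{Hel}^2\ge 1-1/\sqrt2$ \emph{uniformly} over all alternatives with a flipped optimal action. Your ``exact-match'' test (accept $\CBinstance_1$ iff the data are perfectly consistent with its deterministic rewards) is more elementary and avoids any information-theoretic machinery; the price is that your exponent $c=-\log\bigl(1-\behDist_2(\actioni{1})(1-p)\bigr)$ degenerates as $\CBinstance_2$ approaches $\CBinstance_1$ (e.g.\ $\behDist_2(\actioni{1})=1-\epsilon$, $R_2(\actioni{1})=\ber(1-\epsilon)$, $R_2(\actioni{2})=\ber(1)$ gives $c=\Theta(\epsilon)$), whereas the paper's constant is universal. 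Since the proposition quantifies $c$ after $\CBinstance_2$, an instance-dependent constant is all that is required, so your argument does establish the stated claim. Two small points: requiring in $E$ that both actions be observed makes the bound fail at $n=1$ (under $\CBinstance_1$ the rule is then forced to output $\actioni{2}$ with suboptimality $1$); dropping that requirement fixes this, gives exactly zero suboptimality on $\CBinstance_1$, and the case $\behDist_2(\actioni{i})=0$ is then handled by testing whichever arm has positive mass (using that $\beta>\alpha$ forces $\alpha<1$ \emph{and} $\beta>0$). Also note your ``exact-match'' event implicitly uses that the alternatives are Bernoulli (supported on $\{0,1\}$), which matches the paper's setup but would need modification for general reward laws.
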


\newcommand{\probOne}{\ensuremath{p}}
\newcommand{\BerOne}{\ensuremath{\alpha}}
\newcommand{\BerTwo}{\ensuremath{\beta}}

\begin{proof}
In general, we can parametrize the alternative instance $\CBinstance_2$ 
using the following three quantities
\begin{align*}
	\behDist_{\CBinstance_2}( \action_1 ) &= \probOne; \\
	\rewardDist_{\CBinstance_2}( \action_1 ) &= \ber( \BerOne ); \\
	\rewardDist_{\CBinstance_2}( \action_2 ) &= \ber( \BerTwo ),
\end{align*}
where $\probOne, \BerOne, \BerTwo$ are all between 0 and 1.

We make a key observation that: if $\BerOne \geq \BerTwo$ (i.e., the alternative 
instance $\CBinstance_2$ has the same optimal action $\action_1$ as for 
$\CBinstance_1$), then we could take 
$\estPolicy = \action_1$. The desired claim~\eqref{eq:yu-xiang-counter-claim} 
follows trivially since such $\estPolicy$ will incur zero suboptimality in both instances. 

Consequently, we only need to focus on the case when $\BerOne < \BerTwo$. 
For this case, we will explicitly construct an estimated policy $\estPolicy$ such 
that the claim~\eqref{eq:yu-xiang-counter-claim} is true. 
Let $\estPolicy_{\mathsf{LR}}$ be the optimal likelihood ratio test---based on the 
data $\dataset$---between the two instances $\CBinstance_1$, and 
$\CBinstance_2$. We claim that 
\begin{align*}
\max_{ \CBinstance \in \CBinstance_1, \CBinstance_2} 
	\E_{\dataset} \insquare { \optVal_{ \CBinstance} - \val_{ \CBinstance} ( 
	\estPolicy_{\mathsf{LR}}) } \leq e^{- \const \n}.
\end{align*} 
To see this, we observe that 
\begin{align*}
\max_{ \CBinstance \in \CBinstance_1, \CBinstance_2} 
	\E_{\dataset} \insquare { \optVal_{ \CBinstance} - \val_{ \CBinstance} ( 
	\estPolicy_{\mathsf{LR}}) } & \leq 
	\E_{\dataset} \insquare { \optVal_{ \CBinstance_1} - \val_{ \CBinstance_1} ( 
	\estPolicy_{\mathsf{LR}}) } + 
	\E_{\dataset} \insquare { \optVal_{ \CBinstance_2} - \val_{ \CBinstance_2} ( 
	\estPolicy_{\mathsf{LR}}) } \\
	& = \mathbb{P}_{\CBinstance_1} ( \estPolicy_{\mathsf{LR}} = \action_2 ) + 
	( \BerTwo - \BerOne ) \cdot \mathbb{P}_{\CBinstance_2} 
	( \estPolicy_{\mathsf{LR}} = \action_1 ) \\
	& \leq \mathbb{P}_{\CBinstance_1} ( \estPolicy_{\mathsf{LR}} = \action_2 ) + 
	\mathbb{P}_{\CBinstance_2} 
	( \estPolicy_{\mathsf{LR}} = \action_1 ).
\end{align*}
Here, the equality follows from the definition, and the last inequality 
uses the fact that $\BerTwo - \BerOne \leq 1$. 

By definition of $\estPolicy_{\mathsf{LR}}$, we have 
\begin{align*}
\mathbb{P}_{\CBinstance_1} ( \estPolicy_{\mathsf{LR}} = \action_2 ) + 
	\mathbb{P}_{\CBinstance_2} 
	( \estPolicy_{\mathsf{LR}} = \action_1 ) = 1 - \mathsf{TV} ( 
	\CBinstance_1^{\n} \| \CBinstance_2^{\n}). 
\end{align*}
Therefore it suffices to prove that 
\begin{align}\label{eq:TV-bound}
1 - \mathsf{TV} ( 
	\CBinstance_1^{\n} \| \CBinstance_2^{\n}) \leq e^{-\const \n}.
\end{align}

\paragraph{Proof of the bound~\eqref{eq:TV-bound}. } 

By the relation between the total variation distance and the Hellinger 
distance, we have
\begin{align*}
1 - \mathsf{TV} ( \CBinstance_1^{\n} \| \CBinstance_2^{\n} ) \leq 
1 - \mathsf{Hel}^2 ( \CBinstance_1^{\n} \| \CBinstance_2^{\n} ) = 
\insquare{ 1 - \mathsf{Hel}^2 ( \CBinstance_1 \| \CBinstance_2 ) }^{\n},
\end{align*}
where the inequality arises from the property of the Hellinger distance. 
By constructions of $\CBinstance_1, \CBinstance_2$, and the definition 
of the Hellinger distance, one has 
\begin{align*}
\mathsf{Hel}^2 ( \CBinstance_1 \| \CBinstance_2 ) = \frac{1}{2} 
\left \{ \inparen{ \frac{1}{ \sqrt{2} } - \sqrt{ \probOne \BerOne} }^2 
+  \probOne (1 - \BerOne) + (1 - \probOne) \BerTwo 
+ \inparen{ \frac{1}{ \sqrt{2} } - \sqrt{ (1 - \probOne) (1 - \BerTwo)} }^2 \right \}.
\end{align*}
Note that 
\begin{align*}
\inf_{\probOne, \BerOne, \BerTwo: \BerTwo > \BerOne} 
\mathsf{Hel}^2 ( \CBinstance_1 \| \CBinstance_2 ) = 1 - \frac{1}{ \sqrt{2} }. 
\end{align*}
We can then combine the previous relations to finish the proof 
of the bound~\eqref{eq:TV-bound}. 
\end{proof}

\end{document}